\newtheorem{theorem}{Theorem}
\newtheorem{assumption}{Assumption}
\newtheorem{lemma}{Lemma}
\newtheorem{definition}{Definition}
\newcommand{\psisim}{\ensuremath{\psi}-\text{similarity}\xspace}
\def\eqref#1{equation~\ref{#1}}
\def\1{\bm{1}}
\DeclareMathAlphabet{\mathsfit}{\encodingdefault}{\sfdefault}{m}{sl}
\SetMathAlphabet{\mathsfit}{bold}{\encodingdefault}{\sfdefault}{bx}{n}
\def\gB{{\mathcal{B}}}
\newcommand{\E}{\mathbb{E}}
\definecolor{mypurple}{HTML}{7A4988}
\title{Demystifying the Mechanisms Behind \\ Emergent Exploration in Goal-conditioned RL}
\author{
\textbf{Mahsa Bastankhah}$^{1*}$ \quad 
\textbf{Grace Liu}$^{2*}$ \\[0.25em]
\textbf{Dilip Arumugam}$^1$ \quad 
\textbf{Thomas L. Griffiths}$^1$ \quad 
\textbf{Benjamin Eysenbach}$^1$ \\[0.5em]
$^1$Princeton University \quad 
$^2$Carnegie Mellon University \\[0.25em]
\texttt{mb6458@princeton.edu, gliu2@andrew.cmu.edu} \\
\hfill \small $^*$Equal contribution
}
\begin{document}

\maketitle

\begin{abstract}
In this work, we take a first step toward elucidating the mechanisms behind emergent exploration in unsupervised reinforcement learning.  We study Single-Goal Contrastive Reinforcement Learning (SGCRL)~\citep{liu2025a}, a self-supervised algorithm capable of solving challenging long-horizon goal-reaching tasks without external rewards or curricula. We combine theoretical analysis of the algorithm's objective function with controlled experiments to understand what drives its exploration.
We show that SGCRL maximizes implicit rewards shaped by its learned representations. These representations automatically modify the reward landscape to promote exploration before reaching the goal and exploitation thereafter. Our experiments also demonstrate that these exploration dynamics arise from learning low-rank representations of the state space rather than from neural network function approximation. Our improved understanding enables us to adapt SGCRL to perform safety-aware exploration.

 \textsc{Code}: \footnotesize \href{https://mahsa-bastankhah.github.io/demystifying-single-goal-exploration/}{\texttt{mahsa-bastankhah.github.io/demystifying-single-goal-exploration/}}
\end{abstract}

\section{Introduction}
Recent breakthroughs in deep reinforcement learning (RL) have revealed emergent behaviors: agents that develop skills without explicit rewards~\citep{liu2025a}, learn to plan without world models~\citep{bush2025interpreting,simmons2025deep}, and exhibit sophisticated exploration in open-ended environments~\citep{team2021open}.
For example, Single-Goal Contrastive Reinforcement Learning (SGCRL), learns several manipulation skills before receiving any rewards and without any explicit skill-learning objectives~\citep{liu2025a}. 
Subsequent work has demonstrated this phenomenon in robotic manipulation~\citep{liu2025a}, locomotion~\citep{bortkiewicz2025accelerating}, and multi-agent tasks~\citep{nimonkar2025selfsupervised} (see Fig.~\ref{fig:tasks}), but the underlying mechanism that drives this emergent exploration remains unknown.

\begin{wrapfigure}[13]{R}{0.4\linewidth} %
    \centering
    \vspace{-8pt} %
    \includegraphics[width=\linewidth,trim=0 0 0 0,clip]{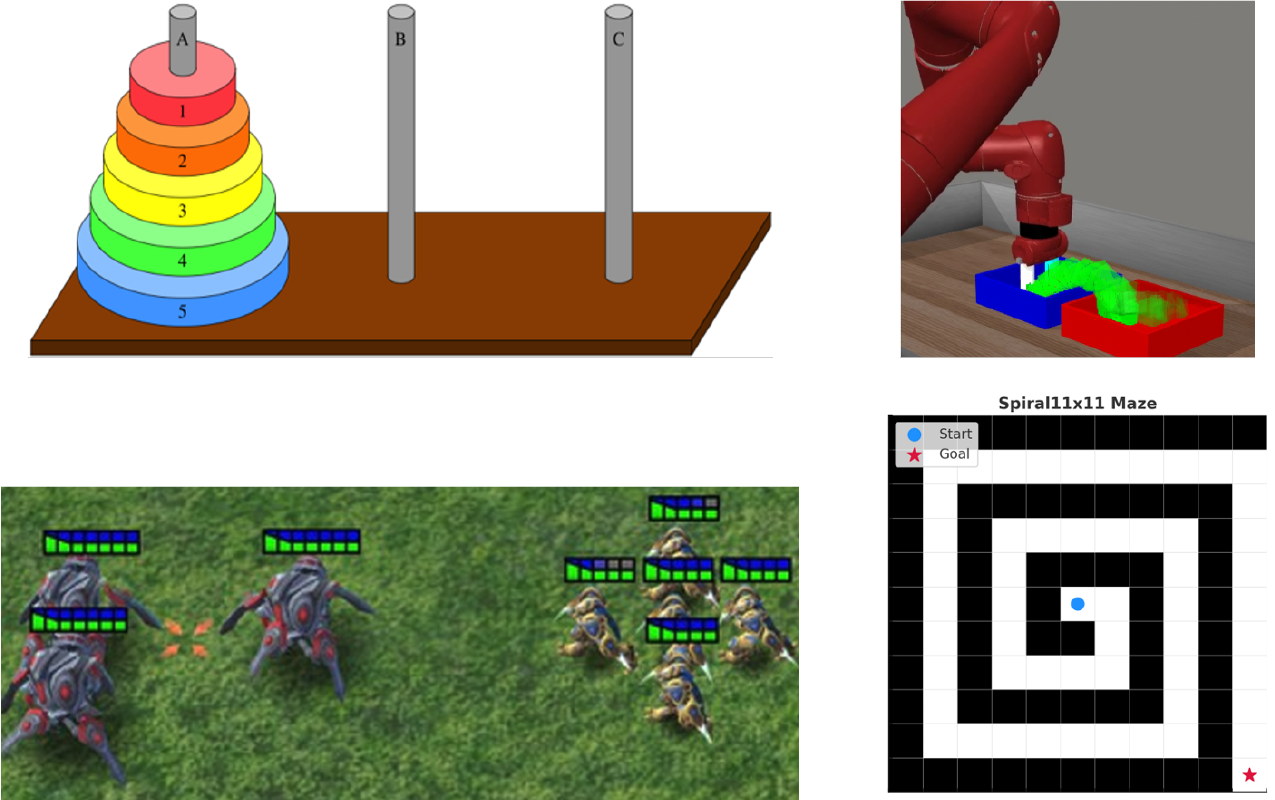}
    \caption{SGCRL exhibits emergent exploration on a range of tasks. But why?
    }
    \label{fig:tasks}
\end{wrapfigure}

Conventional wisdom suggests that emergent properties arise from using large models~\citep{bubeck2023sparks},  but even small neural networks learn feature hierarchies~\citep{krizhevsky2009learning}, and learning word representations that support analogical reasoning is a result of the choice of loss function (not architecture)~\citep{hashimoto2016word, arora2016latent}.
In the context of reinforcement learning, the extent to which emergent behaviors depend on neural network function approximation remains an open research question.
Without understanding the drivers of behavior, we cannot reliably predict when, how, or why exploration strategies will emerge, limiting our ability to use these models safely and reliably.

Methodologically, our goal of \textit{understanding} this phenomenon sits askew to the standard ML toolkit used to optimize performance on benchmark tasks.  We thus take inspiration from cognitive science, where researchers study intelligent behavior with a rich toolkit including rational analysis \citep{anderson1990adaptive}, intervention experiments \citep{bower1989experimental}, and cognitive modeling \citep{mcclelland2009place}. As a case study in how methods from cognitive science can be used to study the properties of AI models, we adapt these methods to understand emergent exploration in SGCRL. Specifically, we (1) theoretically analyze the optimization objective to uncover the implicit drivers of agent behavior, (2) conduct controlled intervention experiments on these behavioral drivers, and (3) build a simple model of the exploration mechanism in a tabular setting.

The main contribution of this paper is to show that it is possible to gain insight into the emergent behavior of even relatively complex learning algorithms. We do so by answering a specific question:  \textit{Why does the SGCRL algorithm explore effectively in the absence of any obvious intrinsic or extrinsic rewards?} We show that exploration is driven by the interplay between the actor and the critic. Although the algorithm is trained without external rewards, the actor's objective can be reinterpreted as \textbf{maximizing (implicit) rewards that drive the agent to states that look like the goal}, according to the agent's current representations. The critic shapes this implicit reward landscape by decreasing the representational goal similarity of states along unsuccessful trajectories, pruning them from future exploration. Surprisingly, a simplified tabular model of SGCRL reveals that these \textbf{exploration dynamics arise from learning low-rank representations, rather than from neural network function approximation}. Finally, our analysis provides insight into how to \textbf{adapt SGCRL to perform safety-award exploration}. Our success in analyzing the emergent behavior of this particular algorithm suggests that this approach can be applied productively to other settings where AI systems produce complex and unexpected behaviors.

\section{Related Work}

A growing line of work has discovered emergent behaviors in deep RL: agents demonstrate increasingly sophisticated skills without explicit programming~\citep{liu2025a,bush2025interpreting,simmons2025deep,team2021open}.
Existing approaches to understanding deep RL agents aim to improve algorithm transparency or explain behavior post-hoc~\citep{heuillet2021explainability,glanois2024survey}. Transparency-based methods include learning low-dimensional, meaningful representations of the state space~\citep{lesort2018state,lesort2019deep,raffin2019decoupling}, labeling actions based on targeted reward components~\citep{juozapaitis2019explainable}, and maintaining subgoals with hierarchical RL~\citep{beyret2019dot, cideron2020higher}. 
Although transparency-based methods improve algorithmic understanding, they require modified system architectures or auxiliary training tasks~\citep{beyret2019dot,juozapaitis2019explainable}. In contrast, post-hoc methods take the RL algorithm as a black-box, and attempt to distill the representations or predictions into interpretable trees~\citep{bewley2021tripletree, coppens2019distilling} or maps~\citep{greydanus2018visualizing, zahavy2016graying}. Our work differs from prior work in trying to explain behavior from an algorithmic perspective without the overhead of auxiliary training tasks.

Following a recent trend~\citep{hamrick2020levels,binz2023using,frank2023baby,ivanova2025evaluate,ku2025using}, our methodology is inspired by tools in cognitive science
for understanding intelligent behavior, including rational analysis, intervention experiments, and small-scale models. Rational analysis explains the behavior of agents by considering the optimal solutions to the problems that they face \citep{anderson1990adaptive}. In the context of AI models, this means determining the consequences of maximizing their objective \citep{mccoy2024embers}. Intervention experiments are used to test theories about the causal mechanisms underlying behavior~\citep{bower1989experimental}.  Building simple cognitive models is a tool that can be used to test whether a small set of principles is sufficient to reproduce that behavior~\citep{mcclelland2009place}. Accordingly, we draw on rational analysis and controlled interventions to uncover the implicit objectives driving SGCRL's exploration dynamics and use a simplified computational model of the algorithm to understand how these dynamics arise.

\section{Preliminaries}\label{sec:prelim}

\paragraph{Problem Setup.} We formulate a sequential decision-making problem as a Markov decision process~\citep{bellman1957markovian,Puterman94} without an explicit reward function. The state space is denoted by $\mathcal{S}$ with the initial state is sampled $s_0 \sim p_0$ and subsequence states are sampled $s_{t+1} \sim p(\cdot \mid s_t, a_t)$. The agent is given a single target goal $g \in \mathcal{S}$, representing the desired state to be reached. The agent must learn a goal-conditioned policy $a_t \sim \pi(\cdot \mid s_t, g)$.

\paragraph{Goal-conditioned RL.}
For any policy $\pi$, we define the $\gamma$-discounted occupancy measure~\citep{sutton1999policy} $
p_{\gamma}^\pi(s_f) \coloneqq (1-\gamma)\sum_{t=0}^\infty \gamma^t p_t^\pi(s_t = s_f)$, where $p_t^\pi(s_t = s_f)$ is the probability of being at state $s_f$ at timestep $t$. Following prior work~\citep{blier2021learning,chane2021goal,eysenbach2022contrastive}, the objective is to learn a policy that maximizes the probability of reaching the goal: $\max_\pi \; p_{\gamma}^\pi(g)$.
This task is challenging because the agent does not receive any feedback from the environment regarding its intermediate progress towards success.

\paragraph{Single-Goal Contrastive RL (SGCRL).}
We study SGCRL~\citep{liu2025a}, an actor--critic framework based on temporal contrastive learning.
In contrast to prior work~\citep{nasiriany2019planning,chane2021goal}, the exploration of SGCRL~\citep{liu2025a} is not guided by a human-designed curricula of tasks or manually specified reward functions.
The critic estimates the likelihood that a state--action pair $(s,a)$ leads to a future state $s_f$, and is parameterized as $\phi(s,a)^\top \psi(s_f)$, where $\phi(s,a)$ and $\psi(s_f)$ are learned embeddings. The critic embeddings are trained with a contrastive loss where, for each state-action pair, \((s_t,a_t)\), positive future states are drawn by looking \(\Delta \sim \mathrm{Geom}(1-\gamma)\) steps ahead; meanwhile, negative examples are sampled from the marginal distribution $p(s_f) \coloneqq \mathbb{E}_{p(s,a)}\left[p_\gamma(s_f \mid s, a)\right] $. In particular we use the backward InfoNCE loss~\citep{10.5555/3692070.3693574, liu2025a}:
\begin{equation}\label{eq:infonceloss-def}
\begin{aligned}
\max_{\phi, \psi} \,\mathbb{E}_{\substack{(s_i,a_i)\sim p_{\mathcal{D}}(s,a)\\ s_{f}^{(i)}\sim p_{\gamma}^\pi(\cdot\mid s_i,a_i)\\ i=1,\dots,N}}
\left[\frac{1}{N}\sum_{i=1}^N
\log\frac{\exp\!\big(\phi(s_i,a_i)^\top \psi(s_{f}^{(i)})\big)}
{\sum_{j=1}^N \exp\!\big(\phi(s_j,a_j)^\top \psi(s_{f}^{(i)})\big)}\right],\\[6pt]
\end{aligned}
\end{equation}
where $p_{\mathcal{D}}(s,a)$ denotes the empirical data distribution of the replay buffer. 
The contrastive objective aligns each state-action pair with its true positive future state while discouraging alignments with unrelated states. We normalize all representations by their $\ell_2$ norm. Once trained, the critic encodes a log-$Q$ value
\(
\phi(s,a)^\top \psi(s_f) \;=\; \log p_{\gamma}^{\pi}(s_f|s,a) - \log p(s_f),
\) \citep{eysenbach2022contrastive}.

The actor aims to select actions that maximize the likelihood of reaching the goal:
\begin{equation}\label{eq:actor-obj}
    \max_{\pi(a \mid s,g)} \;
    \E_{s \sim p(s), \; a \sim \pi(\cdot \mid s,g)}
    \Big[\phi(s,a)^\top \psi(g) + \tau H\big(\pi(\cdot \mid s,g)\big)\Big],
\end{equation}
where $\tau$ is an entropy-regularization coefficient~\citep{williams1991function}. 
In the discrete-action setting, the policy optimizing this objective samples actions from a softmax distribution:
\begin{equation}\label{eq:softmax-actor}
    \pi(a \mid s,g) = \frac{e^{\tfrac{1}{\tau}\,\phi(s,a)^\top \psi(g)}}{\sum_{a'} e^{\tfrac{1}{\tau}\,\phi(s,a')^\top \psi(g)}}.
\end{equation}
For continuous action spaces, we train a parameterized actor $\pi(\cdot \mid s,s_f)$, where $s_f$ denotes a target state that is not restricted to the final hard goal. %
The algorithm always collects data conditioned on a single goal $g$ and does not make use of rewards, demonstrations, or subgoals. For a complete description of the method, see Appendix~\ref{app:sgcrl-explanation}.

\section{SGCRL representations induce a curriculum of rewards}\label{sec:theory}

In this section, we theoretically characterize the dynamics that drive exploration in SGCRL and make testable hypotheses about its behavior, which we later verify in Section \ref{sec:experiments}. We posit that the actor maximizes a discounted sum of implicit rewards shaped by the critic. Even when the algorithm poorly estimates the probability of reaching the goal, the implicit reward function is well-defined and directs exploration (Fig.~\ref{fig:sgcrl-exploration-a}-left). As the actor optimizes this implicit reward signal, the critic dynamically reshapes the reward landscape. This implicit reward reflects the agent’s belief about the goal position, which updates over time as the agent gathers more data. Before the goal is found, the critic decreases the implicit reward for states along unsuccessful trajectories (Fig.~\ref{fig:sgcrl-exploration-a}-center). However, once the goal is reached, the critic instead increases rewards along the successful path (Fig.~\ref{fig:sgcrl-exploration-a}-right), shifting the agent’s behavior from exploration to exploitation. In this way, our analysis highlights a connection to a long line of work on reward shaping and the optimal design of reward functions that best facilitate efficient synthesis of good policy parameters~\citep{ackley1992interactions,ng1999policy,singh2009rewards,sorg2010reward,sorg2010internal,sorg2011optimal,devlin2012dynamic}. 

\begin{figure}[t]
    \centering
    \begin{subfigure}{0.54\textwidth}
        \centering
\includegraphics[width=\textwidth]{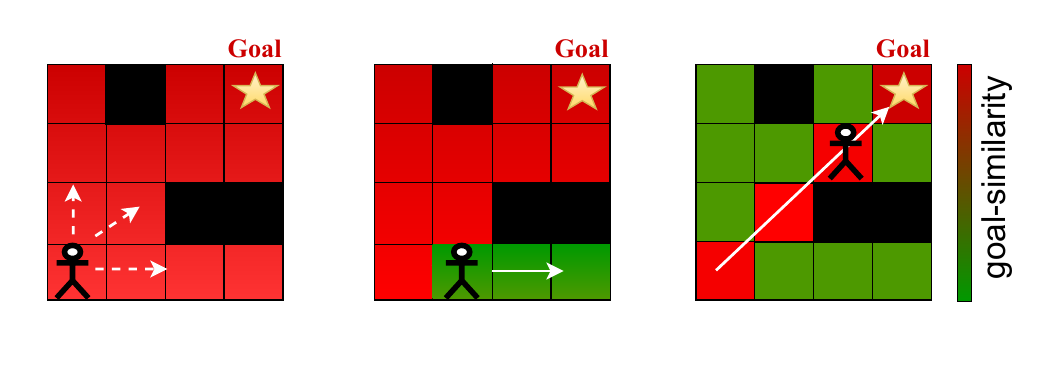}
        \caption{Single-goal exploration dynamics}
        \label{fig:sgcrl-exploration-a}
    \end{subfigure}
    \hfill
    \begin{subfigure}{0.45\textwidth}
        \centering
        \includegraphics[width=\textwidth]{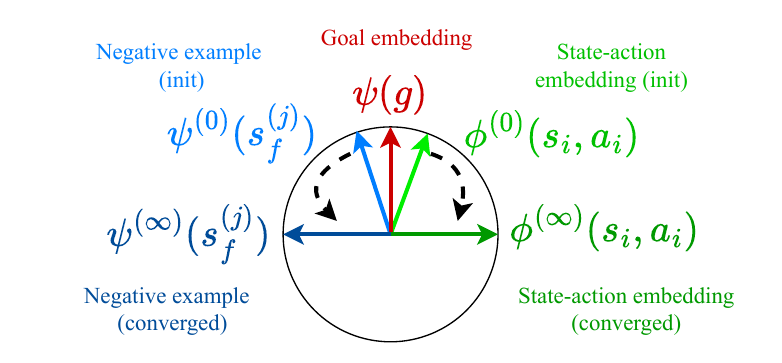}
        \caption{Representation geometry}
        \label{fig:sgcrl-exploration-b}
    \end{subfigure}
    \caption{\textbf{(a)} The agent moves toward goal-like states (left), rules out non-goal states by assigning them low (green) \psisim (center), and leaves a high (red) \psisim trace along the successful trajectory once the goal is found (right). Solid lines indicate past paths; dashed lines indicate new paths. \textbf{(b)} Negative representations $\phi(s_i,a_i)$ and $\psi(s_f^{(j)})$ are pushed away from the common component $\psi(g)$, enhancing their contrast.}
    \label{fig:sgcrl-exploration}
\end{figure}

Moreover, since our analysis revolves exclusively around the associated actor and critic objectives, it would suggest that the exploratory behavior of SGCRL does not depend on  function approximation.

Section~\ref{subsec:reward-maximizing} establishes the equivalence between SGCRL and a reward-maximizing agent that maximizes representation similarity. Section~\ref{subsec:exploration-before-finding-the-goal} shows how contrastive learning naturally reduces representation similarity in explored states, while Subsection~\ref{subsection:exploitation} discusses how representations evolve once the goal is reached. Formal theorem statements and proofs are provided in Appendix~\ref{app:theoretical-results}.

\subsection{The actor maximizes an implicit, representation-based reward}\label{subsec:reward-maximizing}
Our analysis reveals that, although the SGCRL objective is defined with respect to reaching $g$, it simultaneously drives the agent toward states $s_f$ that exhibit high representational similarity to the goal, quantified by the inner product between the state and goal representations, a metric we denote by \psisim$ \coloneqq \psi(s_f)^\top \psi(g)$.
 One interpretation of this similarity function is the agent's encoding of beliefs about the goal’s location in the \psisim metric, which is progressively refined through critic updates as more data are collected.
This intuitively resembles posterior sampling approaches, where the agent maintains a posterior over the underlying reward and transition functions~\citep{strens2000bayesian,osband2013more} to guide exploration via epistemic uncertainty~\citep{der2009aleatory}.

To obtain our main result in this section, we assume that optimizing the InfoNCE loss (Eq.~\ref{eq:infonceloss-def}) aligns the representations of positive examples in expectation. This is a natural assumption, consistent with prior work (Theorem~1, \citep{wang2020understanding}). For clarity, we state this as an assumption here and defer the rigorous explanation of when it exactly holds to Appendix~\ref{app:discussing-assumption}.

\begin{assumption}[Alignment of positive examples under InfoNCE]\label{assumption:infonce-fixedpoint}
When the InfoNCE loss (Eq.~\ref{eq:infonceloss-def}) is optimized, the representation of any state–action pair $(s,a)$ aligns with the expected representation of its future states:
\[
    \phi(s,a) \;=\; \mathbb{E}_{s_f \sim p_{\gamma}^\pi(s_f \mid s,a)} \big[ \psi(s_f) \big].
\]
\end{assumption}

\begin{theorem}\label{theorem:reward-shaping-q-learning}
Given Assumption~\ref{assumption:infonce-fixedpoint}, maximizing the likelihood of reaching the goal in the SGCRL actor objective (Eq.~\ref{eq:actor-obj}) is equivalent to maximizing the return
\[
\mathbb{E}_{\substack{s \sim p(s) \\ a \sim \pi(\cdot \mid s,g)}} 
\Big[\phi(s,a)^\top \psi(g) \big)\Big]
\;=\;
\mathbb{E}_{\substack{s \sim p(s) \\ a \sim \pi(\cdot \mid s,g)}} 
\Big[ Q^{\psi}(s,a)  \Big],
\]
where
\[
Q^\psi(s,a) \;\coloneqq\; \mathbb{E}_{p^\pi_t(s_t)}\!\left[\sum_{t=0}^{\infty} \gamma^t\, \psi(s_t)^\top\psi(g)\;\Big|\; s_0=s,\; a_0=a\right].
\]
\end{theorem}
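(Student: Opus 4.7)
The plan is to unfold $\phi(s,a)^\top\psi(g)$ via Assumption~\ref{assumption:infonce-fixedpoint}, then expand the $\gamma$-discounted occupancy measure defined in Section~\ref{sec:prelim} into a Bellman-style geometric sum, and finally recognize the result as $Q^\psi(s,a)$ up to the constant $(1-\gamma)$ that appears in the definition of $p_\gamma^\pi$.

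First I would apply Assumption~\ref{assumption:infonce-fixedpoint} and pull the deterministic vector $\psi(g)$ inside the expectation by linearity of the inner product, obtaining
\[
\phi(s,a)^\top \psi(g) \;=\; \mathbb{E}_{s_f\sim p_\gamma^\pi(\cdot\mid s,a)}\!\left[\psi(s_f)^\top \psi(g)\right].
\]
Next, I would substitute the definition $p_\gamma^\pi(s_f\mid s,a) = (1-\gamma)\sum_{t=0}^{\infty}\gamma^t\, p_t^\pi(s_t=s_f\mid s_0=s, a_0=a)$ and swap the expectation with the infinite sum. The swap is justified because the representations are $\ell_2$-normalized (so $|\psi(s_t)^\top\psi(g)|\le 1$) and $\gamma^t$ provides absolute convergence, so dominated convergence applies. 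This gives
\[
\phi(s,a)^\top \psi(g) \;=\; (1-\gamma)\sum_{t=0}^{\infty}\gamma^t\, \mathbb{E}_{p_t^\pi}\!\left[\psi(s_t)^\top\psi(g)\mid s_0=s, a_0=a\right] \;=\; (1-\gamma)\,Q^\psi(s,a).
\]
Taking the outer expectation over $s\sim p(s)$ and $a\sim\pi(\cdot\mid s,g)$ on both sides yields the claimed identity; the constant $(1-\gamma)>0$ is independent of the policy, so maximizing one side is equivalent to maximizing the other, establishing the reward-shaping interpretation that Theorem~\ref{theorem:reward-shaping-q-learning} asserts.

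I do not anticipate a genuinely hard step: the argument is a short linearity-plus-Fubini chain. The only care needed is in justifying the exchange of sum and expectation in the second step, which follows from boundedness of the $\ell_2$-normalized representations together with $\gamma<1$. The conceptual substance of the theorem lies not in the manipulation itself but in the reinterpretation it enables, namely that the one-step critic inner product $\phi(s,a)^\top\psi(g)$ is, under Assumption~\ref{assumption:infonce-fixedpoint}, exactly a discounted return with the \emph{implicit} per-step reward $r_\psi(s)\coloneqq \psi(s)^\top\psi(g)$. This is the object that subsequent sections (\ref{subsec:exploration-before-finding-the-goal} and \ref{subsection:exploitation}) treat as the reward landscape being reshaped by the critic.
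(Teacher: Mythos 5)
Your proof follows essentially the same route as the paper's: substitute Assumption~\ref{assumption:infonce-fixedpoint} into the actor objective, expand $p_\gamma^\pi(\cdot\mid s,a)$ as its $(1-\gamma)$-weighted discounted sum over $p_t^\pi$, exchange the sum and expectation, and recognize the result as the discounted return with implicit reward $\psi(s)^\top\psi(g)$. You are in fact slightly more careful than the paper, whose proof silently drops the $(1-\gamma)$ factor between successive lines; your explicit identity $\phi(s,a)^\top\psi(g) = (1-\gamma)\,Q^\psi(s,a)$, together with the observation that a positive policy-independent constant leaves the maximization unchanged, is the correct way to reconcile the theorem as stated.
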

Theorem~\ref{theorem:reward-shaping-q-learning} implies that, although SGCRL is an unsupervised algorithm and does not use any external reward from the environment, it implicitly maximizes an internal reward i.e., \psisim. This result suggests that when \psisim is well-structured and sufficiently dense, it can effectively drive exploration. For instance, initial high \psisim near the initial state provides the agent with a gradient to move away from the start; however, \psisim must decrease over time in regions without the goal, otherwise the agent risks becoming trapped.  
Moreover, Theorem~\ref{theorem:reward-shaping-q-learning} yields testable predictions, which we validate in Section~\ref{sec:experiments}. In particular, it predicts that the agent will be drawn toward regions of high \psisim and avoid regions of low \psisim, even when these regions are respectively far from or close to the goal (see Section~\ref{subsec:rq2} for the empirical validation).

\subsection{Representation Updates Advance the Implicit Reward Curriculum}

So far, we discussed that the agent is driven by the shaped reward defined through \psisim. In this section, we analyze the evolution of \psisim as the representations are updated. Specifically, in Section~\ref{subsec:exploration-before-finding-the-goal} we show that before the goal is discovered, the \psisim values of states that have already been explored gradually decrease, preventing the agent from revisiting them and thereby pruning the search space. In contrast, in Section~\ref{subsection:exploitation} we argue that once the goal is found, this process reverses: states along the path to the goal acquire higher \psisim after their representations are updated, which encourages the agent to consistently exploit that trajectory. 

While the use of intrinsic rewards that exhibit this exploration-exploitation behavior has a rich history—including novelty bonuses, episodic or count-based bonuses, and prediction-error bonuses \citep{mohamed2015variational,bellemare2016unifying,burda2018exploration,Raileanu2020RIDE,10.5555/3305890.3305968, 10.5555/3600270.3602998, zhang2021noveld}—SGCRL differs fundamentally from these approaches. In prior methods, intrinsic rewards are heuristically designed, manually added to the task reward, and tuned via hyperparameters to balance exploratory drive with task-directed performance and exploitation. By contrast, the intrinsic reward in SGCRL is not an external design choice but emerges directly from the actor objective; it is principled, requires no additional tuning, and comes with an intuitive theoretical guarantee: it corresponds exactly to maximizing the probability of reaching the goal under the agent’s current knowledge about candidate goal states. In contrast, while other principled mechanisms for incentivizing exploration exist!\citep{agarwal2020pc, agarwal2020flambe}, empirical demonstrations of their practicality and scalability remain limited compared to SGCRL.

\subsubsection{Exploration Before Goal Discovery} 
\label{subsec:exploration-before-finding-the-goal}

We analyze a simplified setting in which $\psi(g)$ is fixed and the agent explores a region of the environment that does not contain the goal; this assumption is realistic when the states are sufficiently far from the goal and the critic network is sufficiently expressive. In this case, the updates affect only the representations of states in that triplet, i.e., $(s,a,s_f)$, while leaving $\psi(g)$ unchanged. The following theorem shows that if states in a region initially exhibit consistently high \psisim (that is, if their representations share a common component parallel to the goal representation, in addition to independent state-specific noise) 
then repeated InfoNCE updates will reduce their similarity to $\psi(g)$ until they become orthogonal, rendering the region unattractive to the actor.  Intuitively, since the InfoNCE loss is invariant to shared components (i.e., adding a fixed vector to all representations) the component of the representations parallel to $\psi(g)$ does not help learn temporal differences. Because normalized representations have limited capacity, they suppress this redundant component to better use their representational budget to minimize the loss. See Figure~\ref{fig:sgcrl-exploration-b} for an intuitive illustration.

\begin{theorem}[Informal]\label{theorem:common-comp-dec}
Let $\mathcal{D} = \{(s_i,a_i,s_f^{(i)})\}_{i=1}^N$ denote the collected dataset, where each triplet consists of a state $s_i$, an action $a_i$, and the corresponding future state $s_f^{(i)}$ observed in the trajectory after taking $(s_i,a_i)$. Let $\psi(g) \in \mathbb{R}^d$ be a fixed high-dimensional unit vector representing the goal such that $g \notin \{s_f^{(i)}\}_{i=1}^N$. Consider the normalized anchor embeddings $\{\phi(s_i,a_i)\}_{i=1}^N$ and future embeddings $\{\psi(s_f^{(i)})\}_{i=1}^N$ that are initialized as follows:

\[
\phi^{(0)}(s_i,a_i) \;=\; c\,\psi(g) + \zeta_i, 
\qquad 
\psi^{(0)}(s_{f,i}) \;=\; c\,\psi(g) + \kappa_i,
\]
where $\zeta_i, \kappa_i$ are i.i.d isotropic Gaussian vectors and $c$ is a non-zero scalar. 
Suppose these embeddings are updated using the InfoNCE gradient descent update rule as specified in Appendix~\ref{app:Infonce-updates}, with a sufficiently large batch size $N$ and sufficiently small learning rate $\eta$. Then, with high probability over the random initialization, the system converges to an equilibrium that satisfies $\phi(s_i,a_i)^\top \psi(g) = \psi(s_{f}^{(i)})^\top \psi(g) = 0, \forall i$.
\end{theorem}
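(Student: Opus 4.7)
The plan is to exploit a structural decomposition of every embedding into components parallel and perpendicular to the fixed goal direction $\psi(g)$, and then show that the InfoNCE dynamics together with the unit-norm constraint systematically drain mass out of the parallel direction. Concretely, I would write
\[
\phi(s_i,a_i) \;=\; \alpha_i\,\psi(g) + \phi_i^\perp,
\qquad
\psi(s_f^{(i)}) \;=\; \beta_i\,\psi(g) + \psi_i^\perp,
\]
with $\phi_i^\perp,\psi_i^\perp \perp \psi(g)$, and track the scalars $\alpha_i,\beta_i$ under the spherical projected-gradient update specified in the appendix. At initialization, the Gaussian perturbations give $\alpha_i,\beta_i$ concentrated around $c$ (the common-component value) with fluctuations of order $1/\sqrt{d}$, and the orthogonal parts $\phi_i^\perp,\psi_i^\perp$ are isotropic in the $(d{-}1)$-dimensional subspace orthogonal to $\psi(g)$.

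The argument rests on two complementary observations. First, the backward InfoNCE loss is invariant under a uniform shift of all anchor embeddings, $\phi(s_j,a_j)\mapsto \phi(s_j,a_j)+\delta$: the extra term $\delta^{\top}\psi(s_f^{(i)})$ enters numerator and every denominator term of each softmax identically and cancels. Hence $\sum_j \nabla_{\phi_j}\mathcal{L}$ vanishes, and so does its projection on $\psi(g)$; the loss exerts no net pressure to maintain a positive common $\alpha$. Second, the unit-norm constraint forces $\alpha_i^2 + \|\phi_i^\perp\|^2 = 1$, so shrinking $\alpha_i$ frees capacity for the orthogonal component, which is precisely what carries the discriminative signal (the part of $\phi_i^{\top}\psi_j$ that actually varies with $j$). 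The sphere's Lagrange term therefore continually redistributes mass toward the orthogonal direction, while the loss gradient is indifferent to this redistribution.

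To formalize, I would compute the $\psi(g)$-component of the Riemannian gradient at each $\phi_k$, which splits into (i) a loss-gradient piece $-\frac{1}{N}[\beta_k-\sum_i \sigma_{ki}\beta_i]$ and (ii) a normalization correction $-(\nabla \mathcal{L}\cdot \phi_k)\,\alpha_k$ proportional to $\alpha_k$ itself. Under the near-symmetric initialization $\alpha_j,\beta_j\approx c$ combined with isotropic Gaussian noise, standard concentration for large $N$ forces piece (i) to be of size $O(1/\sqrt{N})$ uniformly in $k$, while the radial factor $\nabla \mathcal{L}\cdot \phi_k$ becomes strictly negative once one contrastive step raises the positive-pair inner product above the negatives. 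This yields a multiplicative contraction $\alpha_k\mapsto \alpha_k(1-\eta c_k)$ with $c_k>0$ uniform in $k$. An analogous computation for $\beta_k$, which lacks a shift invariance but inherits the cancellation from the symmetric initialization (since $\sum_j\sigma_{jk}=1$ implies $\sum_j\sigma_{jk}\alpha_j\approx \alpha_k$ when all $\alpha_j$ are nearly equal), drives each $\beta_k$ to zero by the same spherical-contraction mechanism. Passing to small $\eta$ gives the continuous-time flow on which these contractions integrate to exponential decay.

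The hard part is controlling the coupling between the orthogonal and parallel components along the entire trajectory: as $\alpha_k,\beta_k$ shrink, the orthogonal parts $\phi_i^\perp,\psi_i^\perp$ simultaneously rotate and grow, and the radial factor must remain negative uniformly in time rather than merely at initialization. In particular, one must rule out configurations in which the orthogonal components realign so as to resurrect a positive gradient on $\alpha_k$, and must maintain the approximate symmetry $\beta_j\approx \beta_k$ that was used to neutralize piece (i). Isotropic Gaussian initialization, the hypothesis $g\notin\{s_f^{(i)}\}$ (so $\psi(g)$ is never itself a term being updated), and concentration for sufficiently large $N$ together preserve the relevant bounds, but propagating this control uniformly over the whole trajectory is the technical core the appendix must carry out.
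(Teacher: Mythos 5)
Your decomposition and the two forces you identify are exactly the ones the paper's proof (Theorem~\ref{theorem:common-comp-dec-complete} in Appendix~\ref{app:common-comp-dec-proof}) runs on: the loss gradient exerts (essentially) no force along $\psi(g)$, while the post-update normalization divides by a norm that grows once positive pairs begin to align, so the parallel component decays multiplicatively. But your attempt leaves open precisely the step that the paper's argument is built to close, and the way you have set things up would make that step genuinely hard. You work with heterogeneous $\alpha_k,\beta_k$ that are only approximately equal, bound piece~(i) by concentration, and then concede that propagating ``approximate symmetry plus a uniformly negative radial factor'' over the whole trajectory is the technical core. Two concrete problems with that route: first, shift-invariance of the backward loss only gives $\sum_j \nabla_{\phi_j}\mathcal{L}=0$, i.e.\ zero \emph{net} force on the average of the $\alpha_j$; it says nothing per index, so it cannot by itself rule out some $\alpha_j$ growing while others shrink. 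Second, your piece-(i) bound is not an $O(1/\sqrt{N})$ batch-size effect: under the Gaussian initialization the spread of the $\beta_i$ is $O(1/\sqrt{d})$, and an unsigned per-step drift of that size, accumulated over the $\Theta(1/\eta)$ steps needed for the normalization contraction to act, is exactly the error term you would have to control --- and you give no mechanism for doing so.

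The paper's missing idea is to not fight the heterogeneity at all: it idealizes the high-dimensional Gaussian initialization as \emph{exactly} symmetric (equal parallel components, pairwise-orthogonal residuals) and then shows by induction that the dynamics preserve four exact invariants at every step $t$: all parallel components equal a common $c^{(t)}$; the positive-pair residual alignment $\alpha^{(t)}=\langle\zeta_i^t,\kappa_i^t\rangle$ is index-independent; all cross inner products $\lambda^{(t)}$ vanish; and all residual norms equal a common $r^{(t)}$. Under these invariants the softmax matrix takes only two values ($p$ on the diagonal, $q$ off it, with $p+(N-1)q=1$), so the gradient's $\psi(g)$-component is exactly zero --- not merely small --- for every index, and the first-order growth of the pre-normalization norm is exactly $2\eta(1-p)\alpha^{(t)}$. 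The conclusion then reduces to monotonicity of two scalars: $\alpha^{(t)}$ increases strictly until positive pairs are fully aligned, hence $r^{(t)}=r^{(t-1)}+2\eta(1-p)\alpha^{(t-1)}$ increases to its unit-norm ceiling $r^{(\infty)}=1$, which forces $c^{(\infty)}=0$. So your mechanism is the right one, but without the exact-symmetry induction the argument does not close; with it, the trajectory-control problem you flag as the hard part simply never arises.
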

Notably, the proof of this theorem relies only on a fixed-point analysis of the InfoNCE loss and does not require any neural network function approximation.

In practice, at initialization, all $\psi$ representations are nearly the same (no temporal structure has been learned), so $c$ is high for all states. As training progresses, states along unsuccessful trajectories move away from $\psi(g)$ in representation space. Consequently, Theorems~\ref{theorem:reward-shaping-q-learning} and~\ref{theorem:common-comp-dec} reveal a two-player dynamic: the actor seeks regions with high \psisim, while the critic reduces their similarity when the goal is absent. Refer to Appendix~\ref{app:theorem2-discussion-for-FA} for a discussion on how this mechanism enables efficient exploration even in continuous settings, where the set of states to be ruled out is infinite.

\subsubsection{Exploitation After Goal Discovery}
\label{subsection:exploitation}
Considering that contrastive learning aligns the representations of positive examples (See Assumption~\ref{assumption:infonce-fixedpoint}, Appendix~\ref{app:discussing-assumption}), the representations along a successful trajectory to the goal should align with the goal representation $\psi(g)$, since $g$ appears as a positive example for those states. This leaves a “trace’’ of high \psisim states that enables the agent to reliably rediscover the goal. While full alignment cannot be guaranteed theoretically, we find empirically that successful trajectories indeed form a high \psisim trace from the start state to the goal. As shown in Section~\ref{subsec:critic}, this trace consistently guides the agent to the goal, marking the transition from exploration to exploitation once the goal has been discovered.

\section{Experiments}\label{sec:experiments}
In this section, we present empirical evidence supporting our theoretical characterization of single-goal exploration (Section~\ref{sec:theory}) in both continuous and tabular settings. Through our experiments, we address the following research questions: 

\begin{description}[leftmargin=2em]
    \item[RQ1.] How do critic representations evolve during training to facilitate exploration and exploitation?
    \item[RQ2.] How do critic representations influence the actor's data collection strategy?
\end{description}

Subsection~\ref{subsec:critic} addresses RQ1 using both a simplified model of SGCRL in a tabular setting as well as standard SGCRL in the continuous setting. Subsection~\ref{subsec:rq2} addresses RQ2 in the continuous setting and motivates our findings with preliminary results on how to utilize goal-similarity to improve safety. 

\paragraph{Tasks.}
We study SGCRL on 2D point maze navigation tasks adapted from prior work~\citep{eysenbach2022contrastive, liu2025a} as well as the Tower of Hanoi goal-reaching task. The navigation tasks involve reaching a goal in various maze configurations including the classic Four Rooms domain~\citep{sutton1999between}, an L-shaped wall, and a spiral wall. The Tower of Hanoi task involves moving a stack of disks across three locations with the constraint that larger disks cannot be placed on smaller ones, and has been used extensively in studies of human problem-solving in cognitive science \citep{simon1975functional}. We do not use rewards or subgoals to solve any of the tasks, and success is determined by whether the goal state is reached at some point during an episode. All training metric curves are averaged over 8 random seeds. All shading denotes one standard error.

\textbf{Tabular SGCRL.} Previous implementations of SGCRL utilize neural network function approximation, raising the question of whether these behaviors are fundamental properties of the SGCRL algorithm or artifacts of neural network dynamics. To isolate the SGCRL exploration mechanism from neural network generalization properties, we designed a simplified computational model of the algorithm for the tabular FourRooms maze and Tower of Hanoi task. Each state $s$ has an embedding $\psi(s)$ stored in a lookup table and updated via the InfoNCE gradient rule (Appendix~\ref{app:Infonce-updates}). We assume the environment follows deterministic transition dynamics and, instead of learning $\phi(s,a)$, further assume access to the ground-truth dynamics $s_{t+1} = p(s_t, a_t)$ (this latter assumption can be relaxed by learning in a model-based fashion). The policy takes actions according to Equation~\ref{eq:softmax-actor}.
Following the assumptions for Theorem~\ref{theorem:common-comp-dec}, all representations are initialized as
\(
\psi(s)=\mathbf{x}+\boldsymbol{\varepsilon}(s),
\)
with a global Gaussian seed $\mathbf{x}$ shared across states and small, independent Gaussian noise $\boldsymbol{\varepsilon}(s)$ per state.

\subsection{RQ1. How do critic representations evolve to facilitate exploration and exploitation?}\label{subsec:critic} 

Our theory suggests that SGCRL automatically develops a curriculum of subgoals by updating representations such that unsuccessful paths become less appealing over time (Thm.~\ref{theorem:common-comp-dec}). To observe how representations change in a controlled setting, we conducted a series of experiments with both the simplified tabular SGCRL model and standard SGCRL algorithm, showing that \psisim decreases for states along unsuccessful trajectories and increases for states along successful trajectories. These results support the hypothesis that SGCRL benefits from a natural exploration curriculum that progressively pushes the agent toward unexplored regions. 

\paragraph{Distinct phases of representation updates emerge.} By running a simplified tabular version of SGCRL, we aim to test whether SGCRL's mechanism arise primarily from contrastive learning of low-rank representations rather than from generalization properties of neural networks. Our characterization of SGCRL's exploration mechanism prescribes that representation updates should be distinct before finding the goal compared to after finding the goal. That is, before reaching the goal, we would expect the \psisim of frequently visited states to decrease, and after finding the goal, we would expect the \psisim of frequently visited states to increase. To verify these dynamics, we ran tabular SGCRL in the Tower of Hanoi environment. We measured the correlation between state visitation and goal similarity before and after the agent reached the goal.

\begin{wrapfigure}[14]{r}{0.48\textwidth}
\vspace{-\baselineskip}
\centering
\includegraphics[width=\linewidth]{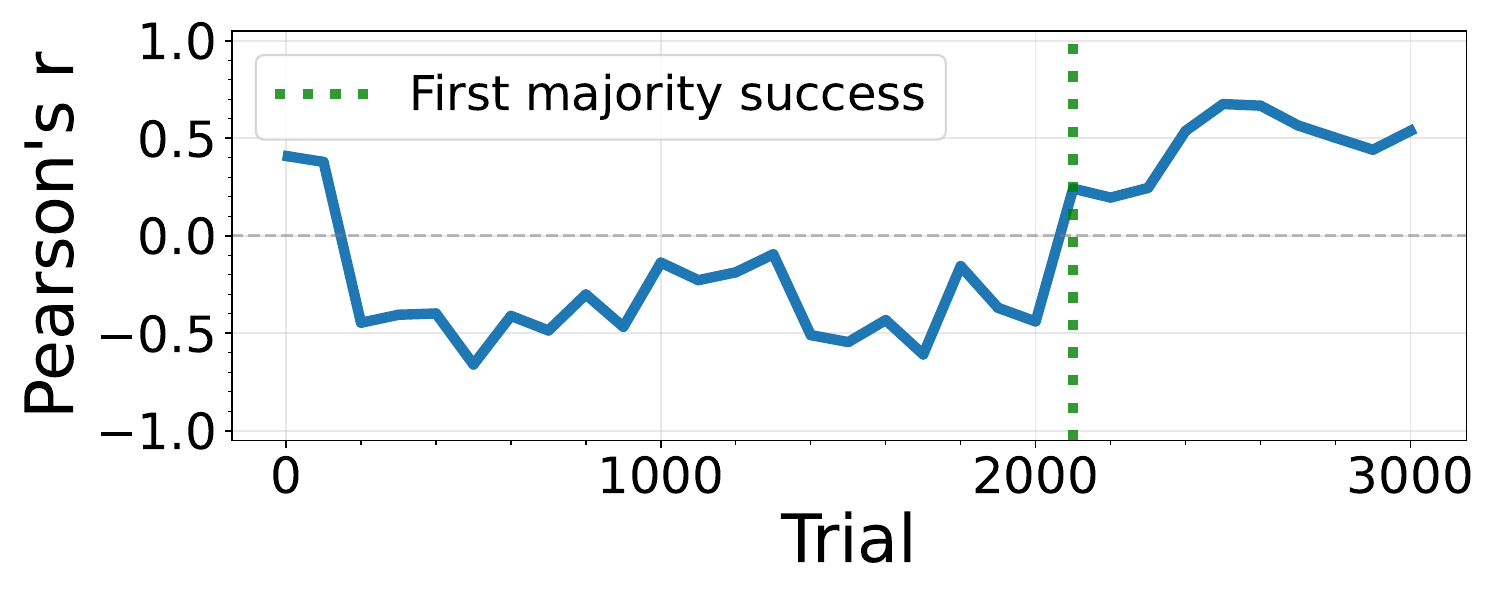}
\caption{Running SGCRL in a tabular Tower of Hanoi environment reveals distinct phases of learning. First majority success indicates the trial at which the majority of 5 evaluation runs succeeded. The 3 disk task is shown, but results generalize to 4/5 disks.}
\label{fig:tabular_env}
\end{wrapfigure}

We find that, even with the simplified tabular model, the agent demonstrates effective exploration and goal-reaching with distinct phases of behavior. Given uniform initialization of all states close to the goal, the correlation between state visitation and goal similarity starts high. As training progresses, the state visitation count and goal similarity become negatively correlated prior to reaching the goal and positively correlated after reaching the goal (see Fig.~\ref{fig:tabular_env}). We also performed an ablation study where we replaced the vectorized representations with a $|\mathcal{S}|\times|\mathcal{S}|$ lookup table of state-goal similarity and updated these scalar values with the same contrastive objective. In this setting, the agent fails to explore efficiently, requiring $\sim100$x more samples than tabular SGCRL. (Appendix~\ref{app:no_representations}). These results indicate that SGCRL's exploration dynamics arise from contrastive learning with low-rank representations rather than from neural network approximation or from the contrastive learning objective alone.

In Appendix~\ref{app:comparison}, we compare the evolution of \psisim in SGCRL with the value function of R-MAX~\citep{brafman2002r}, an optimism-based exploration method for the tabular setting that is provably-efficient~\citep{strehl2009reinforcement}. R-MAX and SGCRL share similar exploration dynamics: R-MAX starts with the belief that all states yield the maximum reward (equivalent to assigning high \psisim in SGCRL), and exploration progressively corrects these estimates as states are visited and their true rewards are revealed. We also refer the reader to Appendix~\ref{app:RND}, which shows that although the exploration dynamics of SGCRL could, in the very worst case, require searching the entire state space to reduce all \psisim values, this does not occur in practice. In continuous settings represented by neural networks, the algorithm avoids exhaustive search. Although our findings suggest that neural network generalization is not the primary reason for SGCRL's exploration mechanism, it nevertheless provides a useful inductive bias:  \psisim reductions for unsuccessful paths generalize to nearby states, improving exploration efficiency.

\textbf{Representations along unsuccessful trajectories become dissimilar to the goal.} \label{par:imaginary-goal}
To study RQ1 further, we simulate how representations evolve when goals are unreachable. Theorem~\ref{theorem:common-comp-dec} predicts that representations for states along unsuccessful trajectories should become orthogonal to the goal. To test this prediction, we ran an experiment in which we assigned the agent an imaginary goal $\psi(g) = \mathbf{z}$, with $\mathbf{z}$ sampled from a Gaussian distribution. We projected the learned representations into three dimensions using PCA (see Fig~\ref{fig:reprsntation-pca}), 
with $\mathbf{z}$ aligned to the vertical axis. 

We find that initially the representations cluster near the goal at the top of the unit sphere. Over time, they drift toward the ``equator'', collapsing into the subspace orthogonal to $\mathbf{z}$, while still clustering states from the same room together. The drift reflects the agent’s visitation pattern: states in the bottom-left room, visited first, are pulled away earliest, while states in the top-right room, reached only in later episodes, collapse last. This experiment highlights a key property of the SGCRL data collection strategy: Because the single-goal conditioned policy consistently drives the agent toward areas with higher \psisim (unvisited states), representations of previously visited states are continually pushed further from the goal. These results provide insight into RQ1 and highlight the effectiveness of single-goal exploration, even in settings without a real goal. Moreover, the same exploration dynamic also extends to the multi-goal setting. With a simple modification to the action-selection mechanism, SGCRL is able to reach a distribution of goals (Appendix~\ref{app:multi-goal}).

\begin{figure}[t]
  \centering
  \begin{subfigure}{0.7\textwidth}
    \includegraphics[width=\linewidth]{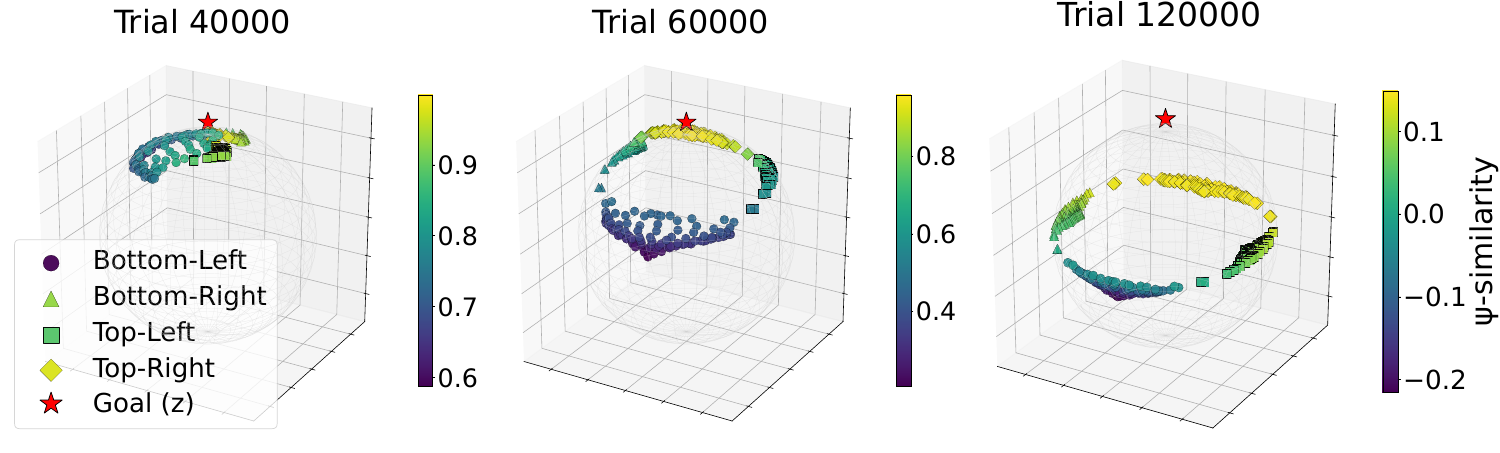}
    \caption{Representation evolution}
    \label{fig:goal4000}
  \end{subfigure}
  
  \begin{subfigure}{0.7\textwidth}
    \centering
    \includegraphics[width=\linewidth]{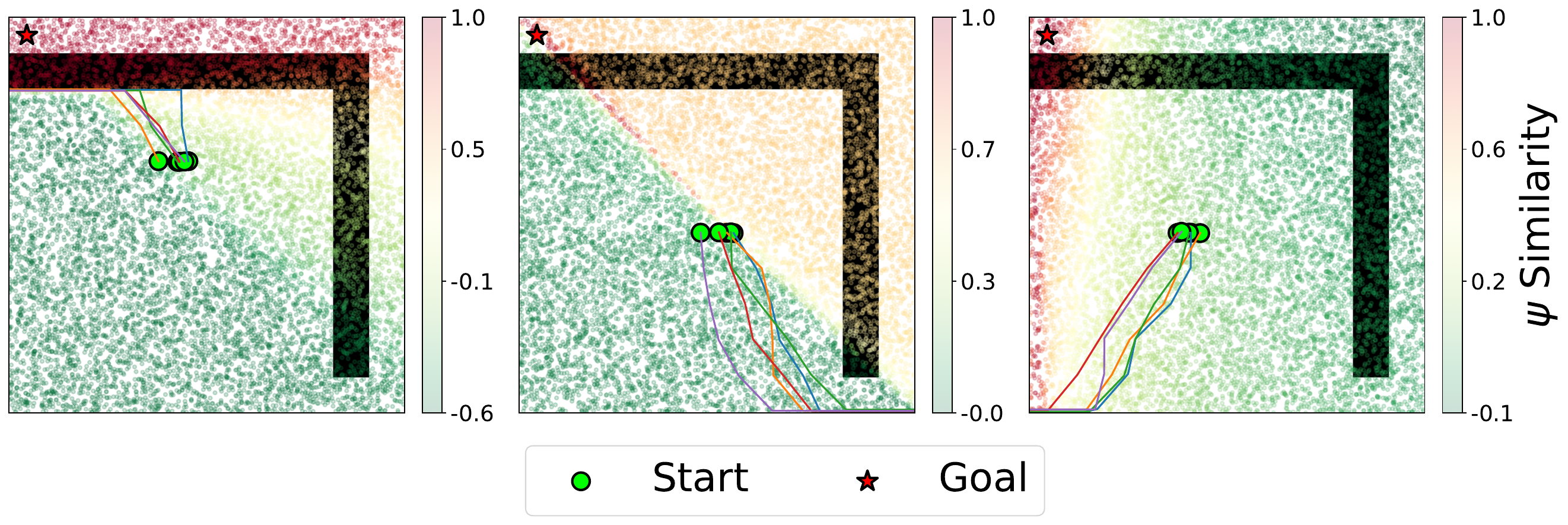}
    \caption{\psisim evolution}
    \label{fig:fixed_actor}
  \end{subfigure}
  
  \caption{\textbf{(a)} Initially all representations start close to $z$, later collapsing into a subspace orthogonal to $z$, while preserving local room-level structure. \textbf{(b)} \psisim decreases along traversed, unsuccessful paths.
  }
  \label{fig:reprsntation-pca}
\end{figure}

To test whether the same representation trend holds for the continuous setting, we conduct an experiment using standard SGCRL in which we fixed the data collection trajectories to always move in a particular cardinal direction. This experiment allows us to isolate the effect of unsuccessful visitation on representational similarity. We find that initially the \psisim of all states is high, but, over the course of training, states along the frequently traversed paths systematically become less similar to the goal (see Fig~\ref{fig:fixed_actor}). 

\textbf{SGCRL data collection structures representations strategically.} We also investigate whether the single-goal data collection strategy offers distinct advantages over other data collection strategies for the representation shaping described in the above experiment. For the FourRooms task, we compare single-goal data collection with an alternative strategy that samples goals from a uniform distribution. This strategy fails to push representations of frequently visited states far from the goal, preventing effective exploration (Appendix~\ref{app:sgcrl-optimal}). These results indicate that the SGCRL actor is not merely a consumer of well-formed representations, but also an active contributor. Through data collection, it shapes representations in a principled way that decreases \psisim for frequently visited states. 
 \subsection{RQ2. How do critic representations influence the actor's data collection strategy?}\label{subsec:rq2}
 
Next, we investigate how the dynamic representation updates characterized in Section~\ref{subsec:critic} influence the actor's data collection strategy. Our theory posits that goal-similarity provides an internal reward (Thm. ~\ref{theorem:reward-shaping-q-learning}) to direct agent behavior. In this vein, we conduct intervention experiments to study how goal-similarity influences SGCRL's visitation behavior, with applications to safety-aware exploration. 

 \paragraph{Agent targets states with high \psisim.} In our first experiment, we perturb the initial position of the agent at various training checkpoints to be closer to the goal. \begin{wrapfigure}[12]{r}{0.6\textwidth}
  \vspace{-0.5em}
  \centering
  \includegraphics[width=\linewidth, height=4.5cm, keepaspectratio]{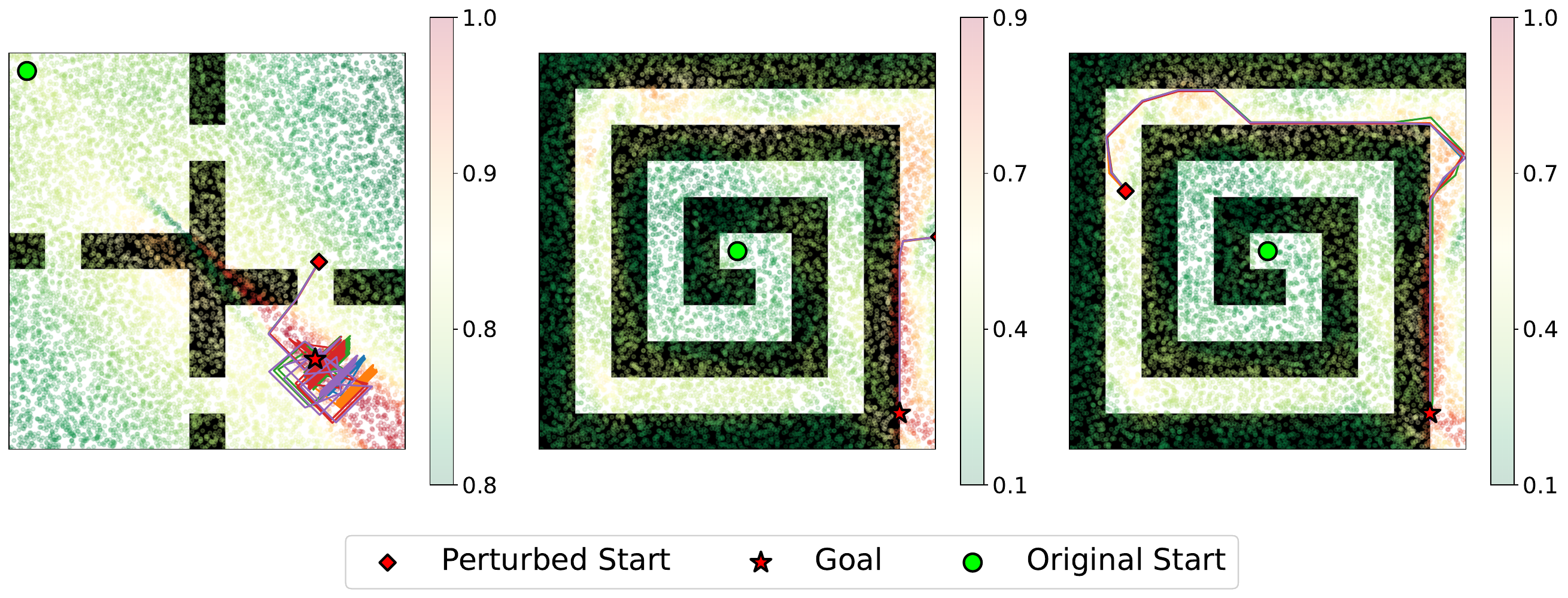}
  \caption{SGCRL targets states with high \psisim when initial sttate is perturbed.}
  \label{fig:perturbation_exps}
\end{wrapfigure} We observe whether the agent directly targets the goal state or target states that ``look like'' the goal (high \psisim). We find that the agent navigates toward the closest region with high representational goal similarity, even if it can reach the goal directly through a shorter path (see Fig.~\ref{fig:perturbation_exps}).
To test this behavior further, we conduct another intervention in which we fixed the representations of a patch in the top-right room of the FourRooms environment to match the goal embedding $\psi(g)$. As shown in Fig.~\ref{subfig:red_hole_exp} (top), the agent is strongly attracted to this patch, leading to a substantial increase in visitation of the top-right room compared to the control setup (Fig.~\ref{subfig:red_hole_exp}, bottom). Notably, even when the agent succeeds in reaching the true goal, it frequently detours into this patch along the way. These results yield an answer to RQ2, indicating that the agent is guided by an implicit reward signal based on representational similarity to the goal. This behavior emerges from the learning objective without any explicit programming to seek goal-like states.

\begin{wrapfigure}[18]{r}{0.5\textwidth}
  \vspace{-0.5em}
  \centering
  \begin{subfigure}[t]{0.48\linewidth}
    \centering
    \includegraphics[width=\linewidth, height=4.5cm, keepaspectratio]{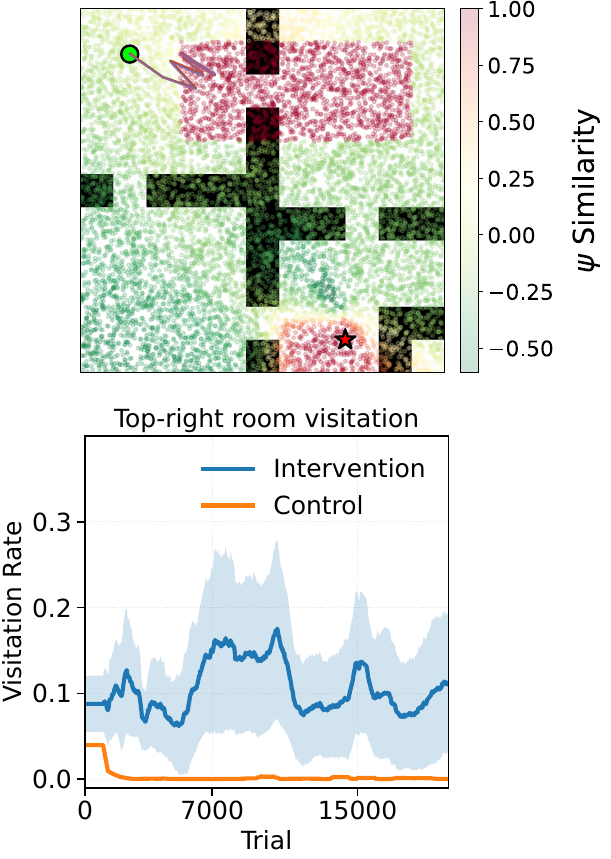}
    \caption{Interventions}
    \label{subfig:red_hole_exp}
  \end{subfigure}%
  \hspace{0.02\linewidth}
  \begin{subfigure}[t]{0.48\linewidth}
    \centering
    \includegraphics[width=\linewidth, height=4.5cm, keepaspectratio]{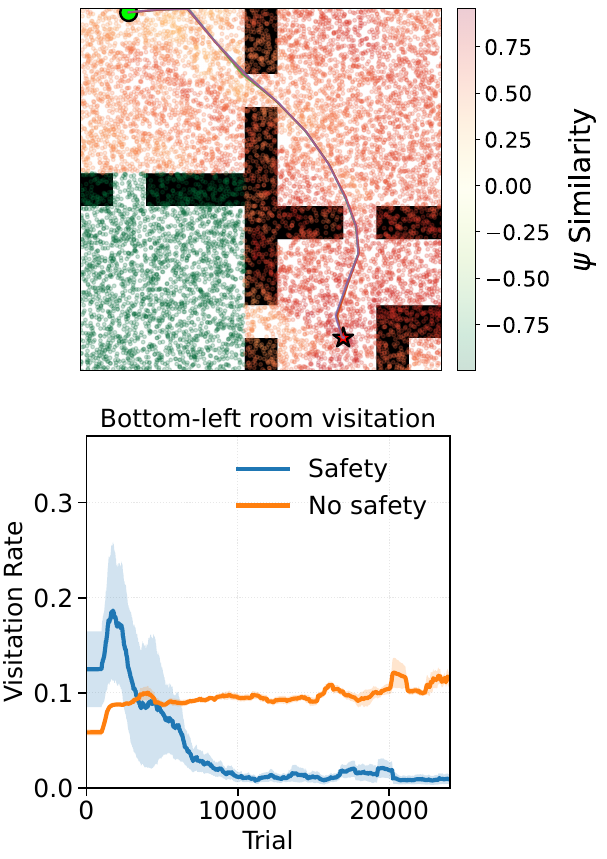}
    \caption{Safety}
    \label{subfig:safety}
  \end{subfigure}
  \caption{SGCRL targets states with high \psisim (a) and avoids states with low \psisim (b).}
  \label{fig:intervention_safety}
\end{wrapfigure}

\paragraph{Agent avoids states with low \psisim.} Based on our characterization of SGCRL's behavioral drivers (Sec.~\ref{sec:theory}), we predict that manipulating contrastive representations allows more fine-grained control of agent behavior during both training and deployment. We tested this prediction by setting the representations of the states in one of the rooms in the continuous FourRooms environment to be the negative of the goal representation ($\psi(s) = -\psi(g) ;\forall s \in \mathcal{R}$). We find that this intervention leads the agent to systematically avoid that region during both training (Figure~\ref{subfig:safety} -- top) and test time (Figure~\ref{subfig:safety} -- bottom). The agent successfully finds alternate paths to the goal while respecting the imposed constraints (see Appendix \ref{app:safety-more-figs} for more figures). These preliminary experiments show that understanding goal representation-driven exploration could lead to improvements in safety and control of goal-reaching tasks, enabling practitioners to guide agent behavior through representation design rather than explicit reward engineering~\citep{ibrahim2024comprehensive}.

\section{Conclusion}
Through theoretical analysis and controlled experiments, we have shown that SGCRL implicitly maximizes rewards based on representational goal-similarity, enabling effective exploration without explicit rewards. Our results indicate that these exploration dynamics arise from contrastive learning of low-rank representations rather than from function approximation with neural networks. These dynamics align with classic exploration algorithms, like R-MAX and PSRL, that refine a set of candidate desirable states during exploration. This characterization not only helps to explain the success of SGCRL in prior work, but also charts a path for how to retain the strong, appealing theoretical properties of R-MAX/PSRL in high-dimensional as well as long-horizon tasks.~\citep{liu2025a, eysenbach2022contrastive, zheng2023stabilizing}.

Beyond analyzing SGCRL specifically, our analysis provides a  case study for understanding emergent exploration through a lens of cognitive interpretability. We draw on methods in cognitive science to build a theoretical model of behavior and verify this model by conducting intervention experiments and modeling the algorithm in a simplified setting. The particular instantiation of this framework operationalized in this work successfully yields insight into the behavioral drivers of exploration, enabling us to better control these systems for safer deployment. We anticipate that a similar methodology can be used to gain deeper insight into a range of RL algorithms, potentially identifying ways in which those algorithms can be improved through the same implicit, contrastive reward-shaping process~\citep{asmuth2008potential}. 

\paragraph{Limitations.} 
Our empirical study, through its focus on SGCRL, exclusively focused on goal-reaching tasks and did not consider reward maximization more broadly. We analyzed the learning dynamics of SGCRL but have yet to establish formal theoretical guarantees on its sample efficiency. In future work, we aim to study whether SGCRL can achieve polynomial sample complexity in the tabular setting~\citep{kakade2003sample,strehl2009reinforcement} and extend the method empirically to a broader space of tasks beyond goal reaching. 

\paragraph{Reproducibility Statement.}
We will provide the source code for tabular SGCRL upon publication. The experiments using standard SGCRL use the codebase and default parameters given in~\citep{liu2025a}. The hyperparameters used for both tabular and standard SGCRL are given in Appendix~\ref{appendix:details}. The proofs of all theoretical results are provided in Appendix~\ref{app:theoretical-results}.

\paragraph{Ethics Statement.}
Our work investigates the exploration dynamics of a self-supervised reinforcement learning algorithm and therefore has no immediate ethical concerns. We also develop a variant that mitigates certain behaviors relevant to safety-critical applications. However, as with many advances in RL, similar techniques could, in principle, be misused to enhance adversarial behavior.

\paragraph{Acknowledgments.}
Thanks to the members of the Princeton RL lab for feedback on preliminary versions of the work.
DA and TG were supported by ONR MURI N00014-24-1-2748 and ONR grant N00014-23-1-2510.
GL acknowledges support by the National Science Foundation Graduate Research Fellowship under Grant No. DGE2140739.
BE and MB acknowledge support from the National Science Foundation under Award No. 2441665. Any opinions, findings and conclusions or recommendations expressed in this material are those of the author(s) and do not necessarily reflect the views of the National Science Foundation.

% {\small
% \bibliographystyle{iclr2026_conference}
% \bibliography{references.bib}
% }

\appendix
\clearpage
\onecolumn            %
\section*{Appendix}
\section{Single-goal Contrastive RL Algorithm}\label{app:sgcrl-explanation}
In this section, we provide more details about the SGCRL algorithm as presented in \citep{liu2025a}.

\begin{figure}[h]
\vspace{-2em}
\begin{algorithm}[H]
    \caption{\footnotesize \textbf{Single-goal Exploration with Contrastive RL.} The {\color{magenta}difference} from most prior methods is that exploration is done by commanding a single difficult goal $s^*$, rather than sampling goals with a range of difficulties. \label{alg:ours}}
    \begin{algorithmic}[1]
    \State Initialize policy $\pi_\theta(a \mid s, g)$, replay buffer $\gB$, classifier with logits $\phi(s, a)^T\psi(s_f)$.
    \While{not converged}
        \State Collect one trajectory of experience using $\pi(a \mid s, {\color{magenta}s_f = s^*})$, add to buffer $\gB$.
        \State Update representations $\phi(s, a), \psi(s_f)$ and policy $\pi(a \mid s, s_f)$ using contrastive RL.
    \EndWhile
    \State Return policy $\pi(a \mid s, g = s^*)$.
    \end{algorithmic}
\end{algorithm}
\vspace{-2em}
\end{figure}

SGCRL is a simple modification of contrastive RL~\citep{eysenbach2022contrastive}: rather than asking the human user to provide training subgoals for exploration, SGCRL always commands the policy to collect data with a single hard goal $s^*$. This single hard goal is chosen to be a semantically meaningful state corresponding to task completion. The actor and critic objectives are presented in Section~\ref{sec:prelim}.

\section{Theoretical results}\label{app:theoretical-results}
\subsection{Gradient descent updates for optimizing the InfoNCE objective}\label{app:Infonce-updates}
We derive the gradients of the backward InfoNCE loss with respect to the different representation parameters in order to characterize their update dynamics. Consider the batch $D=\{s_i,a_i,sf_i\}_N$ where $sf_i$ is the positive example for state action pair $s_i,a_i$.

\[ \mathcal{L}(D; \phi,\psi) = -\sum_i \log \frac{\exp\left(\phi(s_i,a_i)^\top\psi(sf_i)\right)}{\sum_k \exp\left(\phi(s_k,a_k)^\top\psi(sf_i)\right) } \]

\begin{equation}\label{eq:p-matrix}
    p_{ij} \coloneqq \frac{\exp\left(\phi(s_i)^\top\psi(sf_j)\right)}{\sum_k \exp\left(\phi(s_k,a_k)^\top\psi(sf_j)\right)}
\end{equation}

Note that $\sum_i p_{ij} = 1$ but $\sum_j p_{ij} \neq 1$

\begin{equation}\label{InfoNCE-grad-update-1}
\nabla_{\phi(s_i,a_i)} \mathcal{L} = -\sum_j (\delta_{i,j}-p_{ij}) \cdot \psi(sf_j)
\end{equation}

\begin{equation}\label{InfoNCE-grad-update-2}
\nabla_{\psi(sf_j)} \mathcal{L}= -\sum_i (\delta_{i,j}-p_{ij}) \cdot \phi(s_i,a_i)
\end{equation}

\begin{equation}\label{eq:infonce-update-phi}
    \phi^{(t)}(s_i,a_i) = \phi^{(t-1)}(s_i,a_i)  + \eta \sum_j (\delta_{i,j}-p_{ij}) \cdot \psi^{(t-1)}(sf_j)
\end{equation}

\begin{equation}\label{eq:infonce-update-psi}
    \psi^{(t)}(sf_j) = \psi^{(t-1)}(sf_j)  + \eta \sum_i (\delta_{i,j}-p_{ij}) \cdot \phi^{(t-1)}(s_i,a_i)
\end{equation}

where $\eta$ is the learning rate and $\delta_{i,j} \coloneqq \mathbf{1}[i = j]$. The update rules for the forward loss are analogous; the only difference is that, in the denominator of $p_{ij}$, the summation is taken over $\psi(sf_k)$.

When representations are meant to have unit norm, we perform an additional normalization step after every update.

\subsubsection{Convergence.}

\begin{definition}[Convergence]
The convergence of the system characterized by Equations~\ref{eq:infonce-update-phi} and \ref{eq:infonce-update-psi} is a configuration in which the updates cease to alter any of the representations.

\medskip
In the \emph{non-normalized} case, this corresponds to vanishing gradients:
\[
\nabla_{\phi(s_i,a_i)} L \;=\; \nabla_{\psi(sf_i)} L \;=\; 0 \quad \forall i.
\]

In the \emph{normalized} case, convergence arises either when the gradients vanish, or when they are parallel to the representations themselves, i.e.,
\[
\nabla_{\phi(s_i,a_i)} L \;=\; c'_i \,\phi(s_i,a_i), 
\qquad 
\nabla_{\psi(sf_i)} L \;=\; c_i \,\psi(sf_i), 
\quad \forall i,
\]
for some scalars $c_i, c'_i$. In this case, normalization preserves the representation since scaling by a constant leaves the direction unchanged:
\[
\frac{(c'_i+1)\phi(s_i,a_i)}{\|(c'_i+1)\phi(s_i,a_i)\|} \;=\; \phi(s_i,a_i),
\]
and similarly for $\psi(sf_i)$.
\end{definition}

\subsection{Supporting Assumption~\ref{assumption:infonce-fixedpoint}}\label{app:discussing-assumption}

Here we provide two illustrative cases that support Assumption~\ref{assumption:infonce-fixedpoint}. 

The first scenario considers the setting where representations are normalized, high-dimensional, and—at convergence—uniformly distributed on the unit sphere. The uniformity assumption is a well-established fact in the literature, with prior work showing that the optimum of contrastive losses indeed yields such distributions~\citep{wang2020understanding}.  

The second scenario addresses the case where the data trajectories used for training can be partitioned into disjoint trajectories. In this setting, we show that the $\phi$ and $\psi$ representations along each trajectory fully align—not only in expectation, but exactly—for all state–action pairs and future states sampled along that path. This provides an even stronger version of Assumption~\ref{assumption:infonce-fixedpoint}.  

We formalize the first scenario in Lemma~\ref{lemma:infonce-fixedpoint}, and then establish the second case in Lemma~\ref{lemma:ass1-uniformity}.

\subsubsection{Alignment of positive examples under disjoint trajectory assumption}
Consider the following data collection assumption: we collect data consisting of one anchor pair $(s_i,a_i)$ together with multiple future states $\{s_{f,i}^k\}_{k=1}^K$ for each trajectory. The trajectories are disjoint, meaning that no anchor pair $(s_i,a_i)$ or future state $s_{f,i}^k$ is shared across different trajectories. Under this assumption, we can state the following lemma (the proof of the lemma is easily extendable to cases where there are multiple anchor per trajectory):

\begin{lemma}[Positive examples have fully aligned representations]\label{lemma:infonce-fixedpoint}
Let $N \gg 1$ be the batch size and $d \gg 1$ be the representation dimension.  
Consider anchors $(s_i,a_i)$ and their corresponding positive states $sf_i$, $i=1,\dots,N$, with initializations
\[
\phi^0(s_i,a_i) = \zeta_i, 
\quad 
\psi^0(sf_i) = \kappa_i,
\]
where $\zeta_i^0, \kappa_i^0 \overset{\text{i.i.d.}}{\sim} \mathcal{N}\!\left(0, \tfrac{1}{d} I_d\right)$.
Suppose the representations are updated using gradient descent update rule  of either the backward or forward InfoNCE loss (with normalization), with update rules as characterized in Appendix~\ref{app:Infonce-updates}.  
Then, with high probability over the initialization, the following holds at equilibrium:
\[
\phi(s_i,a_i) = \psi(sf_i), \quad \forall i.
\]
Moreover, if each anchor $(s_i,a_i)$ has more than one positive example $\{sf_i^{k}\}_k$, then at equilibrium
\[
\phi(s_i,a_i) = \psi(sf_i^{k}), \quad \forall i,k,
\]
where $k$ indexes the positive examples.
\end{lemma}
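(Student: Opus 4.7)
The plan is to show that under the normalized InfoNCE updates, the dynamics admit a family of fixed points on which $\phi(s_i,a_i) = \psi(sf_i)$, and that the high-dimensional Gaussian initialization lands in the basin of attraction of such a fixed point with high probability. I would split the argument into three pieces: (i) a fixed-point characterization under the diagonal ansatz $\phi = \psi$, (ii) a convergence analysis from random initialization, and (iii) a symmetry argument handling the multi-positive extension.

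For the fixed-point step, I would write out the normalized equilibrium conditions that the gradient at each representation be parallel to that representation, i.e., $\sum_j (\delta_{ij} - p_{ij})\psi(sf_j) = c_i \phi(s_i,a_i)$ and the analogous condition for $\psi$. Under the disjointness assumption no future state $sf_i$ is a positive for any other anchor, so the only coupling between the pairs $(s_i,a_i) \leftrightarrow sf_i$ comes through the softmax denominator. I would then verify the diagonal ansatz $\phi(s_i,a_i) = \psi(sf_i) := u_i$: substituting into the fixed-point equations and exploiting the symmetry of the softmax under any configuration with equal pairwise inner products (e.g., the equiangular simplex with $u_i^\top u_j = -1/(N-1)$ for $i \neq j$), the $p_{ij}$ become a function only of whether $i = j$, and the cross-term $\sum_{j \neq i} p_{ij} u_j$ collapses to a scalar multiple of $u_i$, making the gradient collinear with $u_i$ as required. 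Gaussian concentration (e.g., Theorem 3.1.1 of \citep{vershynin2018high}) additionally implies that at initialization $|\langle \zeta_i, \kappa_j\rangle|$ and $|\langle \zeta_i, \zeta_j\rangle|$ are of order $\sqrt{\log N / d}$ uniformly in $i,j$ with high probability, so the starting point already sits in the near-symmetric regime around this equilibrium family.

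For the convergence step, I would use the InfoNCE loss $\mathcal{L}$ itself as a Lyapunov function: show that for sufficiently small learning rate $\eta$, the projected gradient step on the product of unit spheres strictly decreases $\mathcal{L}$ until every tangential gradient vanishes, and then invoke the fixed-point characterization to identify the limit. For the multi-positive extension I would note a permutation symmetry: the updates for $\{\psi(sf_i^k)\}_k$ depend on $k$ only through $\phi(s_i,a_i)$ and the softmax over anchors, so the positives of anchor $i$ are permutation-symmetric; any stable, permutation-invariant equilibrium forces them to coincide, and combined with the single-positive result each equals $\phi(s_i,a_i)$.

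The step I expect to be hardest is pinning down convergence to the correct fixed point. The loss landscape on the product of spheres is non-convex and admits spurious equilibria — for instance, degenerate configurations in which $\phi(s_i,a_i)$ and $\psi(sf_i)$ come to rest orthogonally, or lower-dimensional collapses where subsets of pairs share a representation. The load-bearing argument is therefore the claim that the near-orthogonal, near-isotropic Gaussian initialization — together with the disjointness assumption, which eliminates any cross-pair positive constraint — lies strictly inside the basin of attraction of the diagonal family. Making this quantitative, uniformly in $N$ and in the high-$d$ scaling, is the main technical bottleneck.
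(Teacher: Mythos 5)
There is a genuine gap, and it is precisely the step you flag as the ``main technical bottleneck'': you never supply the argument that the Gaussian initialization actually converges to the diagonal family $\phi(s_i,a_i)=\psi(sf_i)$ rather than to a spurious stationary point. Your fixed-point step (the simplex/diagonal ansatz is an equilibrium of the normalized dynamics) is fine, and your observation that the initialization is near-orthogonal with high probability is correct, but a Lyapunov argument based on monotone decrease of $\mathcal{L}$ only yields convergence to \emph{some} stationary configuration --- it cannot by itself exclude the degenerate equilibria you yourself list (orthogonal rest points, partial collapses). The paper closes exactly this hole, and it does so without any basin-of-attraction analysis: it runs an induction on the update rule (proof of Theorem~\ref{theorem:common-comp-dec-complete}, invoked with $c=0$) showing that the near-orthogonal, index-symmetric structure of the initialization is an \emph{invariant} of the dynamics --- cross-pair inner products $\lambda^{(t)}$ stay $0$ (to first order in $\etaLR$, using $q\approx 0$ for large $N$), and all within-pair inner products share a common value $\alpha^{(t)}$ --- and then shows $\alpha^{(t)}$ is strictly increasing until $\alpha/r=1$, i.e.\ until $\cos(\zeta_i^{(t)},\kappa_i^{(t)})=1$. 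Restricted to this symmetric manifold the dynamics are effectively scalar and monotone, so the only reachable equilibrium is full alignment; no classification of off-manifold stationary points is ever needed. That monotonicity computation is the actual content of the proof, not a technicality one can defer.

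Your multi-positive step has a second, related gap: permutation symmetry of the \emph{dynamics} does not imply the reached equilibrium is permutation-invariant. The two positives $\psi(sf_i^1),\psi(sf_i^2)$ of a common anchor have independent Gaussian initializations, and the update of $\psi(sf_i^k)$ depends on $\psi(sf_i^k)$ itself through the softmax column $p_{\cdot j}$, so their trajectories differ; symmetry only says that swapping their initial values swaps the trajectories. Asserting that the limit is a ``stable, permutation-invariant equilibrium'' assumes what must be proved. The paper instead tracks the additional scalar $\beta^{(t)}=\langle \mathbf{v}_{2i}^{(t)},\mathbf{v}_{2i+1}^{(t)}\rangle$ alongside $\alpha^{(t)}$ and shows (Equations~\ref{eq:multi-pos-beta}--\ref{eq:multi-pos-alpha-3}) that both increase monotonically to $1$, which is what forces the two positives to coincide with each other and with the anchor at equilibrium. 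If you want to salvage your outline, the repair is to replace both symmetry appeals with the paper's device: identify the inner products that are invariant or monotone under one gradient step from a symmetric configuration, and prove those properties by induction in $t$.
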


\begin{proof}

We refer the reader to the proof of Theorem~\ref{theorem:common-comp-dec-complete} for the first part of this proof. There, we analyze InfoNCE convergence under the assumption that each anchor $(s_i,a_i)$ has only a single future state $s_i^f$. In particular, Claim~2 of Theorem~\ref{theorem:common-comp-dec-complete} establishes the result in this case.

The more realistic setting, however, is when each anchor is associated with multiple positive examples. To handle this, we slightly adapt the proof of Theorem~\ref{theorem:common-comp-dec-complete} to again establish full alignment. We use the same notation as in that proof  and restrict attention to the case where each anchor has $K=2$ positive examples. The extension to any $K>1$ follows identically.  

Concretely, we assume a batch of size $N$, with anchor representations denoted $\{\mathbf{u}_i\}_{i=1}^N$ (corresponding to $\phi(s_i,a_i)$) and positive representations $\{\mathbf{v}_i\}_{i=1}^N$ (corresponding to $\psi(sf_i)$). Each anchor $\mathbf{u}_i$ is duplicated, i.e.,
\(
\mathbf{u}_{2i} = \mathbf{u}_{2i+1},
\)
while its two positive examples $\mathbf{v}_{2i}, \mathbf{v}_{2i+1}$ are independent. In other words, each $\mathbf{u}_{2i}$ has two distinct positive examples.  

We establish the following claims by induction, grouping indices $(2i,2i+1)$ into bundles:  

\begin{enumerate}[label=(\alph*)]
    \item $\alpha^{(t)} \coloneqq \langle \mathbf{v}^{(t)}_{2i}, \mathbf{u}^{(t)}_{2i}\rangle = \langle \mathbf{v}^{(t)}_{2i+1}, \mathbf{u}^{(t)}_{2i}\rangle $ does not depend on the choice of $i$, and $\alpha^{t} > 0, \forall t \geq 1$, and $\alpha^{(\infty)}=1$.
    \item $\beta^{(t)}\coloneqq\langle \mathbf{v}^{(t)}_{2i}, \mathbf{v}^{(t)}_{2i+1}\rangle $ does not depend on the choice of $i$, and $\beta^{t} > 0, \forall t \geq 2$, and $\beta^{(\infty)}=1$.
    \item $\lambda^{(t)} \coloneqq \langle \mathbf{v}^{(t)}_{i}, \mathbf{v}^{(t)}_{j}\rangle = \langle \mathbf{u}^{(t)}_{i}, \mathbf{v}^{(t)}_{j}\rangle = \langle \mathbf{u}^{(t)}_{i}, \mathbf{u}^{(t)}_{j}\rangle = 0$ whenever $\lfloor i/2 \rfloor \neq \lfloor j/2 \rfloor$; that is, cross-inner products are zero across bundles.
\end{enumerate}

These properties hold at initialization ($t=0$), since independent Gaussian vectors are almost surely orthogonal in high dimensions (see, for instance, Equation 3.14 of \citet{vershynin2018high}). We show that the update dynamics then preserve these index invariant properties
We simplify the probability matrix as defined in Equation~\ref{eq:p-matrix} using the induction assumptions at step \(t-1\):
\[
p \; \coloneqq 
\frac{\exp \langle \mathbf{u}_i^{(t-1)}, \mathbf{v}_j^{(t-1)} \rangle}
     {\sum_k \exp \langle \mathbf{u}_k^{(t-1)}, \mathbf{v}_j^{(t-1)} \rangle},
\quad \text{if } \lfloor i/2 \rfloor = \lfloor j/2 \rfloor,
\]
and
\[
q \;\coloneqq\;
\frac{\exp \langle \mathbf{u}_i^{(t-1)}, \mathbf{v}_j^{(t-1)} \rangle}
     {\sum_k \exp \langle \mathbf{u}_k^{(t-1)}, \mathbf{v}_j^{(t-1)} \rangle},
\quad \text{if } \lfloor i/2 \rfloor \neq \lfloor j/2 \rfloor.
\]

These satisfy the normalization condition
\(
2p + (N-2)q = 1
\) and, since $N$ is large, $q$ is small.

Now we write the GD update rule noting that each anchor $\mathbf{u}_i$ receives two gradient updates per iteration (corresponding to its two positives):
\begin{align*}
    \hat{\mathbf{u}}_{2i+1}^{(t)} &= \hat{\mathbf{u}}_{2i}^{(t)} \\
    &= \mathbf{u}_{2i}^{(t-1)} + \eta \left( (1 - 2p)\,\big(\mathbf{v}_{2i}^{(t-1)} + \mathbf{v}_{2i+1}^{(t-1)}\big) - 2\sum_{j \ne 2i, 2i+1} q\, \mathbf{v}_j^{(t-1)} \right), \\
    \mathbf{u}_{2i}^{(t)} &= \frac{\hat{\mathbf{u}}_{2i}^{(t)}}{\|\hat{\mathbf{u}}_{2i}^{(t)}\|}, \qquad
    \mathbf{u}_{2i+1}^{(t)} = \frac{\hat{\mathbf{u}}_{2i+1}^{(t)}}{\|\hat{\mathbf{u}}_{2i+1}^{(t)}\|},
\end{align*}

\begin{align*}
    \hat{\mathbf{v}}_{2i}^{(t)} &= \mathbf{v}_{2i}^{(t-1)} + \eta \left( (1 - 2p)\,\mathbf{u}_{2i}^{(t-1)}  - \sum_{j \ne 2i, 2i+1} q\, \mathbf{u}_j^{(t-1)} \right), \\
    \mathbf{v}_{2i}^{(t)} &= \frac{\hat{\mathbf{v}}_{2i}^{(t)}}{\|\hat{\mathbf{v}}_{2i}^{(t)}\|}.
\end{align*}
We first prove (c). For simplicity, we denote 
$\alpha^{(t-1)} = \alpha$ and $\beta^{(t-1)} = \beta$. We also note that by the induction hypothesis, the norms $\|\hat{\mathbf{u}}^{(t)}_i\|^2$ and $\|\hat{\mathbf{v}}^{(t)}_i\|^2$ are independent of the index $i$ and we denote these norms at time $t-1$ by $r_u$ , $r_v$. We write the cross inner product and simplify it using the fact that $\lambda^{(t-1)}=0, q\approx 0$ and $\eta$ is small and representations at time $t-1$ are unit norm.

\begin{align*}
    \langle \mathbf{u}_{i}^{(t)}, \mathbf{v}_{j}^{(t)} \rangle &=    \frac{\langle\hat{\mathbf{u}}_{i}^{(t)}, \hat{\mathbf{v}}_{j}^{(t)} \rangle}{\|\hat{\mathbf{u}}^{(t)}_i\| \|\hat{\mathbf{v}}^{(t)}_i\|} \\
    &= \frac{1}{\sqrt{r_u \, r_v} } 
    \Big[ -\eta q \big(3 + \beta\big) + \mathcal{O}(\eta^2)\Big] 
    \;\;\underset{q \approx 0}{\approx}\; 0,
    \quad \text{whenever } \lfloor i/2 \rfloor \neq \lfloor j/2 \rfloor.
\end{align*}

The proofs for the cross inner products 
\(\langle \mathbf{v}_{i}^{(t)}, \mathbf{v}_{j}^{(t)} \rangle\) 
and 
\(\langle \mathbf{u}_{i}^{(t)}, \mathbf{u}_{j}^{(t)} \rangle\) 
when \(\lfloor i/2 \rfloor \neq \lfloor j/2 \rfloor\) 
are entirely analogous to the argument above. Note that the invariance to the index, carry from timestep $t-1$ to $t$. Now we analyze the evolution of $\beta^{(t)}$ and $\alpha^{(t)}$, which similarly remain invariant to the choice of index at time $t$, if being invariant to index at time $t-1$.

\begin{align}
    r_u^{(t)}
    &\coloneqq \left\|  \mathbf{u}_{2i}^{(t-1)} 
    + \eta \left( (1 - 2p)\big(\mathbf{v}_{2i}^{(t-1)} + \mathbf{v}_{2i+1}^{(t-1)} \big) 
    - 2 \sum_{j \ne 2i, 2i+1} q \,\mathbf{v}_j^{(t-1)} \right) \right\|^2 \notag \\
    &= 1 + \mathcal{O}(\eta^2) + 4\eta(1-2p)\alpha,
\end{align}

\begin{align}
   r_v^{(t)} 
    &\coloneqq \left\| \mathbf{v}_{2i}^{(t-1)} 
    + \eta \left( (1 - 2p)\,\mathbf{u}_{2i}^{(t-1)} 
    - \sum_{j \ne 2i, 2i+1} q \,\mathbf{u}_j^{(t-1)} \right) \right\|^2 \notag \\
    &= 1 + \mathcal{O}(\eta^2) + 2\eta(1-2p)\alpha.
\end{align}

\begin{align}
\beta^{(t)} 
&= \langle \mathbf{v}_{2i}^{(t)}, \mathbf{v}_{2i+1}^{(t)} \rangle \notag\\
&= \frac{1}{r_v^{(t)}} \left( \beta+ \eta \cdot 2(1-2p)\alpha \right) \notag\\
&= \frac{\beta + 2\eta (1-2p)\alpha}{1 + 2\eta (1-2p)\alpha} \\
&= \beta \;+\; 
\frac{2\eta (1-2p)\alpha \big(1 - \beta\big)}{1 + 2\eta (1-2p)\alpha}.\label{eq:multi-pos-beta}
\end{align}

Using the induction assumption that $\alpha^{(t)} > 0$ for all $t \geq 1$, 
every gradient descent update increases $\beta^{(t)}$ starting at $t=2$.
At equilibrium, $\beta^{(t)}$ can no longer increase, 
which implies that
\(
\beta^{(\infty)} = 1.
\)

\begin{align}
\alpha^{(t)}
&= \left\langle \mathbf{u}_{2i}^{(t)}, \mathbf{v}_{2i}^{(t)} \right\rangle \notag\\
&= \frac{1}{\sqrt{r_u \, r_v}} \left( \alpha
    + \eta(1-2p)\big( 2 + \beta\big)
    + \mathcal{O}(\eta^2)\right) \notag\\
&\approx \frac{\alpha + \eta(1-2p)(2+\beta)}
        {\sqrt{1+4\eta(1-2p)\alpha}\,
         \sqrt{1+2\eta(1-2p)\alpha}} 
        \qquad\text{(by the given form of }r_v , r_u\text{)} \label{eq:multi-pos-alpha-1}\\
&\underset{\text{set }x\coloneqq\eta(1-2p)}{=}
  \frac{ \alpha + x(2+\beta)}{\sqrt{1+4x\alpha}\,\sqrt{1+2x\alpha}} \notag\\
&\approx
  \frac{\alpha + x(2+\beta)}{(1+2x\alpha)\,(1+x\alpha)}\label{eq:multi-pos-alpha-2} \\
  &\underset{\eta \text{ small}}{\approx}
  \frac{\alpha + x(2+\beta)}{(1+3x\alpha)}\notag\\[6pt]
  &=
  \alpha + \frac{ x(2+\beta-3\alpha^2)}{(1+3x\alpha)}\label{eq:multi-pos-alpha-3}
\end{align}

Where Equation~\ref{eq:multi-pos-alpha-1} follows from the approximation
\(\sqrt{1+cx} \approx 1 + \frac{1}{2}cx  \,\text{for small } x \, (\text{equivalently, small } \eta).\)

First, note that by Equation~\ref{eq:multi-pos-alpha-1} and since \(\beta \geq 0\) (including at initialization), we have \(\alpha^{(t)} > 0\) for all \(t \geq 1\). This also completes the induction step to show that \(\beta^{(t)} > 0\) for all \(t \geq 2\), using the update equation for \(\beta\) (Equation~\ref{eq:multi-pos-beta}).

Moreover, as established earlier, at equilibrium we must have \(\beta = 1\). Substituting this into Equation~\ref{eq:multi-pos-alpha-3} further implies that \(\alpha = 1\) at equilibrium. Hence, we have successfully proved properties (a) and (b) too.

\end{proof}

\subsubsection{Support for Assumption~\ref{assumption:infonce-fixedpoint} via Uniform Representations}
Contrastive representations, when normalized, have been proven to converge to a uniform distribution on the unit sphere \citep{wang2020understanding}. Building on this result, we adopt the same assumption to provide additional theoretical support for Assumption~\ref{assumption:infonce-fixedpoint}. The specific form we prove here does not yield exact equality, $$\mathbb{E}_{p^{\pi}(s_f|s,a)}[\psi(s_f)]
    = \phi(s,a),$$
but rather a positive proportionality. But since the multiplicative factor is strictly positive, the argmax in the maximization problem of Theorem~\ref{theorem:reward-shaping-q-learning} remains unchanged and this result is still useful.

\begin{lemma}\label{lemma:ass1-uniformity}
Assuming both high dimensionality ($d \gg 1$) and that, at convergence, the contrastive representations $\psi(s)$ are uniformly distributed on the unit sphere, we have:
    \[
    \mathbb{E}_{p^{\pi}(s_f|s,a)}[\psi(s_f)]
    \;\approx\; \frac1d\exp\!\left(\tfrac{1}{2d}\right) \,\phi(s,a).
    \]
\end{lemma}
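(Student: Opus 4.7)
The plan is to use the converged critic identity to push the expectation onto the marginal, and then exploit the spherical uniformity assumption to reduce the problem to a one--dimensional integral that can be evaluated in closed form using a Gaussian approximation. Since the critic at optimality satisfies $\phi(s,a)^\top\psi(s_f)=\log p_\gamma^\pi(s_f\mid s,a)-\log p(s_f)$, I can rewrite the goal quantity as an expectation under the marginal:
\[
\mathbb{E}_{p^{\pi}(s_f\mid s,a)}[\psi(s_f)]
\;=\; \mathbb{E}_{p(s_f)}\!\bigl[\psi(s_f)\,\exp\!\bigl(\phi(s,a)^\top\psi(s_f)\bigr)\bigr].
\]
Under the lemma's assumption, when $s_f\sim p(s_f)$ the representation $u\coloneqq\psi(s_f)$ is uniform on $S^{d-1}$, so the task becomes computing $\mathbb{E}_{u\sim\mathrm{Unif}(S^{d-1})}[u\,e^{v^\top u}]$ where $v\coloneqq\phi(s,a)$ is a fixed unit vector (representations are $\ell_2$-normalized in SGCRL).

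Next I would use rotational symmetry: the group of rotations fixing $v$ acts transitively on the sphere in the hyperplane orthogonal to $v$, and the integrand transforms equivariantly, so
\[
\mathbb{E}[u\,e^{v^\top u}] \;=\; c\,v,
\qquad
c \;=\; v^\top\mathbb{E}[u\,e^{v^\top u}] \;=\; \mathbb{E}[T\,e^{T}],
\]
with $T\coloneqq v^\top u$. It is a standard fact that for $u$ uniform on $S^{d-1}$, the projection $T$ has density proportional to $(1-t^2)^{(d-3)/2}$ on $[-1,1]$, with $\mathbb{E}[T]=0$ and $\mathrm{Var}(T)=1/d$. For $d\gg 1$ one has the well-known concentration $T\approx\mathcal{N}(0,1/d)$, which is the approximation that drives the $\approx$ in the statement.

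Under this Gaussian approximation, the integral is immediate from the moment generating function of $\mathcal{N}(0,1/d)$: completing the square gives $\mathbb{E}[e^{\lambda T}]=e^{\lambda^{2}/(2d)}$, and differentiating in $\lambda$ at $\lambda=1$ yields
\[
c \;\approx\; \mathbb{E}_{T\sim\mathcal{N}(0,1/d)}[T\,e^{T}]
\;=\; \tfrac{1}{d}\,e^{1/(2d)},
\]
which multiplied by $v=\phi(s,a)$ gives exactly the stated expression. The main obstacle I anticipate is not the algebra but rather making the Gaussian approximation quantitatively sharp enough to justify the $\approx$: a clean route is to expand $e^{T}=1+T+T^{2}/2+\cdots$ and compute the moments of $T$ exactly using $\mathbb{E}[T^{2k}]=\prod_{j=0}^{k-1}\frac{2j+1}{d+2j}$, which matches the Gaussian moments up to $O(1/d^2)$ corrections; summing the series recovers $\tfrac{1}{d}e^{1/(2d)}$ with an explicit $O(1/d^{2})$ remainder. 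A secondary subtlety is that Assumption~\ref{assumption:infonce-fixedpoint} posits a distribution over training states rather than a continuum, so one should interpret uniformity as empirical uniformity of the sampled $\psi$'s on $S^{d-1}$, invoking the convergence of empirical averages to population integrals for large batches.
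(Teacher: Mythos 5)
Your proposal is correct and reaches exactly the stated constant. It also shares the paper's skeleton: both proofs first invoke the converged-critic identity $\phi(s,a)^\top\psi(s_f)=\log p_\gamma^\pi(s_f\mid s,a)-\log p(s_f)$ to turn the conditional expectation into a marginal expectation weighted by $\exp\!\big(\phi(s,a)^\top\psi(s_f)\big)$, and both then exploit the high-dimensional equivalence between the uniform spherical measure and an isotropic Gaussian. Where you diverge is in how the resulting integral is evaluated. The paper replaces the entire spherical law by $\mathcal{N}\!\big(0,\tfrac1d I_d\big)$, substitutes this density into the $d$-dimensional integral, and completes the square, reading off the answer as the mean of a shifted Gaussian times the factor $\exp\!\big(\tfrac{1}{2d}\|\phi(s,a)\|^2\big)$. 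You instead perform an exact symmetry reduction first — the expectation must be parallel to $v=\phi(s,a)$ by rotational equivariance — and only then approximate the one-dimensional law of the projection $T=v^\top u$ (whose exact density $\propto(1-t^2)^{(d-3)/2}$ you correctly identify) by $\mathcal{N}(0,1/d)$, finishing with a derivative of the moment generating function. Your route buys two things: the Gaussian approximation is confined to a scalar marginal whose moments you can compare exactly (giving the $O(1/d^2)$ error control you sketch), and you avoid the paper's mild abuse of treating the singular sphere measure as if it had a Lebesgue density on $\mathbb{R}^d$. The paper's route, in exchange, is a shorter one-shot computation and keeps $\|\phi(s,a)\|$ explicit until the final step. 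The two are quantitatively consistent: running your argument with general $\|v\|$ yields $\tfrac1d e^{\|v\|^2/(2d)}v$, matching the paper's intermediate expression before it sets $\|\phi(s,a)\|=1$.
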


\begin{proof}
We begin by noting that, in high dimensions, a uniform distribution on the unit sphere is equivalent to an isotropic Gaussian distribution $ \mathcal{N}(0, \frac1d I_d)$ (see, for instance, Equation 3.15 of \citet{vershynin2018high}):
\[
p(\psi) \;=\; \frac{1}{(\frac{2\pi}{d} )^{d/2}} 
\exp\!\left(-\frac{d\|\psi\|^2}{2}\right),
\]
where $d$ is the representation dimension.

From the InfoNCE objective (Equation~\ref{eq:infonceloss-def}), at convergence the representation satisfies
\[
\phi(s,a)^\top \psi(s_f) \;=\; \log \frac{p^{\pi}(s_f|s,a)}{p^{\pi}(s_f)}.
\]

Therefore, for the expectation we have
\begin{align*}
\mathbb{E}_{p^{\pi}(s_f|s,a)}[\psi(s_f)]
&= \int p^{\pi}(s_f|s,a)\,\psi(s_f)\,ds_f \\
&= \int p^{\pi}(s_f)\,\frac{p^{\pi}(s_f|s,a)}{p^{\pi}(s_f)} \,\psi(s_f)\,ds_f \\
&= \int p^{\pi}(s_f) \exp\!\big(\phi(s,a)^\top \psi(s_f)\big)\, \psi(s_f)\,ds_f.
\end{align*}

Now, switching variables to $\psi_f := \psi(s_f)$ and substituting the Gaussian density:
\begin{align*}
\mathbb{E}_{p^{\pi}(s_f|s,a)}[\psi(s_f)]
&= \int_{\mathbb{R}^d} \frac{1}{(\frac{2\pi}{d})^{d/2}} 
\exp\!\left(-\tfrac{d}{2}\|\psi_f\|^2\right) 
\exp\!\left(\phi(s,a)^\top \psi_f\right) \psi_f \, d\psi_f.
\end{align*}

Completing the square in the exponent:
\[
-\tfrac{d}{2}\|\psi_f\|^2 + \phi(s,a)^\top \psi_f
= -\tfrac{d}{2}\|\psi_f - \frac{\phi(s,a)}{d}\|^2 + \tfrac{1}{2d}\|\phi(s,a)\|^2.
\]

Thus,
\begin{align*}
\mathbb{E}_{p^{\pi}(s_f|s,a)}[\psi(s_f)]
&= \exp\!\left(\tfrac{1}{2d}\|\phi(s,a)\|^2\right) 
\int_{\mathbb{R}^d} \frac{1}{(\frac{2\pi}{d})^{d/2}} 
\exp\!\left(-\tfrac{d}{2}\|\psi_f - \frac{\phi(s,a)}{d}\|^2\right)\, \psi_f \, d\psi_f.
\end{align*}

The integral above is simply the expectation of a Gaussian random vector with mean $\frac{1}{d}\phi(s,a)$, which equals $\frac{1}{d}\phi(s,a)$. Therefore,
\[
\mathbb{E}_{p^{\pi}(s_f|s,a)}[\psi(s_f)]
= \exp\!\left(\tfrac{1}{2d}\|\phi(s,a)\|^2\right) \, \frac{1}{d}\phi(s,a).
\]

Finally, since in high dimensions the learned $\phi(s,a)$ is unit norm, we obtain
\[
\mathbb{E}_{p^{\pi}(s_f|s,a)}[\psi(s_f)]
\;\approx\; \frac1d\exp\!\left(\tfrac{1}{2d}\right) \,\phi(s,a).
\]

\end{proof}

\subsection{Proof of Theorem~\ref{theorem:reward-shaping-q-learning}}\label{app:reward-shaping-proof}
\begin{proof}
From Assumption~\ref{assumption:infonce-fixedpoint}:
\[
\phi(s,a) \;=\; \mathbb{E}_{p_{\gamma}^\pi(s_f \mid s,a)}[\psi(s_f)].
\]
Substituting this into the first term of the SGCRL actor objective (Equation~\ref{eq:actor-obj}), we obtain
\begin{align*}
    &\max_\pi \; \mathbb{E}_{s \sim p(s), \, a \sim \pi(a \mid s,g)} 
    \Bigl[\phi(s,a)^\top \psi(g)\Bigr]
    &= \max_\pi \; \mathbb{E}_{s \sim p(s), \, a \sim \pi(a \mid s,g)} 
    \Bigl[\mathbb{E}_{p_{\gamma}^\pi(s_f \mid s,a)}[\psi(s_f)]^\top \psi(g)\Bigr] \\[4pt]
\end{align*}

Expanding the discounted future state distribution yields
\begin{align*}
    &= \max_\pi \; \mathbb{E}_{s \sim p(s), \, a \sim \pi(a \mid s,g)}
    \Biggl[(1-\gamma) \sum_{s_f} \Bigl(\sum_{t=0}^{\infty} \gamma^t \, p^{\pi}_t(s_t = s_f \mid s,a)\Bigr)\,\psi(s_f)^\top \psi(g)
    \Biggr] \\[6pt]
    &= \max_\pi \; \mathbb{E}_{s \sim p(s), \, a \sim \pi(a \mid s,g)}
    \Biggl[\sum_{t=0}^{\infty} \gamma^t \, \mathbb{E}_{p^\pi_t(s_t)}[\psi(s_t)^\top \psi(g) \mid s_0 = s, a_0 = a]
    \Biggr] \\[6pt]
    &= \max_\pi \; \mathbb{E}_{s \sim p(s), \, a \sim \pi(a \mid s,g)}
    \left[\mathbb{E}_{\pi}\Biggl[\sum_{t=0}^{\infty} \gamma^t \, \psi(s_t)^\top \psi(g) \,\Bigm|\, s_0 = s, a_0 = a\Biggr]
   \right].
\end{align*}

This is exactly the reinforcement learning reward maximization objective with reward function
\[
r(s,a) \;=\; \psi(s)^\top \psi(g).
\]
Therefore, maximizing the SGCRL objective is equivalent to maximizing the Q-value induced by this reward function.
\end{proof}

\subsection{A stronger version of Theorem \ref{theorem:common-comp-dec} and proof}\label{app:common-comp-dec-proof}
We now analyze the equilibrium dynamics of the InfoNCE update rule in the following theorem. Claims~1 and~3 of this theorem directly imply the result stated in Theorem~\ref{theorem:common-comp-dec}. For simplicity of notation, we write $\mathbf{u}_i$ in place of $\phi(s_i,a_i)$ and $\mathbf{v}_i$ in place of $\psi(sf_i)$.

\begin{theorem}[InfoNCE representations at equilibrium]\label{theorem:common-comp-dec-complete}
Let $\mathbf{z} \in \mathbb{R}^d$ be a fixed unit vector, with $d \gg 1$. Let $\{\mathbf{u}_i\}_{i=1}^n$ and $\{\mathbf{v}_i\}_{i=1}^n \subset \mathbb{R}^d$ be anchor and future embeddings, initialized as:
\[
\mathbf{u}_i^0 = c\, \mathbf{z} + \zeta_i^0, \quad \mathbf{v}_i^0 = c\, \mathbf{z} + \kappa_i^0
\]
where $\zeta_i^0, \kappa_i^0 \overset{\text{i.i.d.}}{\sim} \mathcal{N}\left(0, \tfrac{1 - c^2}{d} I_d\right)$ and $c$ is a scalar. Suppose these vectors are updated via gradient descent on the backward (or forward) InfoNCE loss with batch size $N\gg 1$ and step size $\eta>0$, followed by unit-norm normalization. We assume $\eta$ is sufficiently small.

Then, with high probability over the initialization, the dynamics satisfy the following:

\begin{enumerate}
    \item At every step $t$, each representation decomposes as
    \[
    \mathbf{u}_i^t = c^t \mathbf{z} + \zeta_i^t, \quad \mathbf{v}_i^t = c^t \mathbf{z} + \kappa_i^t,
    \]
    where $\zeta_i^t, \kappa_i^t \perp \mathbf{z}$ and $c^t$ is the same for all $i$.
    \item At fixed point (i.e., when all the gradients are zero), $\mathbf{u}^{(\infty)}_i = \mathbf{v}^{(\infty)}_i,\; \forall i\; $, and $\langle \mathbf{u}^{(\infty)}_i , \mathbf{v}^{(\infty)}_j\rangle = 0,i \neq j$
    \item At fixed point: $c^{(\infty)} = 0$, i.e., all representations become orthogonal to $\mathbf{z}$ as $t \to \infty$.
\end{enumerate}
\end{theorem}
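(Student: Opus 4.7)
The plan is to prove the three claims sequentially, leveraging the shared $\mathbf{z}$-structure planted at initialization.

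\textbf{Claim 1 (Invariant decomposition).} I would argue by induction on $t$. The base case is immediate in high dimension $d\gg 1$: each isotropic Gaussian $\zeta_i^0,\kappa_i^0$ has, with high probability, an inner product of order $1/\sqrt{d}$ with the fixed unit vector $\mathbf z$, so one can absorb this tiny component into $c^0$ and regard $\zeta_i^0,\kappa_i^0$ as genuinely lying in $\mathbf z^\perp$. For the inductive step I decompose the updates from Appendix~\ref{app:Infonce-updates} along $\mathbf z$ and $\mathbf z^\perp$. The key identity is $\sum_i(\delta_{ij}-p_{ij})=0$ (rows of $\mathbf I - P$ sum to zero): it implies that the pre-normalization $\psi$-update leaves the $\mathbf z$-component of $\hat{\mathbf v}_j$ untouched, so $\hat c_{v,j}^{t+1}=c^t$. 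For the $\phi$-update the $\mathbf z$-component is rescaled by $1+\eta(1-\sum_j p_{ij})$, and by the symmetry of the inductive hypothesis together with softmax concentration at $N\gg 1$ this scalar is (approximately) the same across $i$. Dividing by the post-update norms (which agree across $i$ by the same symmetry) produces a single common $c^{t+1}$.

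\textbf{Claim 2 (Fixed-point alignment and cross-orthogonality).} This is essentially the content of Lemma~\ref{lemma:infonce-fixedpoint}, whose induction I would reuse. Maintain the invariants $\alpha^{(t)}\coloneqq\langle\mathbf u_i^{(t)},\mathbf v_i^{(t)}\rangle$ (index-independent by Claim~1) and the cross products $\langle\mathbf u_i^{(t)},\mathbf v_j^{(t)}\rangle$ for $i\ne j$. The cross products start at $0$ in high dimension and their update contribution scales with the off-diagonal softmax weight $q\approx 0$, so they remain at $0$; meanwhile the scalar recursion for $\alpha^{(t)}$ is strictly increasing until it saturates at $1$, forcing $\mathbf u_i^{(\infty)}=\mathbf v_i^{(\infty)}$.

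\textbf{Claim 3 (Orthogonality to $\mathbf z$).} Combining Claims~1 and 2 gives $\mathbf u_i^{(\infty)}=\mathbf v_i^{(\infty)}=c^{(\infty)}\mathbf z+\zeta_i^{(\infty)}$ with $\zeta_i^{(\infty)}\perp\mathbf z$, and $\langle\mathbf u_i,\mathbf v_j\rangle=0$ for $i\ne j$ yields $\langle\zeta_i,\zeta_j\rangle=-(c^{(\infty)})^2$. Hence the Gram matrix of $\{\zeta_i\}_{i=1}^N$ equals $I-(c^{(\infty)})^2 J$, whose smallest eigenvalue is $1-N(c^{(\infty)})^2$, so positive-semi-definiteness already forces $(c^{(\infty)})^2\le 1/N$. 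To upgrade this to equality I would complement the algebraic bound with a dynamical one: the identity $\sum_i(\delta_{ij}-p_{ij})=0$ implies $\hat c_{v,j}^{t+1}=c^t$ before normalization, while the orthogonal part $\hat\kappa_j^{t+1}=\kappa_j^t+\eta\sum_i(\delta_{ij}-p_{ij})\zeta_i^t$ is driven to grow in norm by the alignment component of the InfoNCE gradient, so $\|\hat{\mathbf v}_j^{t+1}\|>1$ whenever the orthogonal residuals are not yet fully aligned; normalization therefore strictly contracts $c^t$, and the monotone-bounded sequence can stabilize only at $c^{(\infty)}=0$.

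\textbf{Main obstacle.} I expect the hardest step to be the strict contraction inside Claim~3. The PSD argument only yields $(c^{(\infty)})^2\le 1/N$, which does not imply zero, so the dynamical bound must be carried out quantitatively: one has to control the alignment gradient within $\mathbf z^\perp$ that drives the orthogonal growth, and simultaneously control the concentration of the softmax weights $p_{ij}$ around $1/N$, since otherwise the positive-pair pull and the negative-pair push could cancel along the direction of $\mathbf v_j^t$. Making this contraction uniform for small-but-positive $c^t$, with only the small learning rate and the large-$N$ assumption to lean on, is the technical heart of the proof.
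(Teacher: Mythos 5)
Your proposal is correct and follows essentially the same route as the paper's proof: the same induction over symmetry invariants (a common $c^{(t)}$, an index-invariant alignment $\alpha^{(t)}$, vanishing cross-inner-products, equal residual norms), the same zero-sum identity for the update coefficients along $\mathbf{z}$ (which under the inductive symmetry $p+(N-1)q=1$ holds \emph{exactly} for both the $\phi$- and $\psi$-updates, so no softmax-concentration approximation is needed), and the same contraction mechanism for Claim 3, with your Gram-matrix bound $(c^{(\infty)})^2\le 1/N$ being an extra aside the paper does not need. The obstacle you flag is resolved in the paper by monotonicity rather than a delicate quantitative estimate: since $\alpha^{(1)}=2\eta(1-p)r>0$ and $\alpha^{(t)}$ is strictly increasing (with $p$ bounded away from $1$ by the unit-norm constraint), the per-step growth $r^{(t)}-r^{(t-1)}=2\eta(1-p)\alpha^{(t-1)}$ of the orthogonal energy is uniformly bounded below, so $r^{(t)}$ must saturate at its unit-norm ceiling $r=1$, which forces $c^{(\infty)}=0$.
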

\begin{proof}

We establish the above results, together with three additional claims regarding the InfoNCE update dynamics at equilibrium, via induction.  
At iteration $t$, we decompose each representation as
\[
\mathbf{u}_i^t = c^{(t)}\,\mathbf{z} + \zeta_i^t,
\qquad
\mathbf{v}_i^t = c^{(t)}\,\mathbf{z} + \kappa_i^t,
\]
where the residuals $\zeta_i^t$ and $\kappa_i^t$ are orthogonal to the unit vector $\mathbf{z}$.  
We define the following quantities and note that they are invariant with respect to the choice of index $i$ (or $i\neq j$ where applicable):

\begin{enumerate}[label=(\alph*)]
    \item 
    $\langle \mathbf{u}_i^t, \mathbf{z} \rangle = \langle \mathbf{v}_i^t, \mathbf{z} \rangle = c^{(t)}$ for all $i$.
    \item
    $\alpha^{(t)} := \langle \zeta_i^t, \kappa_i^t \rangle$.
    \item 
    $\lambda^{(t)} := \langle \zeta_i^t, \kappa_j^t \rangle = \langle \zeta_i^t, \zeta_j^t \rangle = \langle \kappa_i^t, \kappa_j^t \rangle = 0$ for all $i \neq j$ and for all $t$.
    \item 
    $r^{(t)} := \|\zeta_i^t\|^2 = \|\kappa_i^t\|^2$ for all $i$.
\end{enumerate}

\textbf{Base case ($t = 0$):}  
By initialization, $\zeta_i^0, \kappa_i^0 \sim \mathcal{N}(0, \tfrac{1 - c^2}{d} I_d)$ i.i.d., and $\zeta_i^0, \kappa_i^0 \perp \mathbf{z}$. In high dimensions, with probability $1$ these vectors are all orthogonal to each other therefore $\lambda^{(0)}=\alpha^{(0)}=0$ and $c^0$ is the same for all vectors by construction. And $\|\zeta_i^0\| = \|\kappa_i^0\| = 1-c^2$ by construction.
Hence, all four properties a,b,c,d hold at $t = 0$.

\textbf{Inductive step:}  
Assume the properties hold at time $t-1$. We now prove they also hold at time $t$.

We first prove it for the case that representations are normalized and for the backward InfoNCE loss, the proof for the forward InfoNCE loss is exactly the same due to the symmetry at initialization, which, as we will see later, is maintained through all the updates. The backward InfoNCE updates with normalization are given by:
\begin{align}
\hat{\mathbf{u}}_i^t &= \mathbf{u}_i^{t-1} + \eta \left( \mathbf{v}_i^{t-1} - \sum_j p_{ij} \mathbf{v}_j^{t-1} \right) \\
\mathbf{u}_i^t &= \frac{\hat{\mathbf{u}}_i^t}{\|\hat{\mathbf{u}}_i^t\|}, \qquad \text{similarly for } \mathbf{v}_i^t
\end{align}
where $p_{ij}^{(t-1)} = \frac{\exp(\langle \mathbf{u}_i^{t-1}, \mathbf{v}_j^{t-1} \rangle)}{\sum_k \exp(\langle \mathbf{u}_k^{t-1}, \mathbf{v}_j^{t-1} \rangle)}$.

Due to symmetry at time $t-1$, we have:
\[
p_{ij}^{(t-1)} = p_{ji}^{(t-1)}, \quad p_{ii}^{(t-1)} =: p^{(t-1)} \quad \text{for all } i, j
\]
so the matrix $P^{(t-1)}$ is symmetric, with equal diagonals and exchangeable off-diagonals. For ease of notation we use $p \coloneqq p^{(t-1)}_{ii}$ and $q \coloneqq p^{(t-1)}_{ij}$. Note that $p+(N-1)q=1$

\textbf{Property (a), (d): projection onto $\mathbf{z}$ and norm}  
From the updates and the fact that all $c^{(t-1)}$ are equal, we get:
\[
\left\langle \mathbf{v}_i^{t-1} - \sum_j p_{ij} \mathbf{v}_j^{t-1}, \mathbf{z} \right\rangle = c^{(t-1)} - \sum_j p_{ij} c^{(t-1)} = 0
\]
Thus,
\begin{equation}\label{eq:c-before-norm}
\langle \hat{\mathbf{u}}_i^t, \mathbf{z} \rangle = \langle \mathbf{u}_i^{t-1}, \mathbf{z} \rangle = c^{(t-1)}, \quad
\Rightarrow \quad \langle \mathbf{u}_i^t, \mathbf{z} \rangle = \frac{c^{(t-1)}}{\|\hat{\mathbf{u}}_i^t\|} =: c^{(t)}
\end{equation}
and the same holds for $\mathbf{v}_i^t$. In order to prove (a), we need to show that $\|\mathbf{u}_i\|$ and $\|\mathbf{v}_i\|$ are equal and invariant to the index for any $t$.
(For ease of notation we use $\alpha,  \lambda, r$ instead of $\alpha^{(t-1)}, \lambda^{(t-1)}, r^{(t-1)}$.)
\begin{align}
\left\| \hat{\mathbf{u}}_i^t \right\|^2 
&= \left\| c^{(t-1)} \mathbf{z} + \zeta_i^{t-1} + \eta \left( (1 - p) \kappa_i^{t-1} - \sum_{j \ne i} q \kappa_j^{t-1} \right) \right\|^2 \notag\\
&= \left( c^{(t-1)} \right)^2 
+ r + 2\eta(1 - p)\alpha 
- 2\eta(N - 1)q\;\lambda \notag\\&=
1 +2\eta(1-p) \cdot \alpha + \mathcal{O}(\eta^2) \label{eq:norm}
\end{align}

We used the fact that due to normalization $(c^{(t-1)})^2 + r=1$, we also used $\lambda = 0$.

It is straightforward to verify that the expression for $\left\| \hat{\mathbf{u}}_i^t \right\|^2$ is independent of the choice of index $i$. Furthermore, if we expand $\left\| \hat{\mathbf{v}}_i^t \right\|^2$, we encounter the same number of matched pairwise or cross-term inner products, with only the order of terms being swapped. As a result, we obtain the same expression.

We therefore denote this common quantity by
\[
L^{(t-1)} \coloneqq \left\| \hat{\mathbf{u}}_i^t \right\|^2 = \left\| \hat{\mathbf{v}}_i^t \right\|^2.
\]

Therefor, it follows from Equation \ref{eq:c-before-norm} that $\langle \mathbf{u}_i^t, \mathbf{z} \rangle$ is identical for all representations. Since all these representations share the same norm and identical projection onto $\mathbf{z}$ (i.e., the parallel component), it must be that their orthogonal components—namely, $\zeta_i^t$ and $\kappa_i^t$—also have equal norms. Hence, condition (d) is satisfied too, this also ends the proof for claim 1 of the theorem statement.

\textbf{Properties (b), (c): Matching and cross inner product}  
We expand:
\begin{align*}
\zeta_i^t &= \frac{1}{L^{(t-1)}} \left( \zeta_i^{t-1} + \eta \left( (1-p)\kappa_i^{t-1} - \sum_{j\neq i} q \kappa_j^{t-1} \right) \right) \\
\kappa_i^t &= \frac{1}{L^{(t-1)}} \left( \kappa_i^{t-1} + \eta \left( (1-p)\zeta_i^{t-1} - \sum_{j\neq i} q \zeta_j^{t-1} \right) \right)
\end{align*}

Then:
\begin{align*}
\alpha^{(t)} = \langle \zeta_i^t, \kappa_i^t \rangle &= \frac{1}{(L^{(t-1)})^2} \left[
\alpha + 2\eta(1 - p) r  - 2\eta (N-1) \lambda q  + \mathcal{O}(\eta^2)\right] \\&\quad=
\frac{1}{(L^{(t-1)})^2} [\alpha + 2\eta r (1-p)+ \mathcal{O}(\eta^2)]
\end{align*}

This expression is independent of the choice of index $i$; this proves statement (b). Similarly, we can evaluate $\langle \zeta_i^{(t)} , \kappa_j^{(t)}\rangle$ and $\langle \zeta_i^{(t)} , \zeta_j^{(t)}\rangle$ for $i \neq j$:
\begin{align*}
\zeta_i^t &= \frac{1}{L^{(t-1)}} \left( \zeta_i^{t-1} + \eta \left( (1-p)\kappa_i^{t-1} - \sum_{l\neq i} q \kappa_l^{t-1} \right) \right) \\
\zeta_j^t &= \frac{1}{L^{(t-1)}} \left( \zeta_j^{t-1} + \eta \left( (1-p)\kappa_j^{t-1} - \sum_{l\neq j} q \kappa_l^{t-1} \right) \right) \\
\kappa_j^t &= \frac{1}{L^{(t-1)}} \left( \kappa_j^{t-1} + \eta \left( (1-p)\zeta_j^{t-1} - \sum_{l\neq j} q \zeta_l^{t-1} \right) \right)
\end{align*}

\begin{align*}
\langle \zeta_i^t, \zeta_j^t \rangle &=
\frac{1}{(L^{(t-1)})^2} [\lambda + 2\eta (1-p)\lambda-2\eta q(N-2)\lambda -2\eta q\alpha + \mathcal{O}(\eta^2)] \\&\underset{N \text{ Large}}{=} \frac{1}{(L^{(t-1)})^2} [\lambda + 2\eta (1-p)\lambda-2\eta q(N-1)\lambda + \mathcal{O}(\eta^2)]
\\&\underset{p + (N-1)q=1}{=} \frac{1}{(L^{(t-1)})^2} [\lambda +  \mathcal{O}(\eta^2)]
\end{align*}

\begin{align*}
 \langle \zeta_i^t, \kappa_j^t \rangle &= 
\frac{1}{(L^{(t-1)})^2} [\lambda + 2\eta (1-p)\lambda-2\eta q(N-2) \lambda - 2\eta q r + \mathcal{O}(\eta^2)]
\notag\\&\underset{N \text{ Large}}{=} \frac{1}{(L^{(t-1)})^2} [\lambda + 2\eta (1-p)\lambda-2\eta q(N-1)\lambda + \mathcal{O}(\eta^2)]
\notag\\&\underset{p + (N-1)q=1}{=} \frac{1}{(L^{(t-1)})^2} [\lambda +  \mathcal{O}(\eta^2)]\label{eq:lambda-change}
\end{align*}

Since $N$ is large and $p, q \ge 0$ with $p + (N-1)q = 1$, it follows that $q \approx 0$. We observe that these inner products are also independent of the specific choice of $i$ and $j$, due to the same underlying symmetry. Moreover $\langle \zeta_i^{(t)} , \kappa_j^{(t)}\rangle$ and $\langle \zeta_i^{(t)} , \zeta_j^{(t)}\rangle$ are all zero given that $\lambda$ is zero (induction) hence (c) is also proved.

This completes the inductive step, i.e., the proof for a, b, c, d.

Now we assess how $\alpha^{(t)}$ and $r^{(t)}$ change to prove claims 2, 3 of the theorem statement.

\begin{align}    
\alpha^{(t)} &= \frac{1}{(L^{(t-1)})^2} [\alpha + 2\eta r(1-p)\cdot r+ \mathcal{O}(\eta^2)] \\&\approx
\frac{\alpha + 2\eta (1-p)\cdot r}{1 +2\eta(1-p) \cdot \alpha }\notag \\&=
\alpha + \frac{2\eta (1-p)r\cdot\big(1- \alpha \frac{\alpha}{r}\big)}{1 +2\eta(1-p) \cdot \alpha } \notag\\&\underset{\alpha \leq 1}{\geq}
\alpha + \frac{2\eta (1-p)r\cdot\big(1-  \frac{\alpha}{r}\big)}{1 +2\eta(1-p) \cdot \alpha }\label{eq:alpha-evolution}
 \\&{\geq}
\alpha + \frac{2\eta (1-p)r\cdot(1-\frac{\alpha}{r})}{1 +2\eta(1-p) \cdot \alpha } \\&\geq \alpha\notag
\end{align}

Since the denominator is positive and $1-\alpha/r \ge 0$. $\alpha$ only stops growing if the equality holds i.e., $\alpha/r = 1$,
\[
\frac{\langle \zeta_i^{(t-1)}, \kappa_i^{(t-1)} \rangle}{\|\zeta_i^{(t-1)}\|\,\|\kappa_i^{(t-1)}\|}
= 1
\quad\Longleftrightarrow\quad
\cos\!\big(\zeta_i^{(t-1)}, \kappa_i^{(t-1)}\big)=1,
\]
so as long as $\cos\!\big(\zeta_i^{(t-1)}, \kappa_i^{(t-1)}\big) < 1$, $\alpha^{(t)}$ increases, at equilibrium its growth should be stopped and that means the alignment of each positive pair increases strictly, and positive pairs keep aligning until they are fully aligned. This result, along with the fact that $\lambda^{(t)}=0,\,\forall t$, which we proved before, completes the proof for claim 2 of the theorem statement.

Finally, we prove Claim~3 of the theorem by showing that, at equilibrium, $r=1$, i.e., the squared norm of the component orthogonal to $\mathbf{z}$. Since each representation is normalized, this immediately implies that the parallel component must vanish.

\begin{align*}
r^{(t)} &= r + 2\eta(1-p)\alpha
\end{align*}
Since $\alpha^{(0)} = 0$ and, by Equation~\ref{eq:alpha-evolution}, $\alpha^{(t)}$ increases strictly (Note that due to norm 1 constraint on the representations $p$ can never approach 1) until full alignment, it follows that $\alpha^{(t)} > 0$ for all $t \geq 1$. Consequently,
\[
r^{(t)} > r^{(t-1)} \qquad \text{for all } t \geq 2.
\]
Thus, the sequence $\{r^{(t)}\}$ is strictly increasing until it saturates at the unit-norm bound.

\end{proof}

\subsection{Implication of Theorem~\ref{theorem:common-comp-dec} in function approximation setting}\label{app:theorem2-discussion-for-FA}
Theorem~\ref{theorem:common-comp-dec} shows that an agent trained with contrastive RL can rule out regions previously visited where the goal was not found, as indicated by low \psisim. At first glance, this mechanism might appear inefficient in continuous settings, where the state space is infinite. However, in the function approximation regime, we expect the critic network to generalize in two important ways: (1) states that are far from the goal in the underlying state space should also be represented as far from the goal in the embedding space, making them unattractive to explore; and (2) if a region is ruled out as not containing the goal (low \psisim), nearby states should likewise be ruled out. Thus, with function approximation, this mechanism extends naturally to infinitely many states. Indeed, as we have observed in previous work~\citep{liu2025a}, SGCRL is capable of solving long-horizon planning tasks in continuous environments. Also, we refer the reader to Appendix~\ref{app:RND} for two experiments that show SGCRL in continuous setting explores the environment efficiently and avoids non goal-unrelated states.

\section{Experimental Details}
\label{appendix:details}

\begin{table}[H]
    \centering
    \caption{SGCRL Hyperparameters.}
    \begin{tabular}{l|l} \toprule
        hyperparameter & value \\ \midrule
        \multicolumn{2}{l}{Standard SGCRL ~\citep{liu2025a}}\\ \hline
        batch size & 256 \\
        learning rate & 3e-4  \\
        discount & 0.99 \\
        actor target entropy & 0 \\
        hidden layers sizes (policy, critic) & (256, 256) \\
        initial random data collection & 10,000 transitions \\
        replay buffer size & 1e6 \\
        samples per insert$^1$ & 256\\
        representation dimension (dim($\phi(s, a)$), dim($\psi(s_g)$)) & 64 \\
        actor minimum std dev & 1e-6 \\ \hline
        \multicolumn{2}{l}{Tabular SGCRL}\\ \hline
        batch size & 128 \\
        learning rate & 1e-2 \\
        discount & 0.99 \\
        initial random data collection & False \\
        replay buffer size & 1e3 \\
        representation dimension (dim($\psi(s_g)$)) & 16 \\ \hline
        \multicolumn{2}{l}{ \scriptsize{$^1$ How many times is each transition used for training before being discarded.}}\\
        \bottomrule
    \end{tabular}
    \label{tab:hyperparameters}
\end{table}

\section{Additional Experiments}
\subsection{Tabular single goal exploration falls within a class of classical exploration algorithms}\label{app:comparison}

\begin{wrapfigure}[17]{r}{0.48\textwidth}
\vspace{-\baselineskip}
\centering
\includegraphics[width=\linewidth]{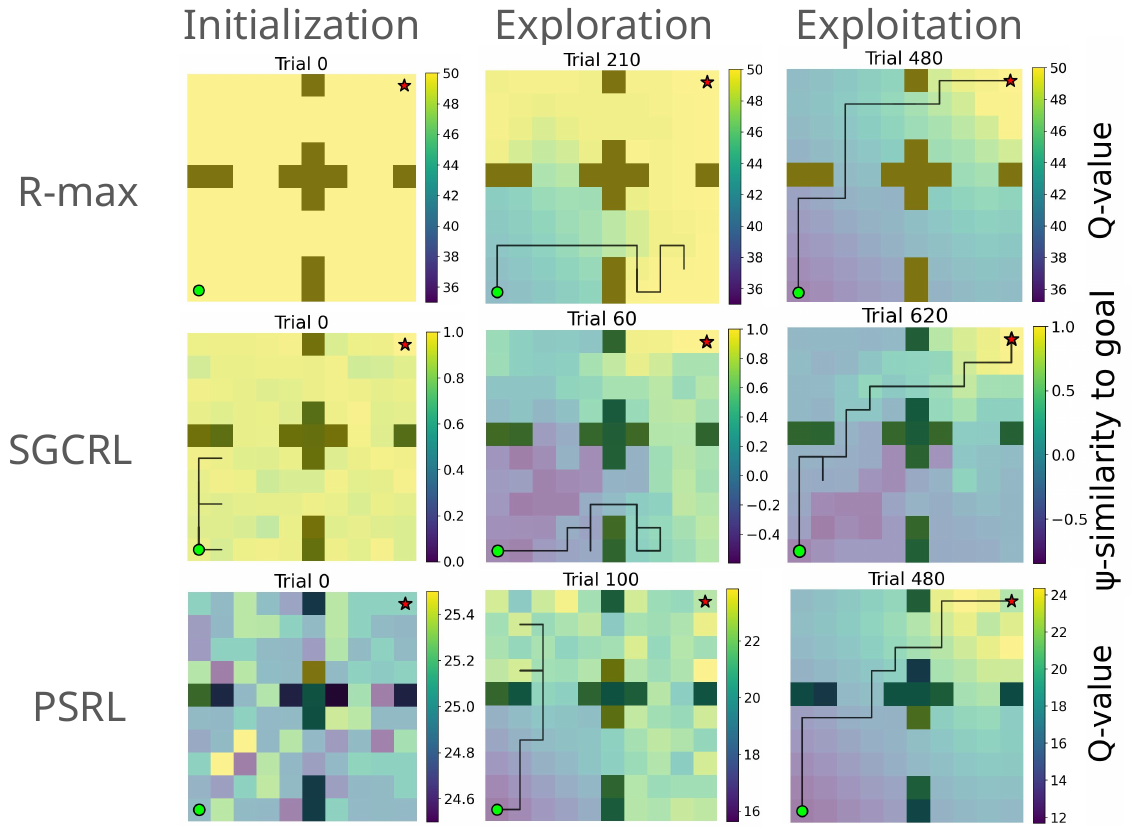}
\caption{SGCRL shares characteristics with R-max and PSRL}
\label{fig:tabular_algs}
\end{wrapfigure}
Through comparison with tabular exploration methods, we investigate whether SGCRL employs classical exploration strategies or explores in a unique way. Historically, statistically-efficient exploration algorithms broadly employ one of two strategies: optimism in the face of uncertainty~\citep{kearns2002near,brafman2002r,kakade2003sample,strehl2009reinforcement,jaksch2010near,jin2018q} or posterior sampling~\citep{osband2013more,osband2014model,abbasi2014bayesian,agrawal2017optimistic,osband2017posterior}. One representative delegate of each is R-MAX~\citep{brafman2002r} and Posterior Sampling for Reinforcement Learning (PSRL)~\citep{strens2000bayesian,osband2013more}, respectively. Curiously, instances in either camp may admit a unified regret analysis through the construction of confidence sets~\citep{russo2013eluder,osband2014model,lu2019information}, collections that hold the true ground-truth hypothesis with high probability and (by virtue of a good exploration strategy) can be shown to have shrinking widths as data accumulates.

Empirically, we observe that SGCRL operates according to this same unified perspective, maintaining an implicit collection of hypothesized goal states with optimistically-inflated values and progressively refining this confidence set through targeted exploration. The algorithm's goal-conditioned representation learning creates an initial landscape where many states appear promising (similar to the goal). By visiting these candidate states, the algorithm systematically winnows this set until it finds the true goal state (see Fig.~\ref{fig:tabular_algs}). Visualizing training checkpoints for all three algorithms, we find that, due to optimistic initialization, SGCRL progressively explores candidate states like R-MAX until it finds the goal, at which point it demonstrates the rapid exploitation of PSRL (upon identifying the true underlying environment).  

\subsection{SGCRL achieves modest success when trained with data collected by another algorithm}\label{app:yoking} 
In this experiment, we investigate the role of the actor data collection algorithm in SGCRL performance. We conduct yoked experiments in which we perform the SGCRL representation updates using data collected by another algorithm (e.g. PSRL, R-MAX) running independently in parallel. We also conduct a yoked control experiment in which SGCRL learns using the data collected by another independently initialized SGCRL agent running in parallel. We find that when we yoke SGCRL to PSRL or R-MAX, it learns to reach the goal, though not consistently (Fig.~\ref{fig:yoked_psrl},~\ref{fig:yoked_rmax}). When we yoke SGCRL to another SGCRL instance, both agents learn to consistently solve the task (Fig.~\ref{fig:yoked_sgcrl}). These results imply that SGCRL's can learn somewhat useful representations when trained on data collected by another algorithm, but still attains the best performance with single-goal data collection. Contrary to previous work in both cognitive science~\citep{,markant2014better,markant2010category} and deep RL~\citep{ostrovski2021difficulty}, the success of the SGCRL-SGCRL yoked experiment implies that online interaction with the environment is not necessary.

\begin{figure}[h]
\centering
  \begin{subfigure}[t]{0.3\textwidth}
    \includegraphics[width=\linewidth]{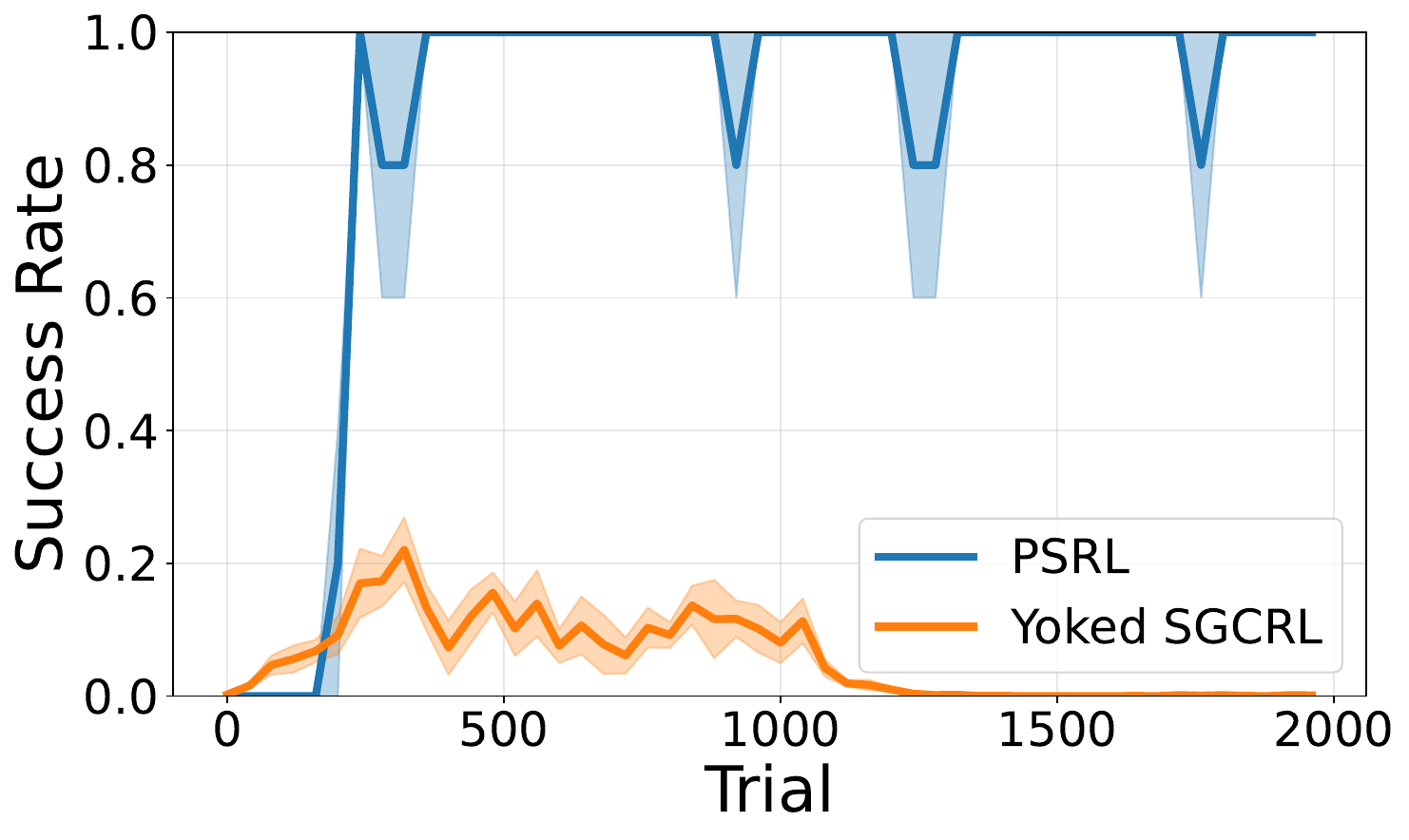}
    \caption{SGCRL yoked with PSRL}
    \label{fig:yoked_psrl}
  \end{subfigure}
  \begin{subfigure}[t]{0.3\textwidth}
    \includegraphics[width=\linewidth]{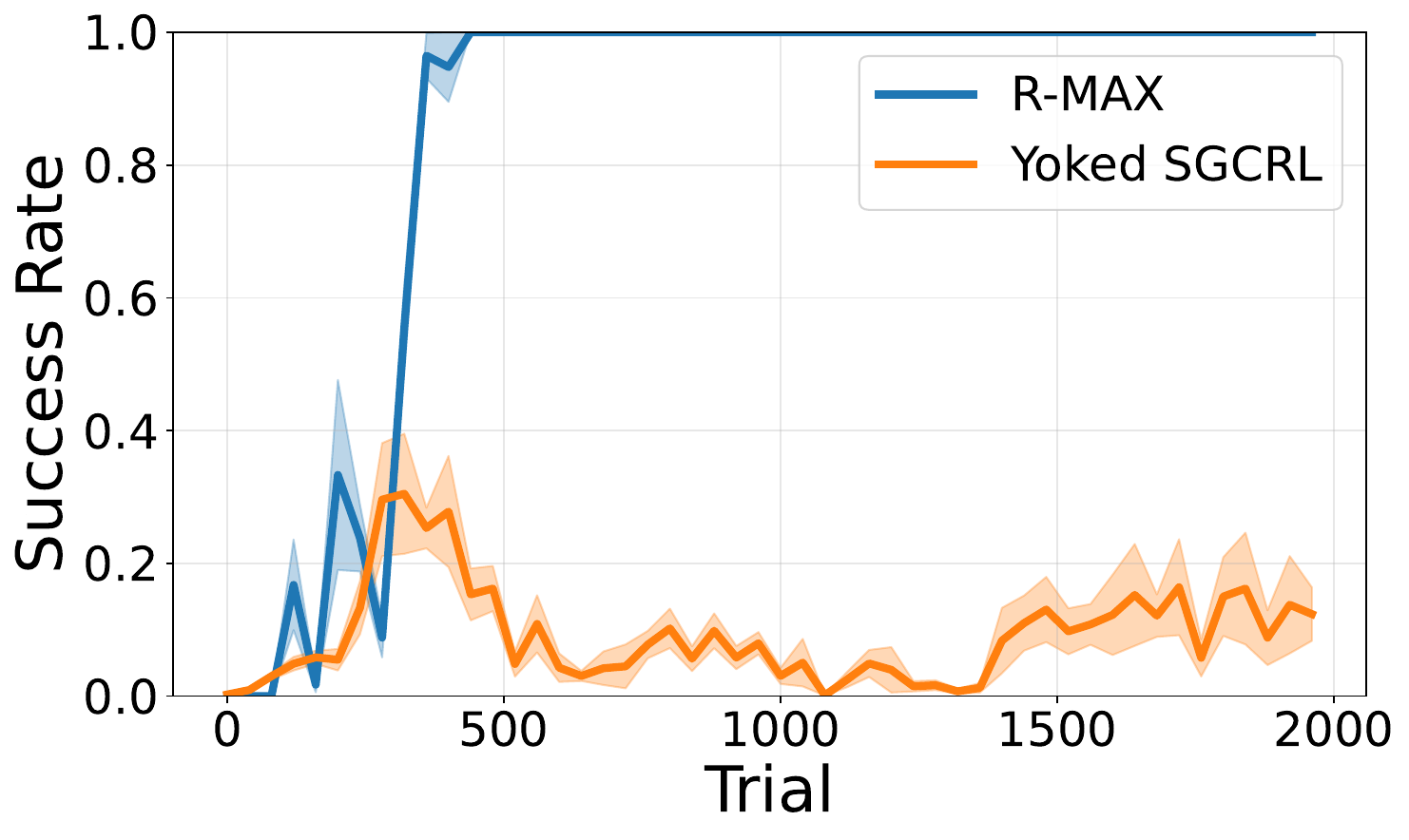}
    \caption{SGCRL yoked with R-MAX}
    \label{fig:yoked_rmax}
  \end{subfigure}
  \begin{subfigure}[t]{0.3\textwidth}
    \includegraphics[width=\linewidth]{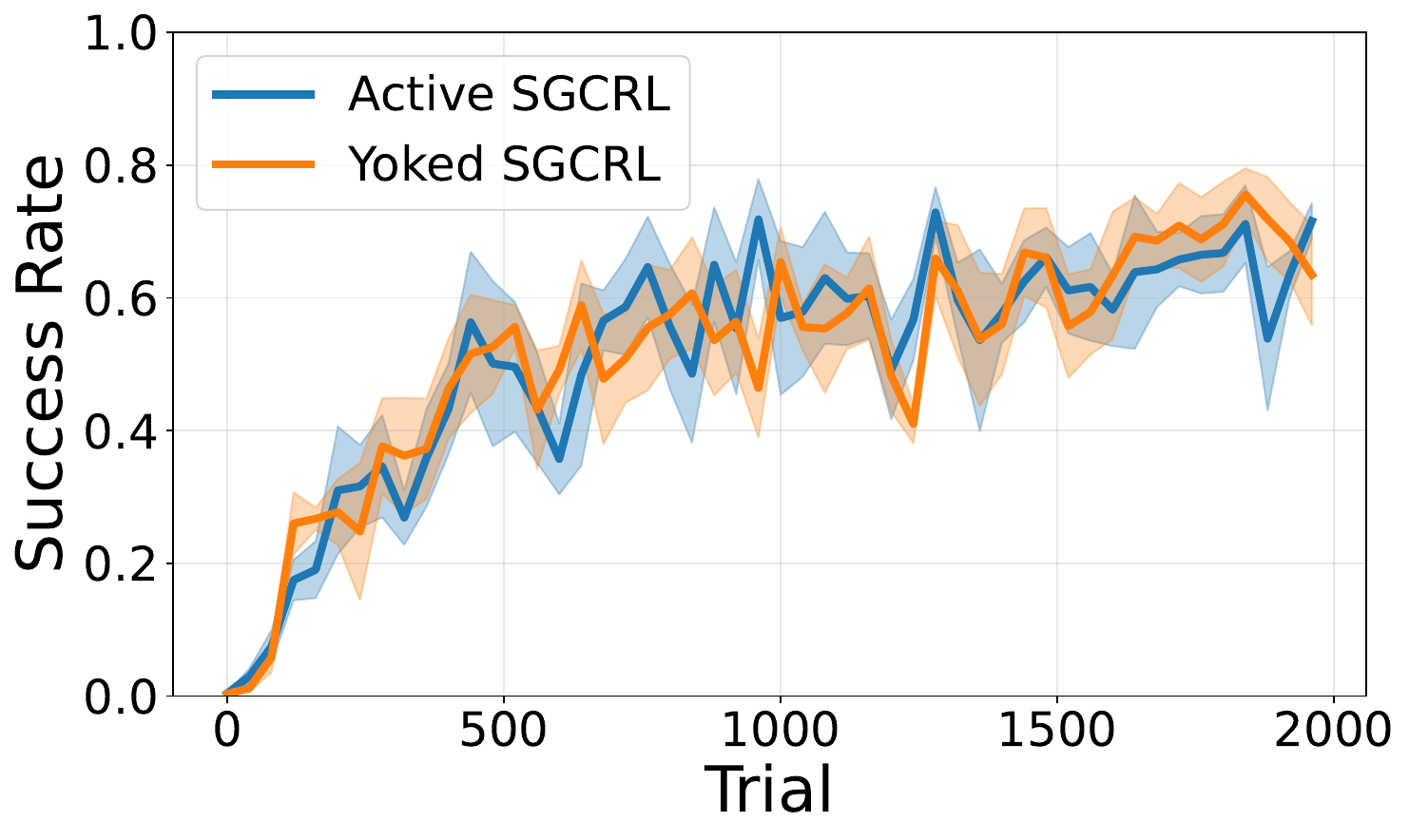}
    \caption{SGCRL yoked with SGCRL}
    \label{fig:yoked_sgcrl}
  \end{subfigure}
\caption{(a,b) SGCRL acheives modest success when trained on data collected by PSRL or R-MAX. (c) SGCRL succeeds consistently when trained on data collected by a different SGCRL initialization. Evaluation curves show the performance of the yoked SGCRL agent policy trained on data collected by another agent. 
}
\label{fig:yoked_exps}
\end{figure}

\subsection{SGCRL fails to explore without representations}\label{app:no_representations}
In this ablation experiment, we investigate whether representations are important for strategic exploration. We ablate the representations by implementing a version of tabular SGCRL without representations. Rather than parameterizing $\psi$ with a 16-dimensional vector for each state, we simply maintain an $|\mathcal{S}|\times|\mathcal{S}|$ table of scalar values for the \psisim of each state, goal pair. We learn the values in this table using the same InfoNCE updates and the rest of the method is exactly the same as tabular SGCRL. We find that by ablating vectorized representations (Fig.~\ref{fig:no_representations}), the agent fails to explore effectively, requiring on the order of 100x more samples to find the goal and converging to a low success rate. The results of this experiment imply that SGCRL representations, whether approximated by a neural network or a vector, are important for strategic and efficient exploration. 

\begin{figure}[h]
\centering
\includegraphics[width=0.6\linewidth]{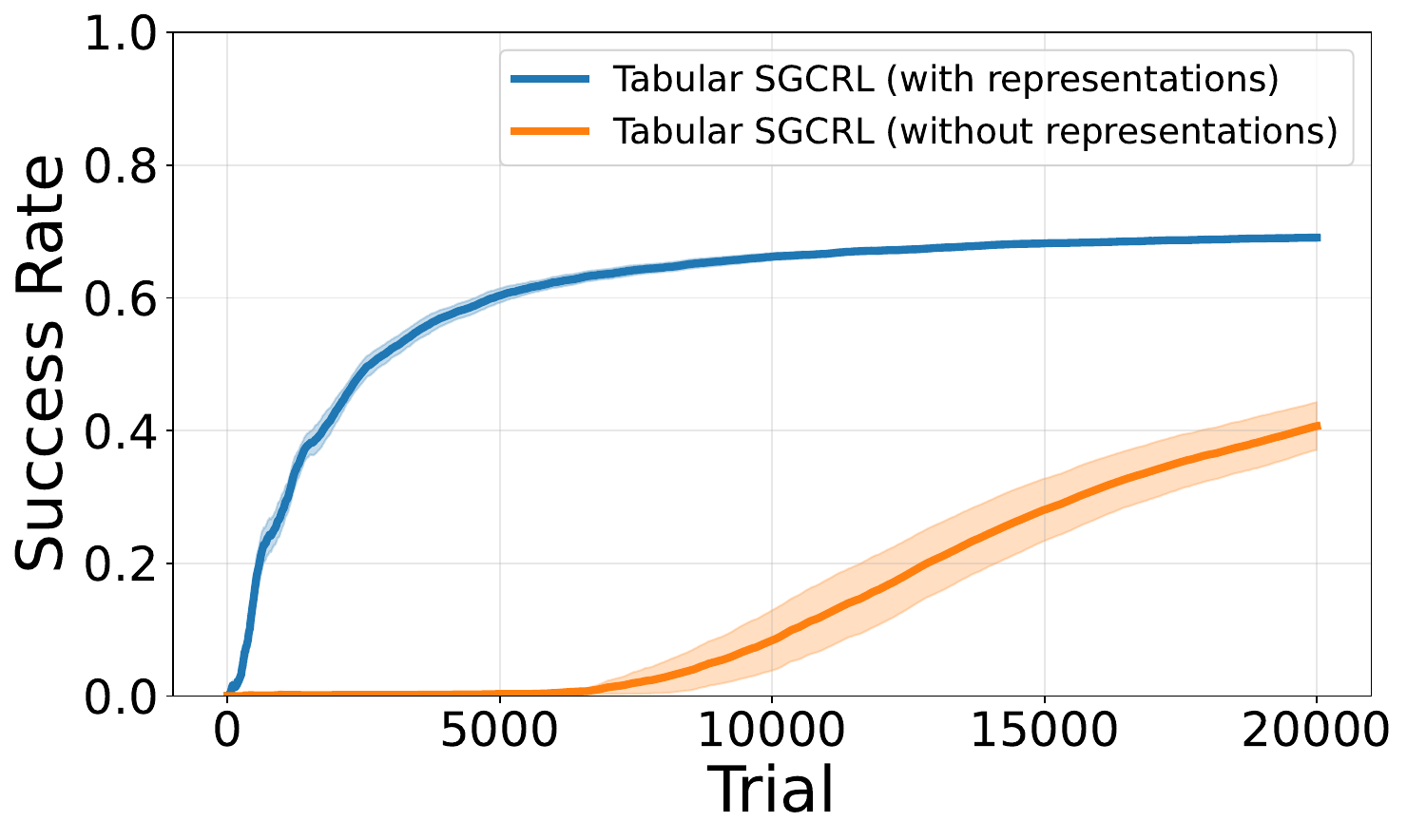}
\caption{Replacing the vectorized representations in tabular SGCRL with a lookup table results in slower exploration.}
\label{fig:no_representations}
\end{figure}

\subsection{Single-goal exploration in robotic manipulation tasks.} We find that the characterization of SGCRL detailed in Section~\ref{sec:theory} holds in a Sawyer robotic manipulation task where the agent must pick up a block and place it in a bin~\citep{yu2020meta}. During early stages of training, the agent moves the robotic end-effector towards regions of high representational similarity to the goal. Subsequently, the representational goal similarity of these frequently visited regions decreases, and the agent visits new regions (see Fig.~\ref{fig:sawyer_psi_map}). These observational results suggest that our characterization of single-goal exploration generalizes beyond 2D navigation tasks.

\begin{figure}[h]
\centering
\includegraphics[width=0.999\linewidth]{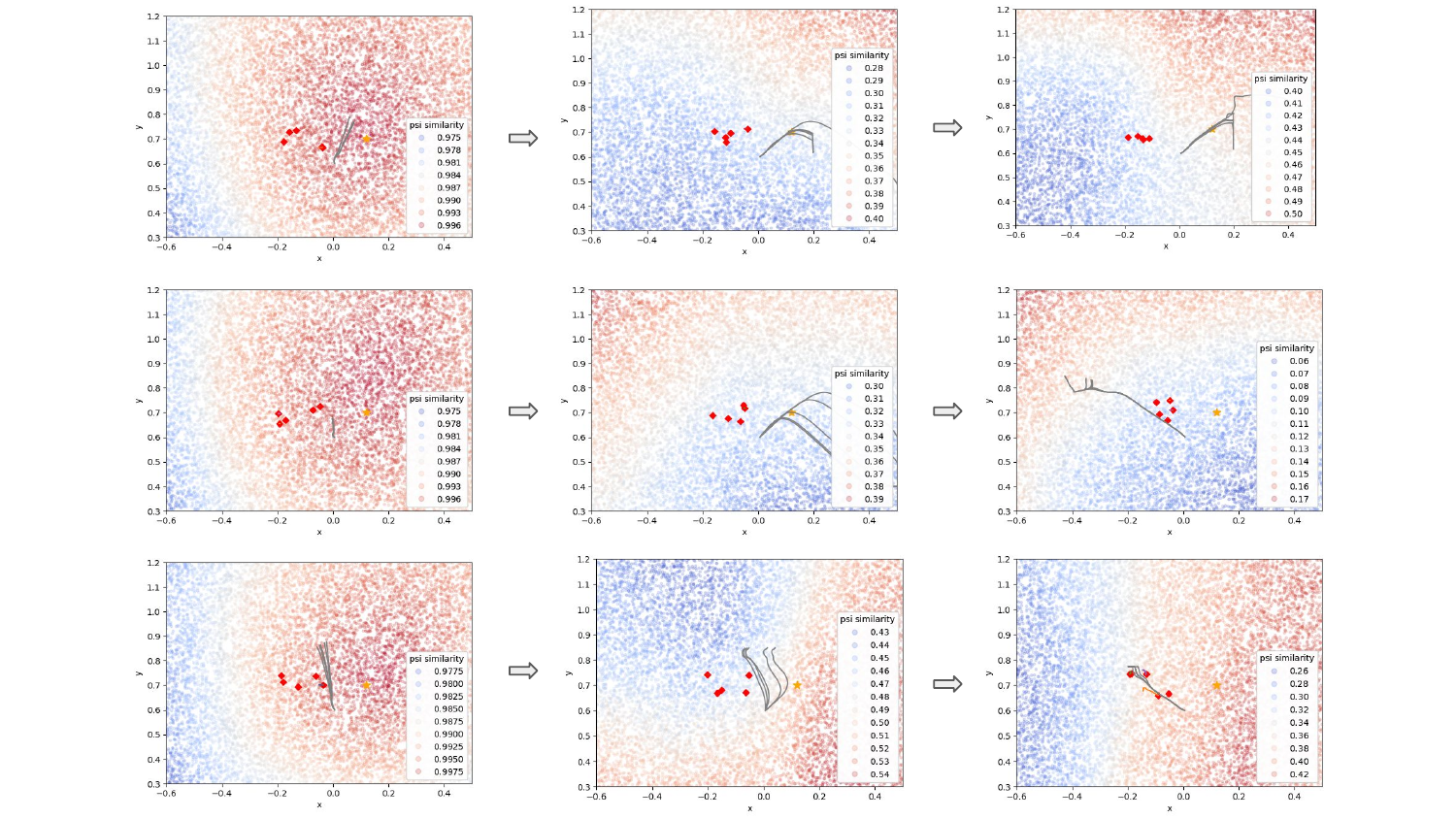}
\caption{{XY cross section of representational goal similarity for the Sawyer Bin environment}. Each row represents checkpoints throughout training for different training seeds. The gray lines show the trajectory of the end-effector across 5 episodes. The agent moves the end-effector towards regions of high \psisim to the goal, and then those regions subsequently develop low \psisim, driving continued exploration to new areas.}

\label{fig:sawyer_psi_map}
\end{figure}

\subsection{Single-Goal Data collection is Essential for Exploration-Encouraging Representations}\label{app:sgcrl-optimal}
In this section, we analyze the role of single-goal exploration in forming 
representations that promote exploration even before the goal is discovered. 
We also examine the behavior of a single-goal exploration agent when the goal 
is unreachable. Interestingly, the agent still engages in broad exploration, 
eventually covering the entire state space, and at convergence, it settles 
into a random walk across the maze since the goal is never found. To conduct experiments, we use the simplified tabular 
model of SGCRL without neural networks introduced in 
Section~\ref{sec:experiments}.

As we saw earlier, the central mechanism that enables SGCRL’s effectiveness is precisely its 
ability to drive the \psisim of non-goal regions toward zero. 
This ensures that previously visited goal-free states are ruled out, forcing 
the agent to focus on unexplored areas. Theorem~\ref{theorem:common-comp-dec} 
establishes this formally, showing that contrastive representations in SGCRL 
implement exactly this property.

In practice, however, the assumptions of Theorem~\ref{theorem:common-comp-dec} -- for instance, symmetric initialization -- do not necessarily hold. For instance, in the four-room environment, the agent begins in the bottom-left room and gradually visits new ones. Adding states from newly visited rooms to the replay buffer alters the updates of earlier rooms, breaking the symmetry assumption and eliminating the guarantee of orthogonality between room representations. In this section, we address two key questions:  
1) Does the phenomenon of decreasing \psisim persist in more realistic settings?  
2) If so, what are the underlying dynamics, and are they unique to SGCRL’s goal-directed data collection, or can other exploration strategies achieve the same effect?

Evolving the representations, as shown in Figure~\ref{fig:reprsntation-pca}, demonstrates that even in realistic scenarios the representations of visited rooms drift farther from the goal representation at \((0,0,1)\) on the \(z\)-axis. This behavior is intuitive: first, within a single room, the shared component aligned with \(\psi(g)=\mathbf{z}\) is essentially wasted energy for contrastive learning. To achieve a stronger contrastive loss among states in the same room, the \(\mathbf{z}\)-component of their representations is dampened (refer to Theorem ~\ref{theorem:common-comp-dec}). Second, as the agent explores more and begins visiting new rooms, the introduction of these new room representations pushes the older room representations even further away from the goal. This happens because newly visited rooms are initialized close to \(\mathbf{z}\), so in order to maintain contrast, older rooms are better off shifting downward and away from it.  

This observation naturally leads to the following question: 
\begin{itemize}
    \item \textbf{RQ1.} What happens if the representations of some areas in the new rooms are not initialized close to \(\mathbf{z}\)? Do we still see the decreasing \psisim trend of visited states which is essential for exploration?
    \item \textbf{RQ2.} Is single-goal exploration data collection necessary for \psisim reduction in visited states? What does the representation evolution look like if we instead use a multi-goal exploration data collection policy that collect data by sampling a new exploratory goal in the beginning of every episode? 
\end{itemize}
To explore this, we designed a new experiment. In this setting, the representations of a small patch in the top-left and bottom-right rooms is initialized orthogonal to \(\mathbf{z}\) (in order to address RQ1), while the rest of the state representations are initialized as \(\mathbf{z}\) plus small random noise (see Figure~\ref{fig:data-collection-exp-init}). The agent starts in the bottom-left room (marked by the green dot).

\begin{figure}[h]
    \centering
    \includegraphics[width=0.3\linewidth]{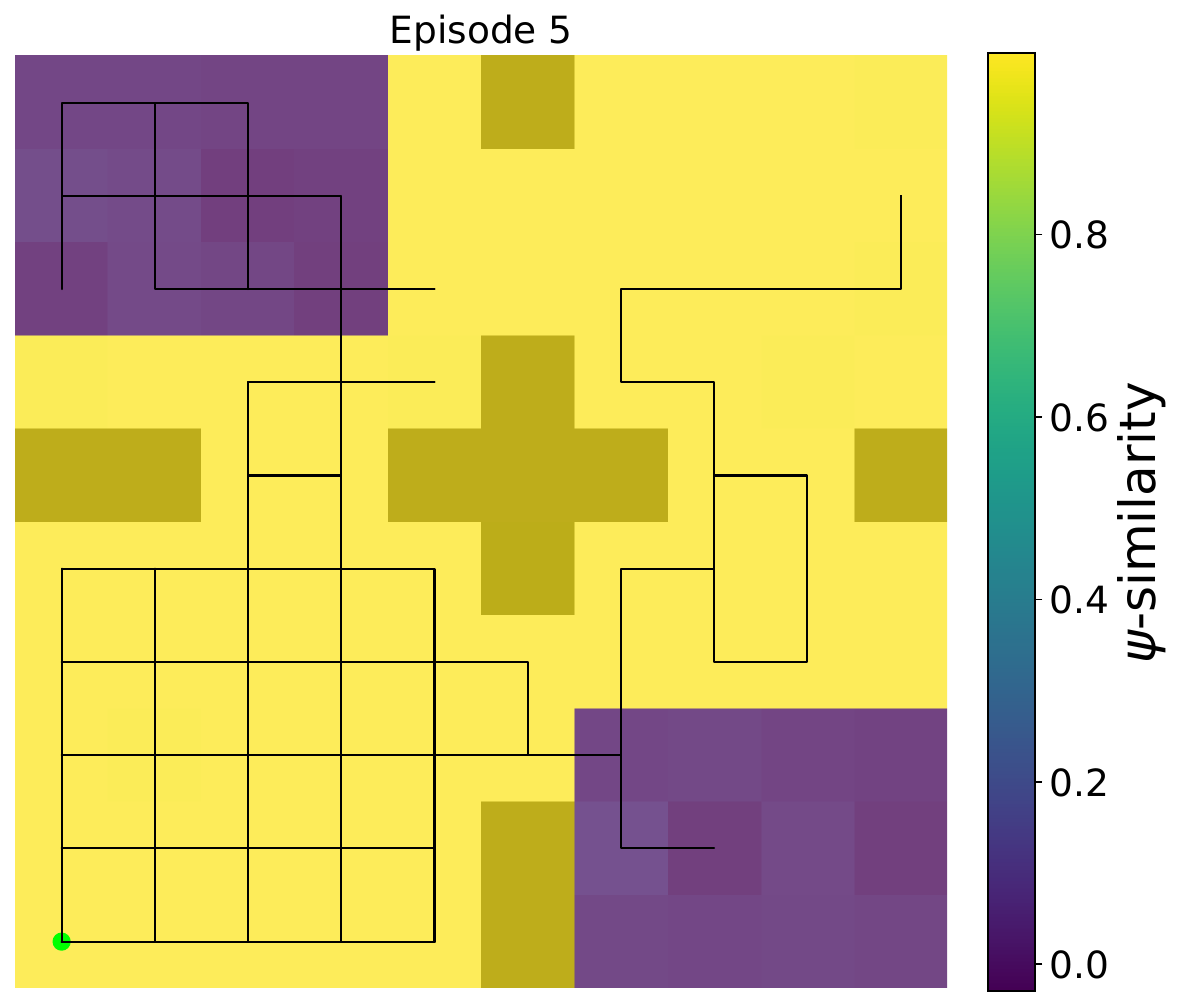}
    \caption{SGCRL data collection vs random goal data collection. At initialization, all state representations are a noisy version of the imaginary goal representation ($\mathbf{z}$) while the two small patches in the top left room and the bottom right room are initialized with an initialization orthogonal to $\mathbf{z}$.}
    \label{fig:data-collection-exp-init}
\end{figure}

Apart from representation initialization, the experimental setup is the same as the imaginary goal experiment of Section~\ref{par:imaginary-goal}. i.e., \(\psi(g)=\mathbf{z}\) is a random Gaussian vector that does not correspond to the representation of any actual maze state, simulating a scenario where many representation updates occur without the agent ever observing the true goal. This experimental setup also allows us to address another interesting 
side question:
\begin{itemize}
    \item\textbf{RQ3.} How does the agent behave during early training and after convergence when the goal is not feasible in the environment?
\end{itemize}

We start by answering RQ1 and RQ2 through comparing the following two data-collection strategies:

\begin{enumerate}
    \item \textbf{Single-goal exploration} – actions are selected according to the greedy policy:  
    \[
    a_t \sim \frac{1}{Z(s_t)} \exp\!\Bigg(\frac{1}{\tau}\psi(p(s_t,a_t))^\top \mathbf{z}\Bigg).
    \]  

    \item \textbf{Random-goal exploration} – actions are chosen based on:  
    \[
    a_t \sim \frac{1}{Z(s_t)} \exp\!\Bigg(\frac{1}{\tau}\psi(p(s_t,a_t))^\top \mathbf{y}_e\Bigg), \mathbf{y}_e \sim N(0,I_d)
    \]  
    Where $p(s_t,a_t)$ is the environment dynamic that outputs $s_{t+1}$; moreover
  \(\mathbf{y}_e\) is a Gaussian-sampled goal embedding drawn anew at each episode. This corresponds to the widely used convention in earlier works~\cite{andrychowicz2017hindsight, eysenbach2022contrastive} where exploration is guided by sampling from a distribution of potential goals to learn more effectively about the environment. 
\end{enumerate}

Both data collection strategies start from the same initial representations. We then compare how the representations evolve under these two different data collection strategies.

\begin{figure}[t]
    \centering
    \begin{subfigure}[t]{0.3\textwidth}
        \centering
        \includegraphics[width=\linewidth]{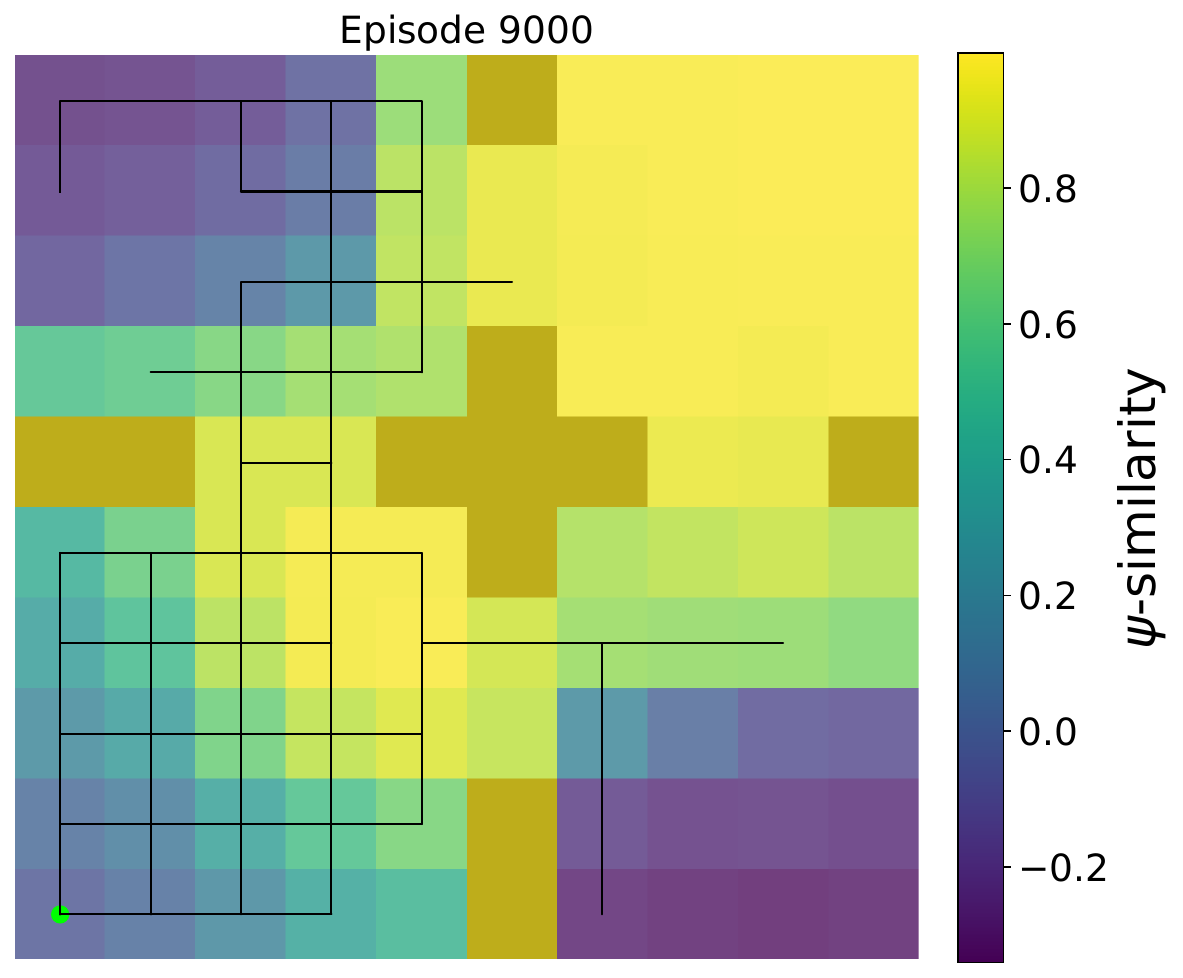}
        \caption{Random-goal (before convergence)}
        \label{fig:random-top-left}
    \end{subfigure}
    \hspace{1em}
    \begin{subfigure}[t]{0.3\textwidth}
        \centering
        \includegraphics[width=\linewidth]{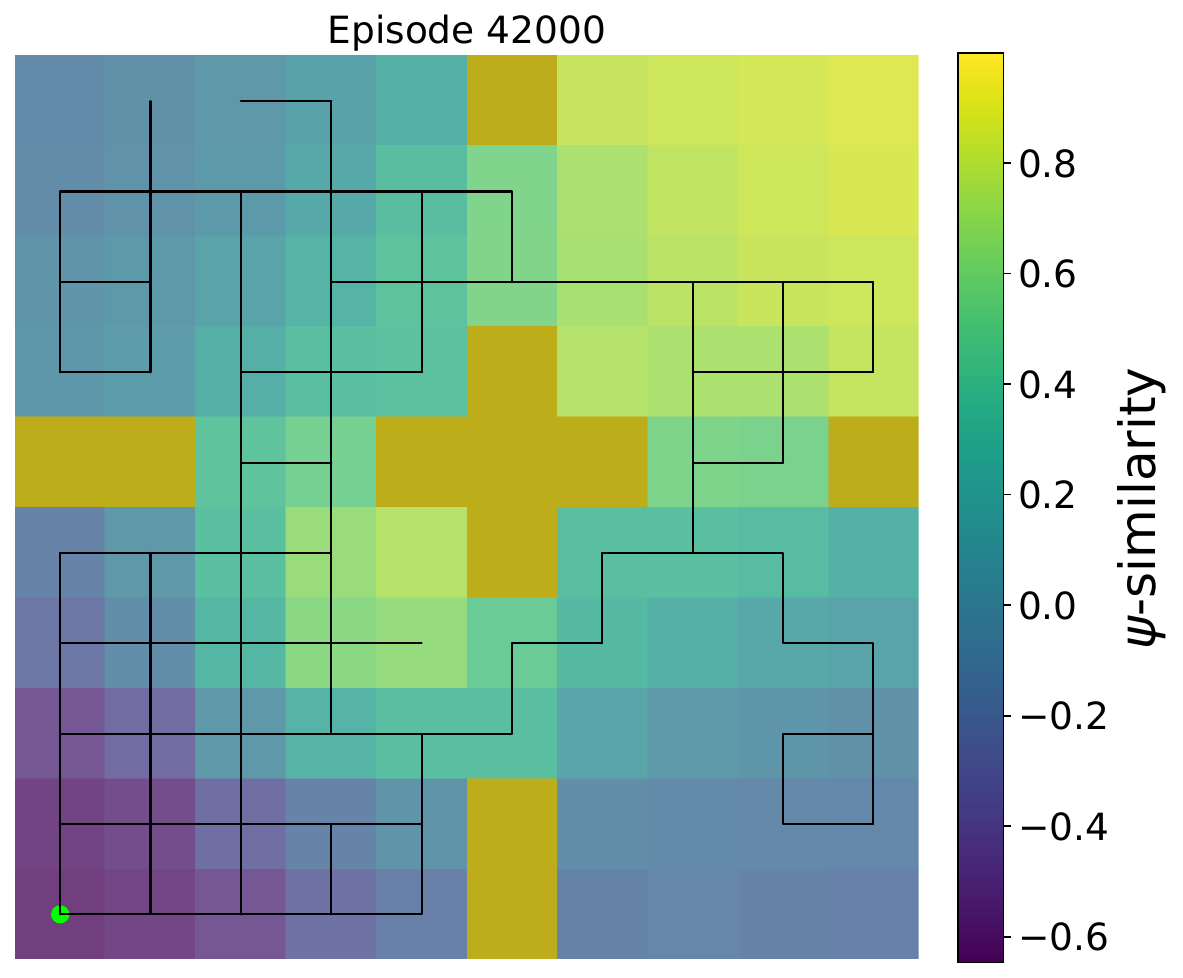}
        \caption{Random-goal (after convergence)}
        \label{fig:random-top-right}
    \end{subfigure}

    \vskip\baselineskip %

    \begin{subfigure}[t]{0.3\textwidth}
        \centering
        \includegraphics[width=\linewidth]{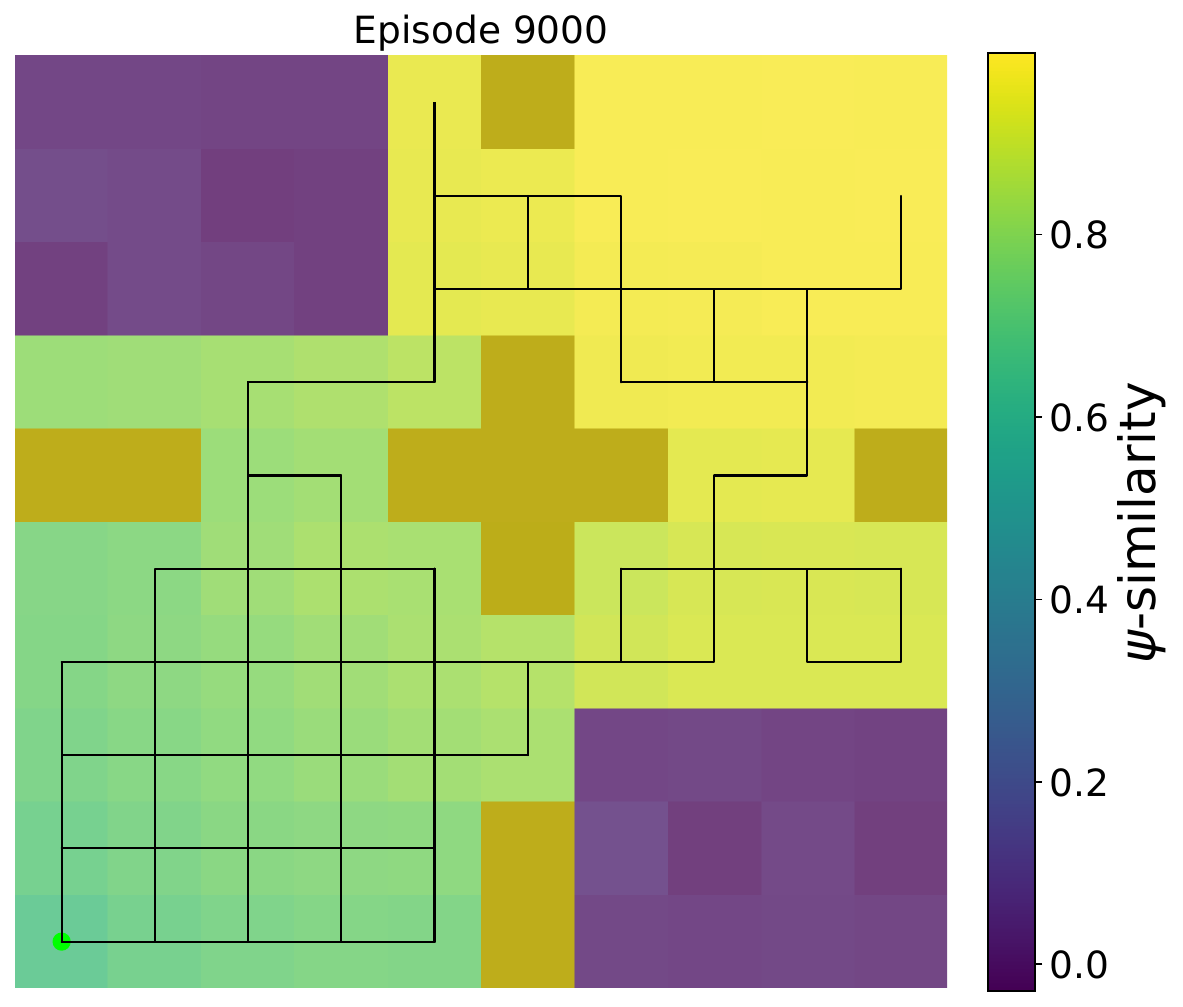}
        \caption{Single-goal (before convergence)}
        \label{fig:sgcrl-bottom-left}
    \end{subfigure}
    \hspace{1em}
    \begin{subfigure}[t]{0.3\textwidth}
        \centering
        \includegraphics[width=\linewidth]{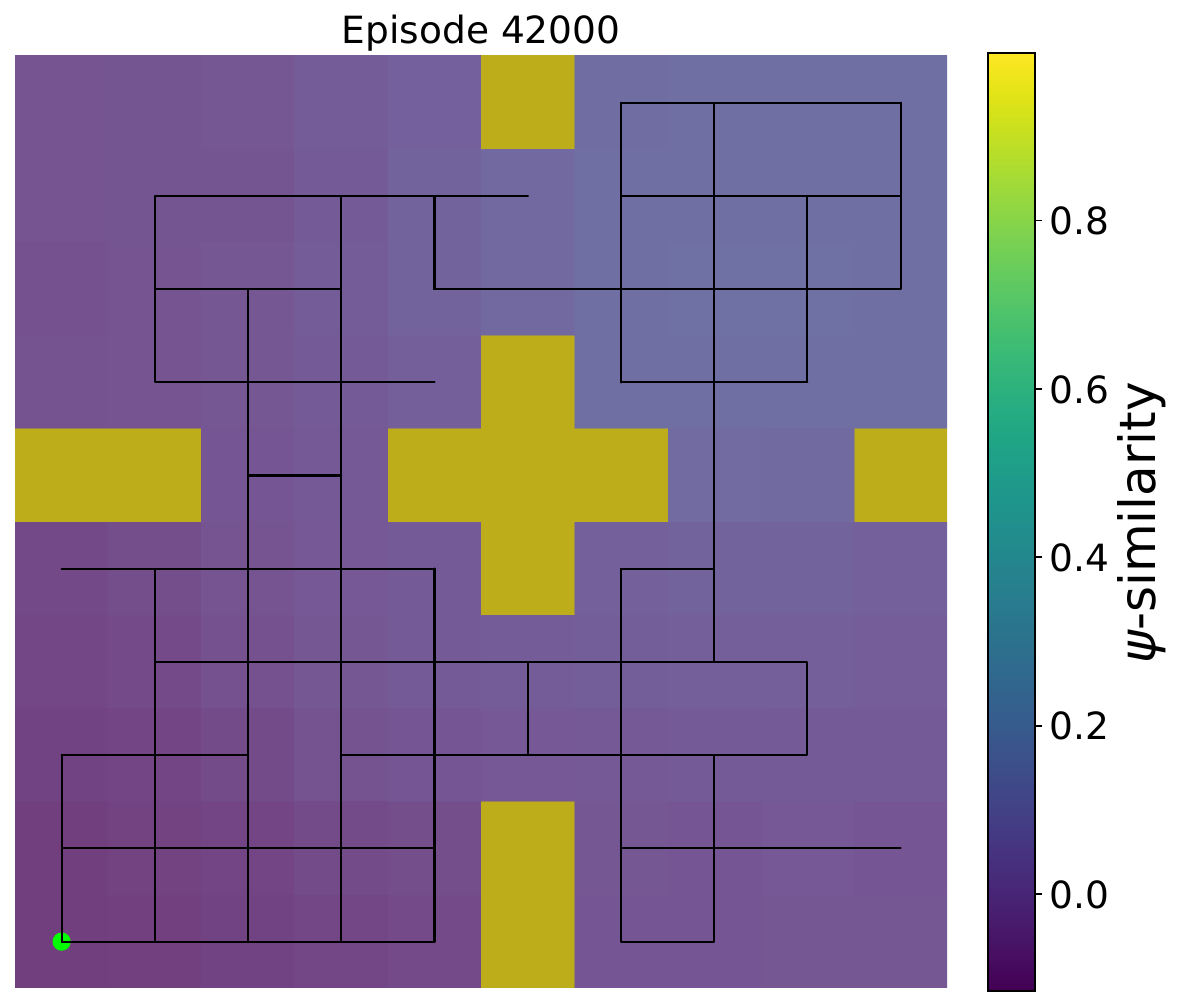}
        \caption{Single-goal (after convergence)}
        \label{fig:sgcrl-bottom-right}
    \end{subfigure}
    \caption{Comparison of representation evolution under two data collection strategies: 
    (Top row) random goal exploration and (Bottom row) SGCRL. Solid lines demonstrate the agent trajectory across 5 episodes.}
    \label{fig:data-collection-comparison}
\end{figure}

As demonstrated in Figure~\ref{fig:data-collection-comparison}, at episode 9000—before the representations have fully converged—the single-goal strategy naturally avoids the small dark patches. Because these patches have lower \psisim compared to their surroundings, the policy does not visit them. Instead, it focuses on areas with higher \psisim. As a result, the representations of the most frequently updated states consistently evolve by moving farther away from the goal, allowing them to form stronger contrasts with the newly visited states that are closer to the goal in representation.

In contrast, the random goal exploration strategy does not avoid the dark patches (Figure~\ref{fig:random-top-left}). By visiting these areas, it adds their states to the replay buffer, which in turn forces the older room representations to contrast with these new states. This dynamic prevents the older representations from drifting away from the goal. Indeed, even after convergence, the \psisim under random goal exploration fail to approach zero, as shown in Figure~\ref{fig:random-top-right}.  

Figure~\ref{fig:data-collection-comparison-pca} further visualizes this evolution by projecting the representations into three dimensions using PCA. The key takeaway is that single goal greedy data collection provides an implicit structural benefit: by consistently moving toward regions of higher \psisim, it pushes the representations of previously visited areas farther from the goal. In doing so, it shapes the representations of visited, non-goal regions in a way that naturally encourages further exploration.

\begin{figure}[!ht]
    \centering
    \begin{subfigure}[!t]{0.3\textwidth}
        \centering
        \includegraphics[width=\linewidth]{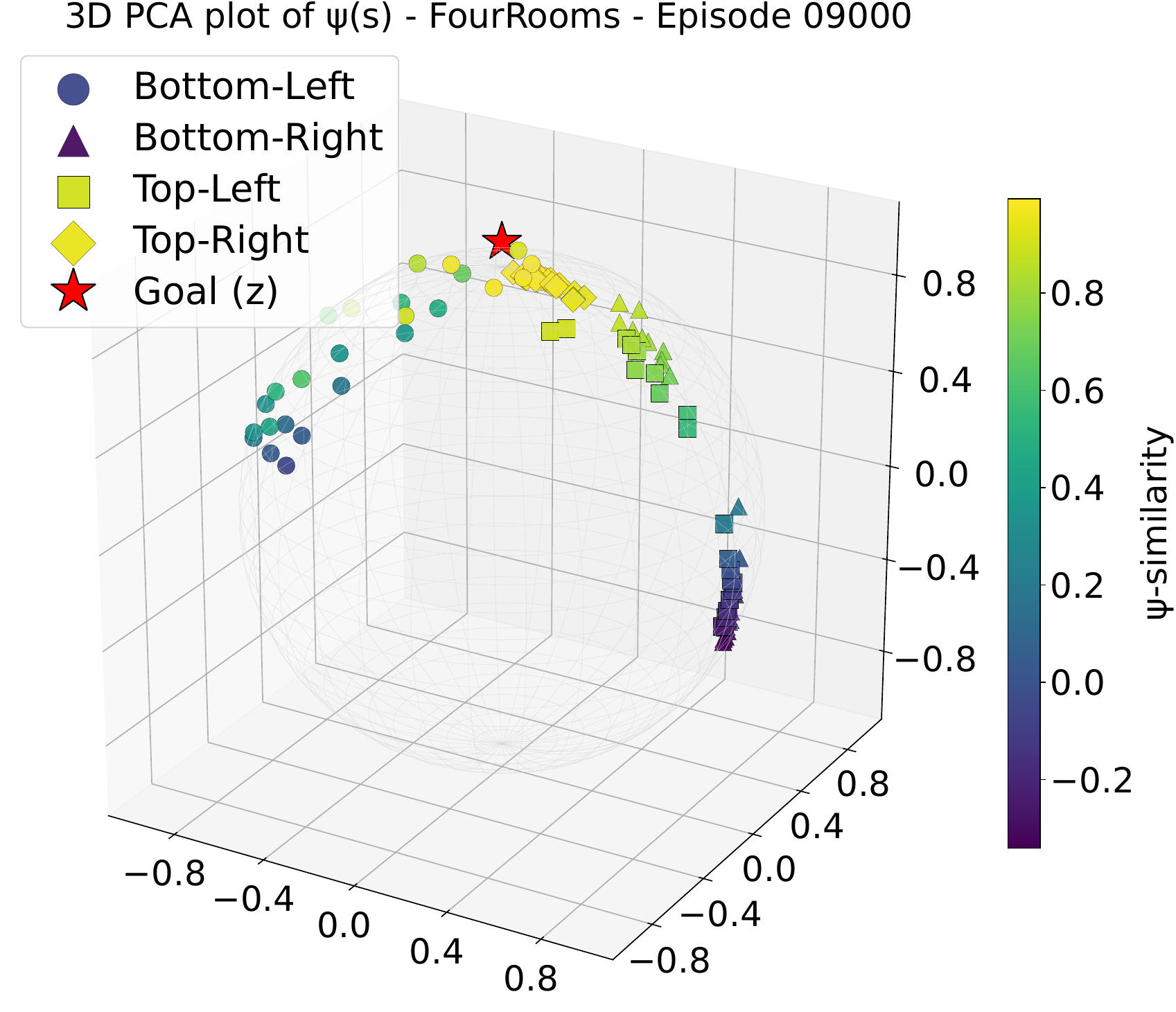}
        \caption{Random goal (before convergence)}
        \label{fig:random-top-left}
    \end{subfigure}
    \hspace{1em}
    \begin{subfigure}[!t]{0.3\textwidth}
        \centering
        \includegraphics[width=\linewidth]{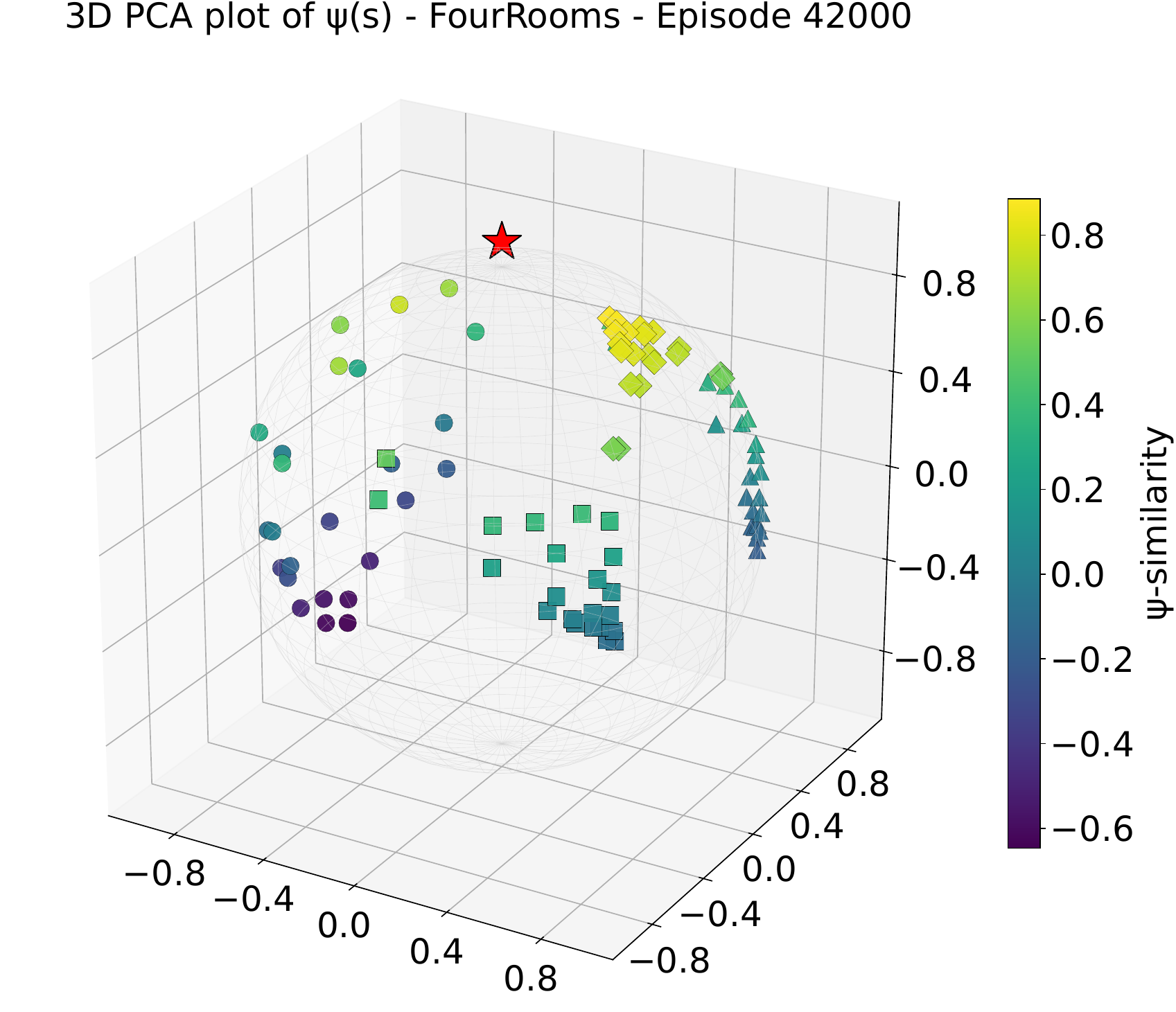}
        \caption{Random goal (after convergence)}
        \label{fig:random-top-right}
    \end{subfigure}

    \vskip\baselineskip %

    \begin{subfigure}[!t]{0.3\textwidth}
        \centering
        \includegraphics[width=\linewidth]{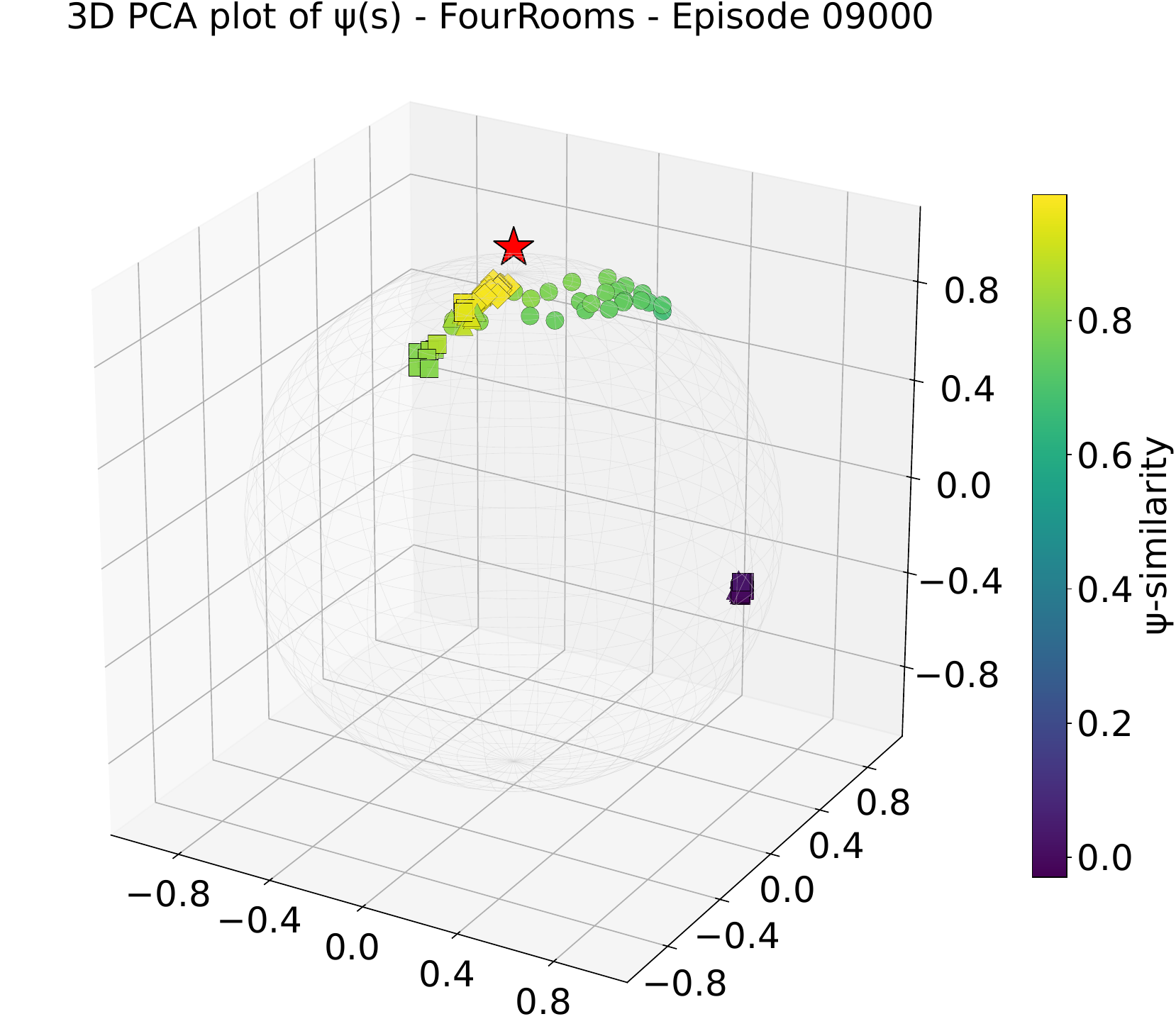}
        \caption{SGCRL (before convergence)}
        \label{fig:sgcrl-bottom-left}
    \end{subfigure}
    \hspace{1em}
    \begin{subfigure}[!t]{0.3\textwidth}
        \centering
        \includegraphics[width=\linewidth]{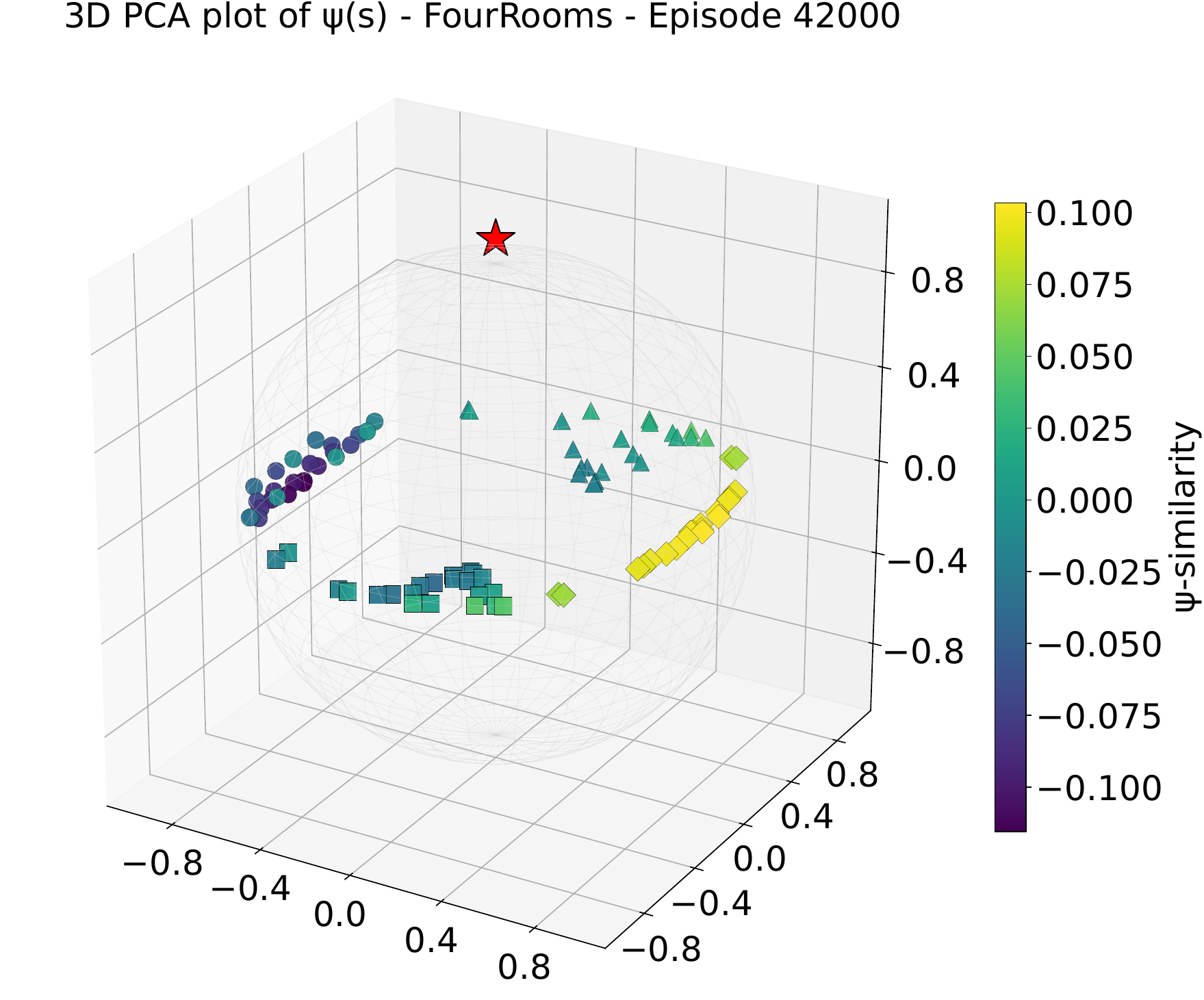}
        \caption{SGCRL (after convergence)}
        \label{fig:sgcrl-bottom-right}
    \end{subfigure}
    \caption{Comparison of representation evolution under two data collection strategies, with representations projected into 3D using PCA.
    (Top row) random goal exploration and (Bottom row) SGCRL.}
    \label{fig:data-collection-comparison-pca}
\end{figure}
\paragraph{Agent behavior when the goal is unreachable}
Finally, we address RQ3 by examining the agent’s behavior during both early 
and late training. The single-goal exploration agent is consistently drawn 
toward new states, but never toward the dark patch, since the 
representation of that region lies far from $\psi(g)$. As training progresses 
without ever reaching the goal, the representations of all states gradually 
become orthogonal to the goal, yielding a uniform \psisim of zero 
across the maze. In this regime, the agent performs an unstructured random 
walk indefinitely (Figure~\ref{fig:sgcrl-bottom-right}).
By contrast, the multi-goal exploration agent exhibits more diverse behavior, 
since changing goals occasionally directs it toward states that are initially 
far from $\psi(g)$. However, because it fails to uniformly reduce 
\psisim across the entire state space, it often becomes trapped in 
small regions that retain slightly higher similarity values (Figure~\ref{fig:random-top-right}).

\subsection{Single-Goal exploration equipped with Neural Networks avoids exhaustive state space search}\label{app:RND}
Our earlier analysis in Sections~\ref{sec:theory} and~\ref{sec:experiments} showed that single-goal exploration operates as an optimization process: it begins by treating every state as a potential goal and, through contrastive learning, progressively prunes away non-goal states. In Appendix~\ref{app:comparison}, we demonstrate the similarity of SGCRL to optimism-based and uncertainty-based methods such as R-MAX and PSRL.  

However, in the worst case, these classical tabular exploration methods require an exhaustive search over the entire state space $\mathcal{O}(\mathcal{S})$, which is infeasible in continuous settings. While this limitation holds for our simplified tabular SGCRL model without neural networks, we conjecture that the standard SGCRL algorithm with neural network approximation ($\psi(s) = F_\theta(s)$) is less susceptible to exhaustive search. Due to generalization, it is plausible that the reduction in $\psi$-similarity 
for visited states also extends to nearby states, thereby pruning not only 
the states actually visited but potentially additional states that would 
otherwise lead to unproductive exploration.

To test our conjecture, we designed two experiments that evaluate state coverage in the presence of distracting or irrelevant dimensions. The first setup introduces exogenous noise (noisy TV), while the second introduces a controllable but irrelevant dimension. Together, these experiments probe whether SGCRL avoids exhaustive search 
over parts of the state space that do not contribute to solving the task. 
Our results suggest that SGCRL focuses its exploration on task-relevant 
regions of the state space rather than expanding coverage indiscriminately. 
In contrast, the novelty-driven method PPO+RND, which incorporates an 
optimism-based novelty bonus, often expends effort exploring irrelevant 
states. This contrast highlights the practicality of SGCRL in continuous 
domains, where exhaustive search is infeasible and where other optimism-based 
approaches may struggle.

\paragraph{Baseline.} As a baseline, we consider a popular optimism-driven exploration method in continuous control: PPO combined with Random Network Distillation (PPO+RND) \citep{burda2018exploration,schulman2017proximal}. In addition to the goal-conditioned extrinsic reward $\mathds{1}(s = g)$, this method incorporates an intrinsic reward based on the mean squared error between the predictions of two networks, $A$ and $B$, where $B$ is slowly distilled into $A$. This prediction error serves as a measure of novelty, assigning higher rewards to states that have not been frequently visited. The agent is trained using PPO to maximize a weighted sum of the extrinsic reward (scale 2) and intrinsic reward (scale 1). We run this baseline for 300k environment steps. All results are averaged over at least 5 seeds of randomness.

\paragraph{Environment.} We consider a continuous $11 \times 11$ four-room maze. The agent starts in the top-left corner, and the goal is located in the bottom-right corner.

\subsubsection{ Noisy TV Experiment} 
\paragraph{Setup.}
Following prior work on exploration, we evaluate whether algorithms fall into the so-called \emph{noisy TV trap}, a scenario where an agent may be distracted by uncontrolled stochastic signals rather than making meaningful progress toward the goal \citep{burda2018exploration, pathak2017curiosity}. To simulate this phenomenon, we augment the bottom-left room of the four-room maze with an additional stochastic dimension $z$, sampled uniformly from $[0, 11]$ at every step. In all other rooms, the $z$-value is fixed at zero. This setup creates an exogenous source of noise that the agent cannot influence. The purpose of this experiment is to compare the state coverage achieved by PPO+RND and SGCRL in the presence of such uncontrollable noise.

\paragraph{Results.} We compare SGCRL and PPO+RND on task success and exploration behavior during training. In the base environment, SGCRL achieves a success rate of 98\%, while PPO+RND reaches 90\%. In the noisy-TV setting, success rate decreases by about 8\% for SGCRL and 11\% for PPO+RND, indicating that SGCRL is slightly more robust to the noisty-TV problem. However, task success alone does not reveal exploration efficiency. We additionally measure state coverage of both algorithms by discretizing the continuous maze into grid cells and computing the fraction of visited cells. Table~\ref{tab:coverage} reports results after 300k environment steps.

In the standard setting, which corresponds to a grid of 121 states, both methods achieve broad coverage (0.90 for SGCRL vs.\ 0.82 for PPO+RND). In the noisy-TV setting, where the grid expands to 371 states due to the added noisy dimension, a clear divergence emerges: PPO + RND attains much higher coverage (0.91) compared to SGCRL (0.36). This difference reflects the tendency of PPO + RND to overexplore irrelevant states. The noisy TV introduces additional dimensions with high intrinsic novelty, drawing PPO+RND to explore uninformative states. In contrast, SGCRL maintains relatively low coverage (0.36), indicating that the additional noisy-TV states do not cause it to overexplore uninformative states.

\begin{table}[t]
    \centering
    \caption{State coverage (grid) comparison between PPO+RND and SGCRL. 
    Lower coverage in the noisy-TV and irrelevant-dimension experiments indicates robustness to distraction and irrelevant state dimensions.}
    \label{tab:coverage}
    \vspace{0.2cm}
    \begin{tabular}{lccc}
        \toprule
        \textbf{Method} & \textbf{Noisy TV $\downarrow$} & \textbf{Irrelevant Dimension $\downarrow$} & \textbf{Four-Room (no noise)} \\
        \midrule
        PPO+RND & 0.91 & 0.65 & 0.82 \\
        SGCRL   & \textbf{0.36} & \textbf{0.40} & 0.90 \\
        \bottomrule
    \end{tabular}
\end{table}

Moreover, we plot the average fraction of each episode spent in the bottom-left room (the noisy-TV room) in Figure~\ref{fig:placeholder}, where episode lengths are normalized to 50 steps. The results show that PPO+RND spends roughly 2-4$\times$ more time in the noisy-TV room compared to SGCRL. This tendency persists even after the agent has successfully discovered the goal. These results further emphasize that SGCRL demonstrates robustness to irrelevant noise compared to PPO+RND. 

\begin{figure}
    \centering
    \includegraphics[width=0.4\linewidth]{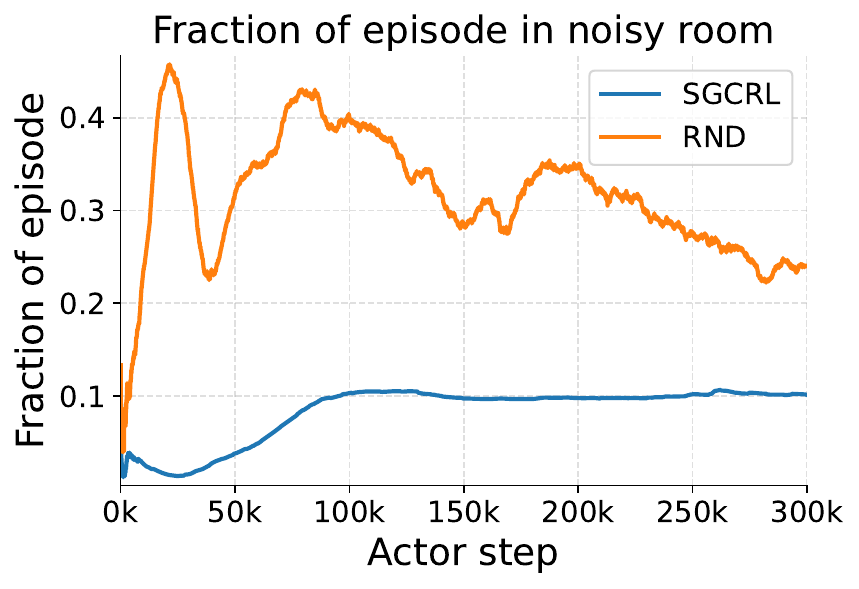}
    \caption{Average fraction of episode time spent in the noisy-TV room.}
    \label{fig:placeholder}
\end{figure}

\subsubsection{Irrelevant State Dimension Experiment}

\paragraph{Setup.}
In this experiment, we again extend the four-room maze with an additional $z$-axis. Unlike the noisy TV setting, the agent now has full control over this dimension and can move freely along $z$ without encountering walls. However, both the initial state and the goal are constrained to lie on the $z=0$ plane, meaning that optimal behavior does not require exploring the $z$-dimension at all. The purpose of this setup is to test whether SGCRL explores the irrelevant dimension ($z \neq 0$), or whether it efficiently focuses on states relevant to reaching the goal.  

\paragraph{Results.} 
The results are summarized in Table~\ref{tab:coverage}. In this setting, the state space contains 1331 grid cells. After 300k steps, SGCRL explores approximately 40\% of this space, while PPO+RND covers about 65\%. In absolute terms, PPO+RND visits roughly 332 more cells, which is a substantial difference given the size of the maze. These results again highlight that SGCRL avoids exploring dimensions of the state space that are irrelevant to the goal, in contrast to PPO+RND.

\subsection{SGCRL is capable of reaching multiple goals}\label{app:multi-goal}
To demonstrate that the SGCRL exploration mechanism extends beyond single-goal settings, we adapt the policy to handle multiple goals simultaneously. Specifically, we extend the SGCRL policy from targeting a single goal to optimizing over a distribution of goals, weighted by their relative desirability. To illustrate this, we present empirical results in the point-maze environment using a tabular setting without neural networks (the same experimental setup as in Section~\ref{sec:experiments}). These results show that the single-goal exploration framework naturally generalizes to more complex multi-goal tasks through algorithmic modifications, without requiring neural network architectures.

Formally, consider a task defined by a set of goals
\[
\mathcal{G} = \{g_1, g_2, \ldots, g_K\},
\]
Here, achieving goal $g_i$ yields reward $r_i$. We modify the action-selection mechanism so that the policy chooses action $a_t$ at state $s_t$ according to the distribution
\(
\text{Softmax}\!\left(\psi(p(s_t,a_t))^\top \psi_{\text{comb}}\right),
\)
where the combined goal embedding is defined as
\[
\psi_{\text{comb}} \coloneqq \frac{\sum_{i=1}^K r_i \, \psi(g_i)}{\|\sum_{i=1}^K r_i \, \psi(g_i)\|_2}.
\]
The transition model $p_{s_{t+1}} \gets p(s_t,a_t)$ represents the environment dynamics. In our setting, this model is provided to the agent; however, in practical applications it could also be learned. Our purpose here is not to propose the most practical solution, but rather to illustrate—within a simple, tabular model—the computational capabilities of the algorithm. This formulation enables the agent to act according to a reward-weighted combination of goals, rather than committing to a single target.

\begin{figure}[t]
    \centering
    \begin{subfigure}{0.32\textwidth}
        \centering
        \includegraphics[width=\linewidth]{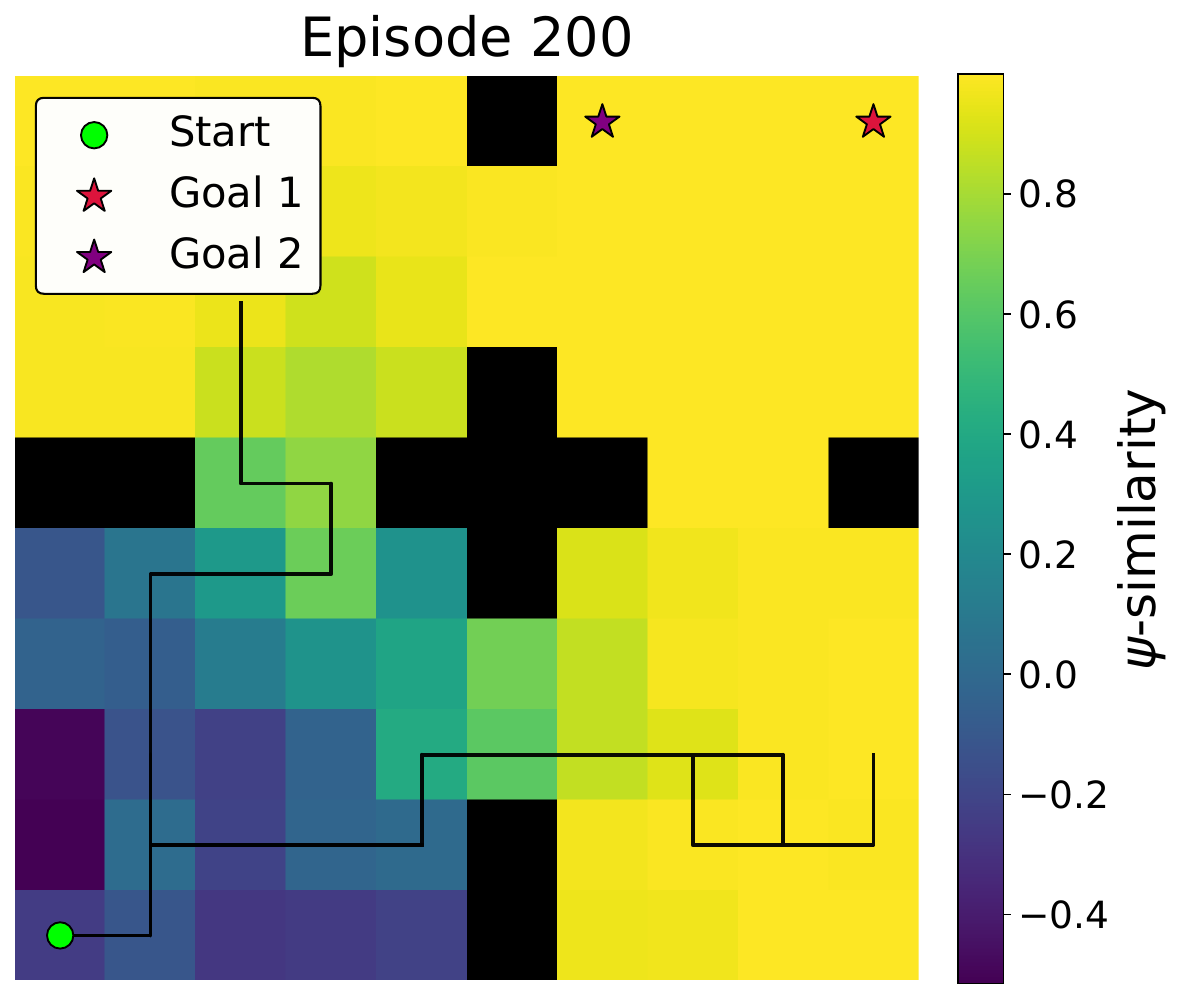}
        \caption{Initially, \psisim is high in most states.}
    \end{subfigure}
    \hfill
    \begin{subfigure}{0.32\textwidth}
        \centering
        \includegraphics[width=\linewidth]{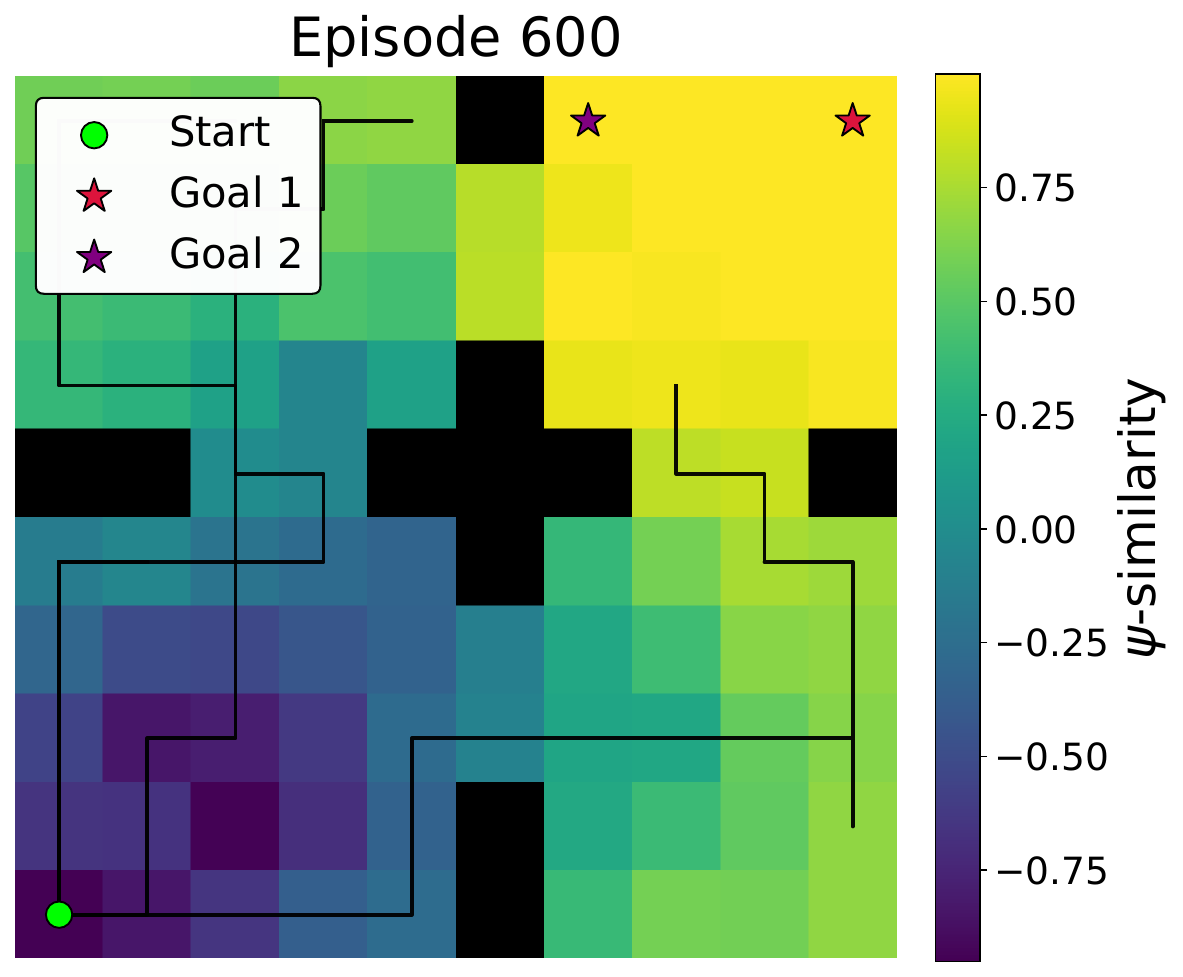}
        \caption{\psisim in explored states reduces.}
    \end{subfigure}
    \hfill
    \begin{subfigure}{0.32\textwidth}
        \centering
        \includegraphics[width=\linewidth]{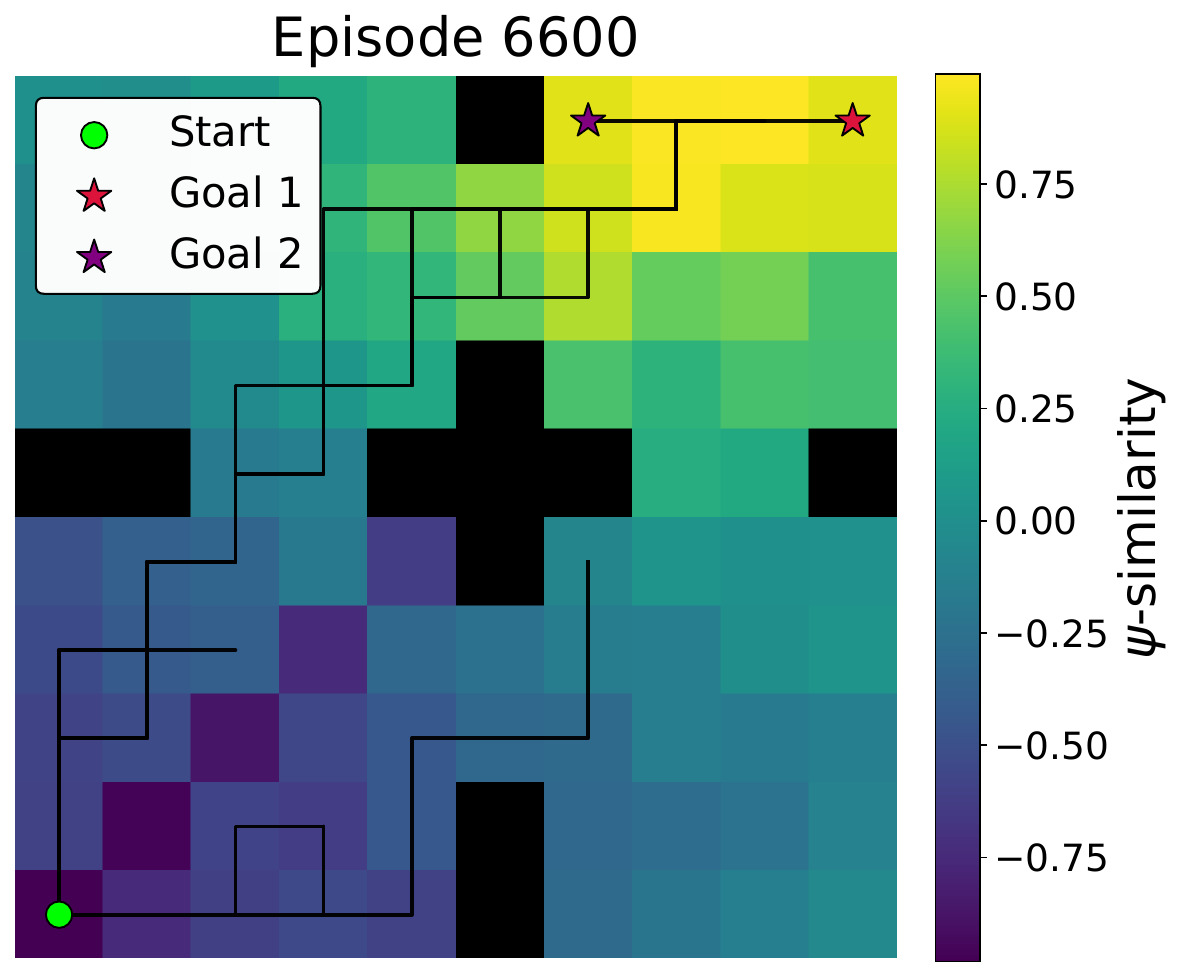}
        \caption{\psisim is high only in the goals and nearby states.}
    \end{subfigure}
    \caption{Representation evolution in the multi-goal task}
    \label{fig:multigoal-repr}
\end{figure}

We evaluate this algorithm in the four-room environment by selecting two goals located at the top-right sub-room, indicated by stars in Figure~\ref{fig:multigoal-repr}. We assign equal weights $r_1 = r_2 = 0.5$. All representations are initialized with a common component $\mathbf{x} + \boldsymbol{\varepsilon}(s)$, where $\boldsymbol{\varepsilon}(s)$ is state-dependent Gaussian noise. We visualize \psisim, i.e.,
\(
\psi(s)^\top \psi_{\text{comb}},
\)
as a heatmap throughout training in Figure~\ref{fig:multigoal-repr}. Similar to the single-goal setting, we observe that while all representations initially resemble each other (and align with the combined representation), as the agent explores the environment, they gradually diverge from the common component (reflected by lower \psisim values). Eventually, the \psisim values stabilize, remaining high in the vicinity of the two goals and low throughout the rest of the environment.

We also plot the average success rate of reaching goal 1, goal 2, and both goals per episode in Figure~\ref{fig:multigoal-reach}, where the shaded region denotes the standard error across 6 random seeds. The results show that the algorithm manages to reach both goals with nearly equal frequency, and in many episodes it successfully visits both by alternating between them. This experiment demonstrates that SGCRL is capable of handling multi-goal tasks and more complex behaviors, in addition to the simpler single-goal setting. It is worth noting that in the absence of two goals, when only the right hand side goal is chosen, the agent always takes a different path that doesn't pass through the left hand side goal; therefore visiting the left hand side goal in this experiment is not accidental.

\begin{figure}
    \centering
    \includegraphics[width=0.5\linewidth]{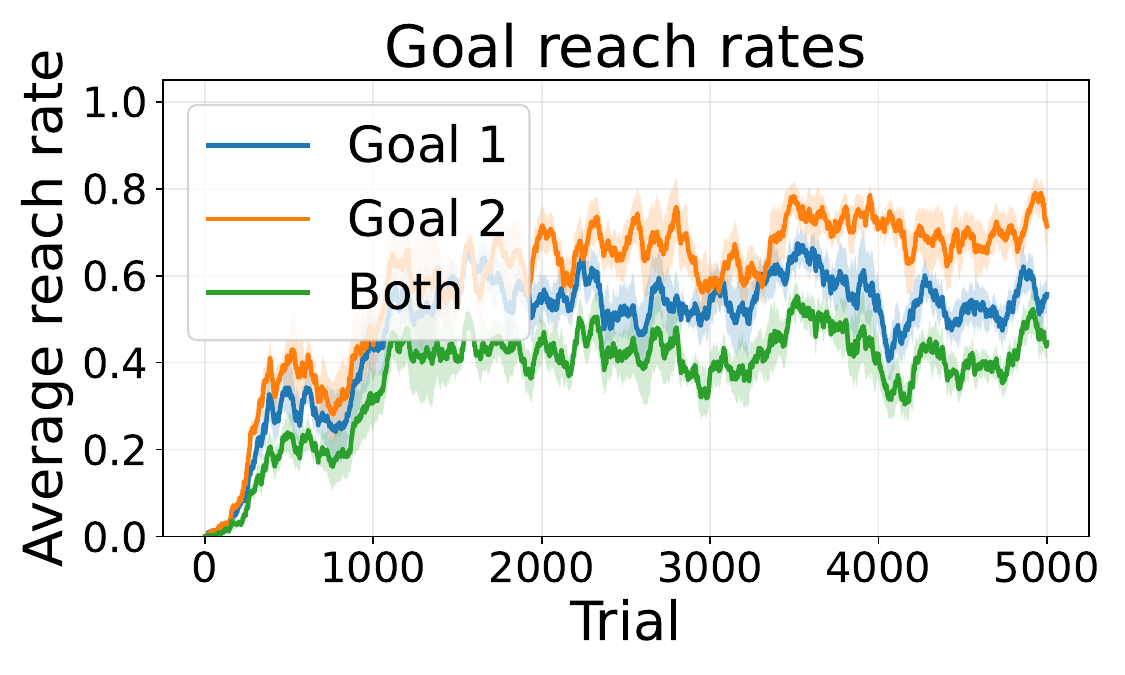}
    \caption{Multi-goal SGCRL maintains a high per-episode reach rate for both goals. Solid lines are agent trajectories}
    \label{fig:multigoal-reach}
\end{figure}

\subsection{Additional Safety Experiment Results}\label{app:safety-more-figs}

Here, we provide additional plots to demonstrate that the safety intervention experiments (Fig. ~\ref{subfig:safety}) hold when the intervention is conducted in either the top-right or bottom-left room of the FourRooms environment. 

\begin{figure}[!th]
    \centering
    \includegraphics[width=0.5\linewidth]{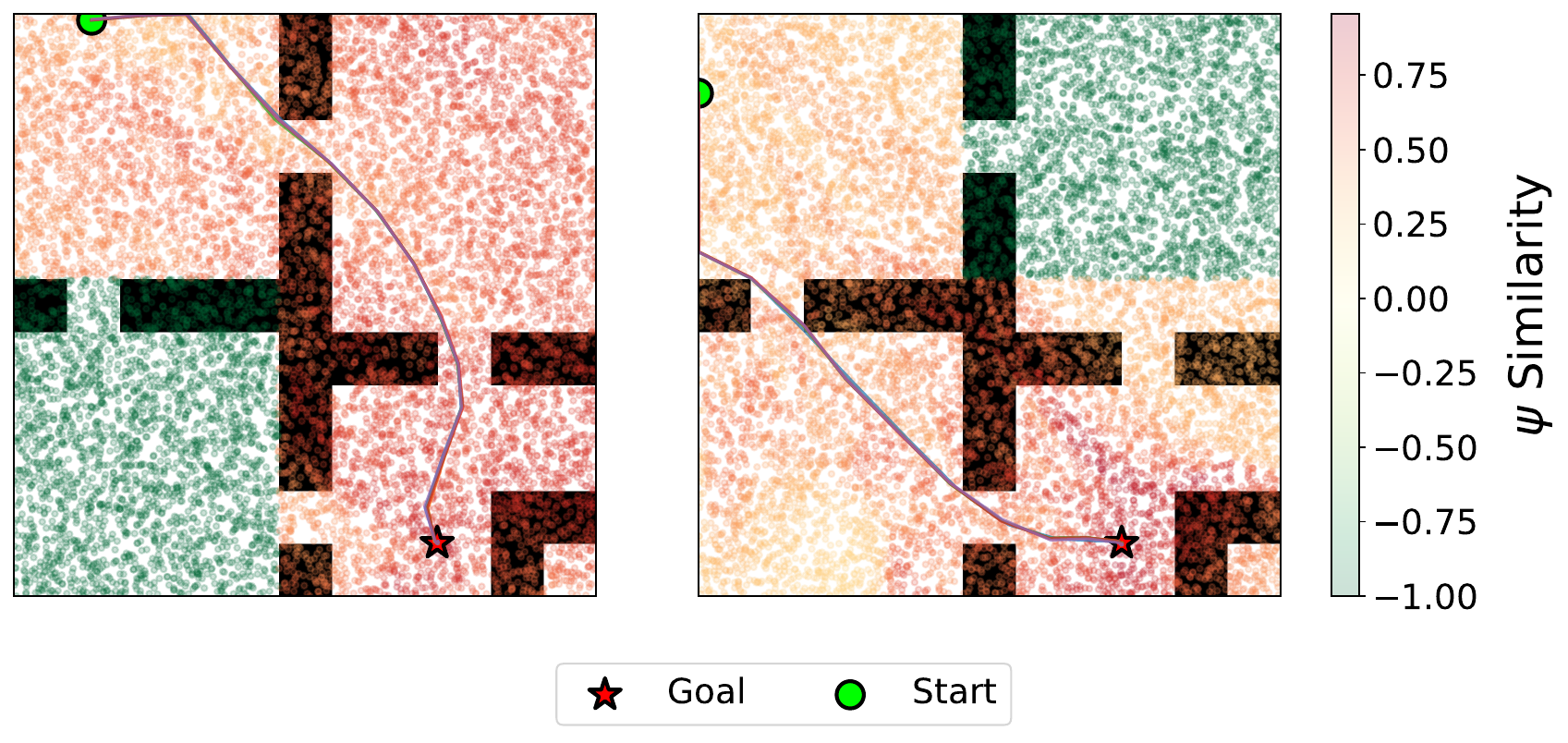}
    \caption{Modifying the representations to be dissimilar to the goal improves controllability of the agent's behavior.}
    \label{fig:placeholder}
\end{figure}

\section{The Use of Large Language Models}\label{app:llm-disclosure}
We used LLMs while implementing experiments to generate boilerplate code, debug errors, and plot results. We wrote the paper manuscript manually but used LLMs to help edit writing for clarity and grammar.
\clearpage

\end{document}